\newcommand{\mylabel}[2]{#2\def\@currentlabel{#2}\label{#1}}
\definecolor{dgreen}{rgb}{0,0.5,0}
\setlist{nosep}
\newif\ifneurips
\newif\ifshuffle
\newcommand{\nc}{\newcommand}
\nc{\DMO}{\DeclareMathOperator}
\nc\todo[1]{\textcolor{red}{[TODO: #1]}}
\nc{\st}{\star}
\nc\m[2]{m_{#1}(#2)}
\nc{\ReduceTree}{\texttt{ReduceTree}\xspace}
\nc{\PolyPriLearn}{\texttt{PolyPriLearn}\xspace}
\nc{\PPPLearn}{\texttt{PolyPriPropLearn}\xspace}
\nc{\GenericLearner}{\texttt{GenericLearner}\xspace}
\nc{\BR}{\mathbb{R}}
\nc{\BA}{\mathsf{A}}
\nc{\BM}{\mathbb{M}}
\nc{\BT}{\mathbb{T}}
\nc{\BN}{\mathbb{N}}
\nc{\BZ}{\mathbb{Z}}
\nc{\ep}{\varepsilon}
\DMO{\height}{ht}
\renewcommand{\epsilon}{\varepsilon}
\nc{\ra}{\rightarrow}
\DMO{\Err}{err}
\DMO{\Opt}{opt}
\DMO{\Est}{Est}
\DMO{\good}{good}
\DMO{\negpt}{neg-pt}
\DMO{\VV}{{V}}
\DMO{\LL}{{L}}
\DMO{\finsupp}{fin}
\DMO{\supp}{supp}
\nc{\fin}{{\finsupp}}
\nc{\err}[2]{\Err_{#1}(#2)}
\nc{\rca}{\mathscr{B}}
\nc{\bt}{b}
\nc{\hMLp}{\hat\ML'}
\nc{\Rprot}{R}
\nc{\Sprot}{S}
\nc{\Aprot}{A}
\nc{\Pprot}{P}
\nc{\Pdist}{D}
\nc{\Qdist}{F}
\nc{\pdist}{d}
\nc{\qdist}{f}
\nc{\DP}{differentially private\xspace}
\nc{\SD}{\mathscr{D}}
\nc{\la}{\lambda}
\DMO{\KL}{KL}
\DMO{\Unif}{Unif}
\nc{\nn}{\varnothing}
\DMO{\SOA}{SOA}
\nc{\soa}[2]{\SOA_{#1}(#2)}
\nc{\soaf}[1]{\SOA_{#1}}
\nc{\gRes}[2]{\hat\MG({#1},{#2})}
\nc{\emp}{\hat P_{S_n}}
\DMO{\Red}{red}
\DMO{\Irred}{irred}
\nc{\Ired}{I^{\Red}}
\nc{\Iirred}{I^{\Irred}}
\nc{\CS}{\mathsf{CS}}
\nc{\TV}{\mathrm{tv}}
\DMO{\ssmp}{ssmp}
\DMO{\agg}{agg}
\DMO{\final}{final}
\nc{\pp}{p}
\nc{\PP}{P}
\nc{\QQ}{Q}
\nc{\DD}{D}
\DMO{\RAPPOR}{{RAPPOR}}
\nc{\RAP}{\RAPPOR}
\DMO{\RR}{RR}
\nc{\MD}{\mathcal{D}}
\nc{\ML}{\mathcal{L}}
\nc{\di}{P}
\nc{\MO}{\mathcal{O}}
\nc{\MM}{\mathcal{M}}
\nc{\MZ}{\mathcal{Z}}
\nc{\MU}{\mathcal{U}}
\nc{\MP}{\mathcal{P}}
\nc{\MQ}{\mathcal{Q}}
\nc{\poly}{\mathrm{poly}}
\DMO{\treesum}{TreeSum}
\DMO{\lapsum}{LapSum}
\DMO{\checksum}{CheckSum}
\nc{\MDts}{\MD_{\treesum}}
\nc{\MDls}{\MD_{\lapsum}}
\nc{\MDcs}{\MD_{\checksum}}
\nc{\MC}{\mathcal{C}}
\nc{\MT}{\mathcal{T}}
\nc{\MS}{\mathcal{S}}
\nc{\MX}{\mathcal{X}}
\nc{\MY}{\mathcal{Y}}
\nc{\MA}{\mathcal{A}}
\nc{\MB}{\mathcal{B}}
\nc{\MJ}{\mathcal{J}}
\nc{\MF}{\mathcal{F}}
\nc{\MG}{\mathcal{G}}
\nc{\MR}{\mathcal{R}}
\nc{\p}{\Pr}
\nc{\E}{\mathbb{E}}
\nc{\tablesize}{s}
\DMO{\Hist}{hist}
\DMO{\Reg}{Reg}
\DMO{\prdim}{PRDim}
\nc{\eps}{\epsilon}
\nc{\hist}{\mathrm{hist}}
\nc{\ba}{\mathbf{a}}
\nc{\bx}{\mathbf{x}}
\nc{\bs}{\mathbf{s}}
\nc{\bv}{\mathbf{v}}
\nc{\bw}{\mathbf{w}}
\nc{\by}{\mathbf{y}}
\nc{\bz}{\mathbf{z}}
\nc{\ind}{\mathbf{1}}
\DMO{\sr}{sr}
\DMO{\Med}{Med}
\DMO{\Ber}{Ber}
\DMO{\Bin}{Bin}
\DMO{\Had}{Had}
\nc{\ME}{\mathcal{E}}
\DMO{\View}{View}
\nc{\B}{B}
\nc{\M}{M}
\nc{\ha}{\kappa}
\nc{\hk}{k}
\DMO{\pre}{pre}
\nc{\MH}{\mathcal{H}}
\DMO{\Ldim}{LDim}
\DMO{\SQdim}{SQDim}
\DMO{\Tdim}{Tdim}
\DMO{\sfat}{sfat}
\DMO{\fat}{fat}
\DMO{\vc}{VCdim}
\DMO{\FO}{FO}
\DMO{\CM}{CM}
\DMO{\NB}{NB}
\DMO{\hb}{\beta}
\nc{\MW}{\mathcal{W}}
\nc{\MV}{\mathcal{V}}
\nc{\MK}{\mathcal{K}}
\nc{\MN}{\mathcal{N}}
\nc{\BB}{\{0,1\}}
\nc{\bW}{\mathbf{W}}
\nc{\eell}{\ell}
\nc{\EELL}{L}
\nc{\q}{q}
\DMO{\size}{size}
\nc{\ts}{\tilde{s}}
\nc{\tc}{\tilde{c}}
\nc{\bone}{\mathbf{1}}
\DMO{\stat}{STAT}
\DMO{\TIME}{\mathsf{time}}
\nc{\hu}{\hat{u}}
\DeclareMathOperator*{\argmax}{arg\,max}
\newtheorem*{rep@theorem}{\rep@title}
\newcommand{\newreptheorem}[2]{%
\newenvironment{rep#1}[1]{%
 \def\rep@title{#2~\ref{##1}}%
 \begin{rep@theorem}}%
 {\end{rep@theorem}}}
\newtheorem{theorem}{Theorem} %[section]
\newtheorem{corollary}[theorem]{Corollary}
\newtheorem{lemma}[theorem]{Lemma}
\newtheorem{informal theorem}[theorem]{Informal Theorem}
\newtheorem{defn}[theorem]{Definition}
\newcommand{\badih}[1]{\ifnum\Comments=1\textcolor{red}{[Badih: #1]}\fi}
\newcommand{\pasin}[1]{\ifnum\Comments=1\textcolor{red}{[Pasin: #1]}\fi}
\newcommand{\ravi}[1]{\ifnum\Comments=1\textcolor{cyan}{[Ravi: #1]}\fi}
\title{User-Level Private Learning via Correlated Sampling}
\author{
Badih Ghazi
\hspace*{1cm}
Ravi Kumar
\hspace*{1cm}
Pasin Manurangsi \\
Google, Mountain View, CA. \\
\texttt{\footnotesize \{badihghazi, ravi.k53\}@gmail.com, pasin@google.com}
}
\date{\today}
\begin{document}

\maketitle

\begin{abstract}
Most works in learning with differential privacy (DP) have focused on the setting where each user has a single sample. In this work, we consider the setting where each user holds $m$ samples and the privacy protection is enforced at the level of each user's data.  We show that, in this setting, we may learn with a much fewer number of users. Specifically, we show that, as long as each user receives sufficiently many samples, we can learn any privately learnable class via an $(\eps, \delta)$-DP algorithm using only $O(\log(1/\delta)/\eps)$ users. For $\eps$-DP algorithms, we show that we can learn using only $O_{\eps}(d)$ users even in the local model, where $d$ is the probabilistic representation dimension. In both cases, we show a nearly-matching lower bound on the number of users required.

A crucial component of our results is a generalization of \emph{global stability}~\cite{BunLM20} that allows the use of public randomness. Under this relaxed notion, we employ a correlated sampling strategy to show that the global stability can be boosted to be arbitrarily close to one, at a polynomial expense in the number of samples. 
\end{abstract}

\section{Introduction}

Differential privacy (DP)~\cite{DworkMNS06, dwork2006our} has emerged as the accepted notion for quantifying the privacy of algorithms, whereby a method is considered private if the presence or absence of a single user has a negligible impact on its output. The two most widely-studied models of DP are the \emph{central} model, where an analyzer has access to the raw user data, and is required to output a private answer, and the \emph{local} model \cite{warner1965randomized, kasiviswanathan2011can, duchi2013local}, where the output of each user is required to be private.  DP has become a widely adopted standard in both industry \cite{erlingsson2014rappor,CNET2014Google, greenberg2016apple,dp2017learning, ding2017collecting} and government agencies, including the recent 2020 US Census \cite{abowd2018us}. For a technical overview of DP, see the monographs by Dwork and Roth~\cite{DworkR14} and by Vadhan~\cite{Vadhan17}.  

DP has gained spotlight in machine learning (e.g.,~\cite{chaudhuri2011differentially,abadi2016deep}), with an increased emphasis on protecting the privacy of user data used for training models.  In the traditional notion of DP, the goal is to protect the privacy of each training example, where it is  assumed that each user contributed precisely one such example; this is sometimes referred to as \emph{item-level} privacy.  However, a more realistic and practical setting is where a single user can contribute more than one training example.   Here, the goal would be so-called \emph{user-level} privacy, i.e., protecting the privacy of \emph{all} the training examples contributed by a single user.  This is especially relevant for federated learning settings, where each user can contribute multiple training examples~\cite{mcmahan2017learning, wang2019beyond, augenstein2019generative, epasto2020smoothly}; see the survey by Kairouz et al.~\cite[Section 4.3.2]{kairouz2019advances}, where the question of determining trade-offs between item-level and user-level DP is highlighted.  It then becomes important to understand the learnability implications of this distinction between user-level vs item-level privacy. 

One way to understand this problem is to artificially limit the number of training examples contributed by each user.  This has been explored for some analytics and learning tasks in~\cite{amin2019bounding, wilson2019differentially} and is related to node-level DP \cite{kasiviswanathan2013analyzing}.  While this is an interesting line of work, it does not sufficiently address the core of the problem.  For instance, is learning possible with only a small number of users if each user contributes sufficiently many training examples?  This question was addressed by Liu et al.~\cite{LiuSYK020} for learning discrete distributions and by Levy et al.~\cite{LSAKKMS21} for some learning tasks including mean estimation, ERM with smooth losses, stochastic convex optimization.  They show that the privacy cost decreases faster as the number of samples per user increases.

In this paper we address the question in a very general setting: what can user-level privacy gain for \emph{any} privately PAC learnable class?
Recall that it had recently been shown that a class is learnable via $(\eps, \delta)$-DP algorithms iff it is online learnable~\cite{AlonLMM19,BunLM20}, which is in turn equivalent to the class having a finite \emph{Littlestone dimension}~\cite{littlestone_learning_1987}. Furthermore, it is also known that a class is learnable via $\eps$-DP algorithms iff it has a finite \emph{probabilistic dimension}~\cite{BeimelNS19}.

%In this paper we address the question in a very general setting: what can user-level privacy gain for \emph{any} online learnable class? Recall that a class is online learnable if and only if it has finite Littlestone dimension, which was recently shown to be equivalent to the DP learnability of the class \cite{BunLM20}.

As discussed below, our protocols are based on a novel connection between \emph{correlated sampling}---a tool from sketching and approximation algorithms~\cite{Broder97,KleinbergT02,charikar2002similarity}---and DP learning.
% and the study of parallel repetition \cite{holenstein2007parallel,rao2011parallel,barak2008rounding}, and

\subsection{Our Results}

Our first main result is that, for any online learnable class, it is possible to learn the class with an $(\eps, \delta)$-DP algorithm using only $O(\log(1/\delta) / \eps)$ users, as long as each user has at least $\poly(d/\alpha)$ samples (\Cref{thm:apx-dp-learner-generic}), where $d$ is the Littlestone dimension and $\alpha$ is the error of the hypothesis  output by the learner. It should be noted that the remarkable and arguably surprising aspect of this result is that we can learn using a \emph{constant} number of samples (depending only on the privacy parameters $\eps, \delta$), regardless of how complicated the class might be, as long as the class is online learnable.
(For all formal definitions, see \Cref{sec:prelim}.) 
Indeed, previous work~\cite{amin2019bounding} had explicitly conjectured that such a bound is impossible for user-level learning, albeit in a different setting than ours.
\begin{theorem} \label{thm:apx-dp-learner-generic}
Let $\alpha, \beta \in (0, 0.1)$, and $C$ be any concept class with finite $\Ldim(C) = d$. Then, for any $\eps, \delta \in (0, 1)$, there exists an $(\eps, \delta)$-DP  $(\alpha, \beta)$-accurate learner for $C$ that requires $O\left(\log(1/(\delta \beta)) / \eps\right)$ users where each user has $\tilde{O}_{\beta}\left((d/\alpha)^{O(1)}\right)$ samples.
\end{theorem}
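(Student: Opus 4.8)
The plan is to prove \Cref{thm:apx-dp-learner-generic} by combining three ingredients: a \emph{globally stable learner that may use public randomness} for $C$, using only $\poly(d/\alpha)$ samples and having inverse-polynomial stability; a \emph{correlated sampling} step that amplifies this stability to a constant close to $1$; and a standard $(\eps,\delta)$-DP heavy-hitter (``stable histogram'') mechanism that aggregates the users' almost-always-identical hypotheses. The protocol draws a public random string $r$, has each of the $N$ users run the amplified learner on its own samples to obtain a hypothesis $\hat h_i$, and releases an $(\eps,\delta)$-DP approximate mode of $\{\hat h_1,\dots,\hat h_N\}$.

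First I would invoke global stability: since $\Ldim(C)=d$, the machinery of \cite{BunLM20}, relaxed to permit public randomness, should give a learner $A_0$ with $n_0=\poly(d/\alpha)$ samples such that for every realizable $P$, with probability $\ge 1-\beta/4$ over $r$ there is a hypothesis $h_r$ with $\err{P}{h_r}\le\alpha$ and $\p_{S\sim P^{n_0}}[A_0(S;r)=h_r]\ge\eta_0$, where $\eta_0=1/\poly(d/\alpha)$; allowing public randomness is exactly what makes the stability inverse-\emph{polynomial} in $d$ instead of the $2^{-\Theta(d)}$ of the randomness-free notion. Then I would give each user $m=\tilde O_\beta(\poly(d/\alpha))$ samples, split them into many independent sub-batches, run $A_0$ on each, and use correlated sampling seeded by $r$ to collapse the resulting hypotheses into agreement, producing a learner $A$ such that, with probability $\ge 1-\beta/4$ over $r$, there is a single $h^*_r$ with $\err{P}{h^*_r}\le\alpha$ and $\p_{S\sim P^m}[A(S;r)=h^*_r]\ge 0.9$. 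The subtle point is that $C$ may be infinite, so one cannot apply correlated sampling to the raw empirical distribution over sub-batch outputs (its total-variation distance to the population distribution need not be small); instead one first coarsens the hypothesis space---e.g.\ by identifying hypotheses that agree on a fresh calibration sample, or passing to heavy-hitter representatives---and correlated-samples in the resulting finite quotient, bounding the collision probability via the relevant total-variation distances. Since the target agreement is a \emph{constant}, $m$ does not depend on $N$, $\eps$, or $\delta$.

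For the aggregation step all users share the same $r$, and I would condition on $r$ being good, which fails with probability $\le\beta/4$. Then $\hat h_1,\dots,\hat h_N$ are independent and each equals $h^*_r$ with probability $\ge 0.9$, so by a Chernoff bound at least $0.8N$ of them equal $h^*_r$ except with probability $e^{-\Omega(N)}\le\beta/4$. A standard $(\eps,\delta)$-DP selection/heavy-hitter mechanism on $\{\hat h_i\}$ is, for every fixing of $r$, $(\eps,\delta)$-DP against changing one element of the multiset and, with probability $\ge 1-\beta/4$, returns an element whose multiplicity is within $O(\log(1/(\delta\beta))/\eps)$ of the maximum; since $h^*_r$ has multiplicity $\ge 0.8N$ while every other hypothesis has multiplicity $\le 0.2N$, this element is $h^*_r$ once $N\ge c\log(1/(\delta\beta))/\eps$ for a suitable constant $c$. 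For privacy, each $\hat h_i$ is a function of user $i$'s data only, so neighboring user-datasets yield multisets differing in a single element and the whole protocol inherits $(\eps,\delta)$-DP from the histogram step. A union bound over the three bad events caps the failure probability at $\beta$, and on success the output is $h^*_r$ with error $\le\alpha$; so with $N=O(\log(1/(\delta\beta))/\eps)$ users and $m=\tilde O_\beta(\poly(d/\alpha))$ samples per user we obtain an $(\eps,\delta)$-DP $(\alpha,\beta)$-accurate learner, as claimed.

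The main obstacle lies entirely in the base-learner and amplification steps; the aggregation is routine (Chernoff, a black-box stable histogram, a union bound, and a one-line privacy composition). The two genuinely hard points will be (i) extracting from a finite-Littlestone-dimension class a stable learner whose sample size \emph{and} inverse stability are both $\poly(d/\alpha)$ rather than $2^{\Theta(d)}$---which is exactly what forces the move to the public-randomness-augmented notion of global stability---and (ii) making correlated sampling provably force agreement over a possibly infinite hypothesis class, which requires sampling inside a carefully chosen finite quotient of the class and controlling the total-variation distances that govern the correlated-sampling collision probability. I expect (ii) to be the hardest single step.
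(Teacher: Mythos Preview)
Your three-step architecture---obtain a stable base learner, amplify the stability via correlated sampling, then aggregate across users with an $(\eps,\delta)$-DP selection primitive---matches the paper's, and your aggregation step is essentially identical to \Cref{thm:stability-to-apx-dp}. The difference is in how the base learner and the amplification are set up. You posit an $A_0$ with single-output stability $\eta_0=1/\poly(d/\alpha)$ coming from ``\cite{BunLM20} relaxed to permit public randomness,'' and then a separate boosting stage that raises $\eta_0$ to $0.9$. The paper does not do this: it starts from the \emph{list} globally stable learner of \cite{GGKM20} (\Cref{thm:list-stable,lem:list-stable-with-restricted-accuracy}), which outputs a list of size $L=\exp((d/\alpha)^{O(1)})$ containing a fixed $h^\ast$ with probability $\Omega(1/d)$, and in a single step (\Cref{thm:stability-to-pseudo}) converts list global stability directly into $(1-\beta,1-\beta)$-pseudo-global stability. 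Concretely, the user runs the list learner many times, restricts to hypotheses that are heavy hitters across the lists, forms a softmax-weighted empirical distribution over those, and applies correlated sampling to that distribution; the bounded list size $L$ is precisely the finite structure that makes the total-variation bounds go through---the role you correctly anticipated a ``finite quotient'' would have to play.

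The gap in your plan is step (i): no result in \cite{BunLM20} gives a single-output learner with both sample complexity and inverse stability $\poly(d/\alpha)$, and it is not clear how merely permitting public randomness would upgrade their $2^{-O(d)}$ stability to inverse-polynomial. The polynomial parameters come specifically from \cite{GGKM20}'s list construction, and the natural object it hands you is a list, not a single hypothesis. Once you invoke that, the two-stage ``get $1/\poly$, then boost to $0.9$'' scheme is superfluous: \Cref{thm:stability-to-pseudo} goes from the list learner straight to stability $1-\beta$. So your outline is right in spirit, but as written it leans on an intermediate object $A_0$ whose existence is exactly the content of the hard step, and the paper's resolution is to bypass $A_0$ entirely via list global stability.
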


\ifshuffle
Our algorithm in \Cref{thm:apx-dp-learner-generic} can in fact be extended to work even in the weaker \emph{shuffle} model of DP. We provide more detail about such an extension in \Cref{app:shuffle}.
\fi

Our second result is a generic $\eps$-DP learner in the local model for any class $C$ with finite probabilistic representation dimension $\prdim(C)$. This gives a separation in the local DP model between the user-level and item-level settings; the sample complexity in the latter is known to be polynomial in the statistical query (SQ) dimension~\cite{kasiviswanathan2011can}, which can be exponentially larger than the probabilistic representation dimension.  A simple example of such a separation is PARITY on $d$ unknowns, whose probabilistic representation dimension is $O(d)$ (this can be seen by taking the class itself to be its own representation), whereas its SQ dimension is $2^{\Omega(d)}$~\cite{BlumFJKMR94}.  In this case, our \Cref{thm:local-dp-learner-generic} implies that PARITY can be learned using only $O_{\beta}(d/\eps^2)$ users when each user has $O_{\alpha, \beta}(d^3)$ examples; by contrast, the aforementioned lower bound~\cite{kasiviswanathan2011can} implies that, in the item-level setting (where each user has a single example), $2^{\Omega(d)}$ users are needed.

\begin{theorem} \label{thm:local-dp-learner-generic}
Let $\alpha, \beta \in (0, 0.1)$, and $C$ be any concept class with finite $\prdim(C) = d$. Then, for any $\eps \in (0, 1)$, there exists an $\eps$-DP  $(\alpha, \beta)$-accurate learner for $C$ in the (public randomness) local model that requires $O\left(\frac{d + \log(1/\beta)}{\eps^2}\right)$ users where each user has $\tilde{O}_{\beta}\left(d^3/\alpha^2\right)$ samples.
\end{theorem}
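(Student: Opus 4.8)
The plan is to use public randomness to reduce learning $C$ to learning a \emph{finite} hypothesis class, to collapse the users' hypotheses to a single consensus via correlated sampling, and then to aggregate with a local-DP histogram over that finite class. \textbf{Step 1 (reduce to a finite class).} By definition of $\prdim(C)=d$, there is a distribution over hypothesis collections $\mathcal{H}$, each with $\log|\mathcal{H}|=O(d)$, such that for the target concept $c$ and the data distribution, a random $\mathcal{H}$ contains a hypothesis of error $\le\alpha/4$ except with probability $\le\beta/4$. The users use part of the public randomness to sample one such $\mathcal{H}$; from now on $\mathcal{H}$ is a fixed class of size $N=2^{O(d)}$ that we may assume contains a hypothesis of error $\le\alpha/4$.

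\textbf{Step 2 (a public-randomness globally stable learner for $\mathcal{H}$).} Each user $i$ uses its $m$ samples $S_i$ to compute the empirical error $\err{S_i}{h,c}$ of every $h\in\mathcal{H}$ and forms the soft-min distribution $\mu_{S_i}(h)\propto\exp(-\lambda\,\err{S_i}{h,c})$ with temperature $\lambda=\Theta\big((d+\log(1/\beta))/\alpha\big)$. Two facts drive the argument: (a) with this $\lambda$, the population distribution $\mu(h)\propto\exp(-\lambda\,\err{\mathcal{D}}{h,c})$ puts all but $\le\beta/4$ of its mass on hypotheses of error $\le\alpha$; and (b) a uniform-convergence bound over the $N$ functions in $\mathcal{H}$ shows that, once $m=\tilde{O}_\beta(d^3/\alpha^2)$, we have $\sup_h|\err{S_i}{h,c}-\err{\mathcal{D}}{h,c}|\le\tau$ with $\lambda\tau$ below a small absolute constant, and hence $d_{\TV}(\mu_{S_i},\mu)$ below a small absolute constant $\gamma_1$ --- for all but a small fraction of the users, except with probability $\le\beta/4$. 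The users then apply a correlated-sampling scheme to their distributions using shared public randomness $\rho$, so user $i$ outputs $\hat h_i:=\CS(\mu_{S_i},\rho)$. This is exactly an instance of our relaxed (public-randomness) globally stable learner; by the correlated-sampling distortion bound and fact (b) the outputs $\hat h_i$ coincide with $\hat h:=\CS(\mu,\rho)$ for all but a small fraction of the users, for all but a small fraction of the strings $\rho$, while by fact (a) $\hat h$ has error $\le\alpha$ except with probability $\le\beta/4$ over $\rho$. (This is where we invoke the boosting of global stability arbitrarily close to one at polynomial expense in the number of samples, with $\gamma_1$ here a small constant.)

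\textbf{Step 3 (aggregate with a local-DP histogram).} Each user sends only an $\eps$-DP randomized report $R(\hat h_i)$, where $R$ is the randomizer of a standard local-DP frequency oracle over the universe $\mathcal{H}$; since $\hat h_i$ is a function of user $i$'s data and the public randomness and $R$ is $\eps$-DP on its discrete input, the per-user message is $\eps$-DP no matter how $\hat h_i$ depends on the (up to $m$) samples. The analyzer forms frequency estimates $\tilde f(h)$ for all $h\in\mathcal{H}$ and outputs the most frequent hypothesis. The frequency oracle gives $\max_h|\tilde f(h)-f(h)|=O\big(\sqrt{\log(N/\beta)/(\eps^2 n)}\big)$ except with probability $\le\beta/4$, so $n=O\big((d+\log(1/\beta))/\eps^2\big)$ users make this error $<1/4$; since Step 2 ensures a clear majority of users hold the single good hypothesis $\hat h$, the output equals $\hat h$. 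A union bound over the four $O(\beta)$-probability failure events (bad representation, bad uniform convergence, bad public string, noisy histogram) gives overall success probability $\ge1-\beta$, and the privacy is per-user as required.

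The main obstacle is Step 2: pushing the correlated-sampling consensus to confidence $1-\beta$ while keeping the per-user sample count polylogarithmic (not polynomial) in $1/\beta$. The distortion bound for correlated sampling controls the disagreement probability only \emph{on average} over the public randomness $\rho$, so one must argue that the ``bad'' strings --- those near a decision boundary of $\CS(\mu,\cdot)$ --- either still force agreement on \emph{some} error-$\le\alpha$ hypothesis or have probability absorbed into the $O(\beta)$ exponential-weights tail of $\mu$; this is exactly where the concentration of $\mu$ on low-error hypotheses (fact (a), governed by $\lambda$) enters, and it is the delicate point. A secondary point: if $\prdim$ is quantified by VC dimension rather than $\log|\mathcal{H}|$, one first discretizes the representation to a finite $\alpha$-cover before correlated sampling can be applied.
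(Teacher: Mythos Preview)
Your overall architecture---sample a representation class $\mathcal{H}$ from the public randomness, form exponential weights over $\mathcal{H}$ using empirical errors, apply correlated sampling with shared randomness, then aggregate with a local-DP histogram over $\mathcal{H}$---is exactly the paper's route (its \Cref{thm:stability-from-representation}/\Cref{cor:stability-from-representation} composed with \Cref{thm:stability-to-local-dp}).

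The ``main obstacle'' you flag, however, is not a real obstacle, and the complicated fix you sketch (reasoning about ``bad'' public strings near decision boundaries of $\CS$) is neither needed nor obviously sound. You write that you must keep $m$ ``polylogarithmic (not polynomial) in $1/\beta$,'' but the theorem's $\tilde{O}_\beta(d^3/\alpha^2)$ hides arbitrary dependence on $\beta$ in the subscript. The paper exploits this: it takes $m$ large enough (polynomial in $1/\beta$) that with probability $1-O(\beta^2)$ over $S$ one has $d_{\TV}(\mu_S,\mu)\le O(\beta^2)$, not merely a small constant. Then the correlated-sampling error guarantee gives $\Pr_{\rho,S}[\CS(\mu_S;\rho)\ne \CS(\mu;\rho)]\le O(\beta^2)$, and a single application of Markov's inequality over $\rho$ yields $\Pr_\rho\big[\Pr_S[\CS(\mu_S;\rho)=\CS(\mu;\rho)]\ge 1-\beta\big]\ge 1-\beta$. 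That is the entire passage from the average-over-$\rho$ guarantee to the high-probability-over-$\rho$ statement; no analysis of the internal structure of correlated sampling is required. With your constant-$\gamma_1$ bound on TV distance, Markov would only give a constant success probability over $\rho$, which is the gap you correctly sensed---but the remedy is simply to shrink the TV target, not to open up $\CS$.
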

In the central model, we get a slightly improved bound on the number of users in terms of $1/\eps$.
\begin{theorem} \label{thm:pure-dp-learner-generic}
Let $\alpha, \beta \in (0, 0.1)$, and $C$ be any concept class with finite $\prdim(C) = d$. Then, for any $\eps \in (0, 1)$, there exists an $\eps$-DP  $(\alpha, \beta)$-accurate learner for $C$ that requires $O\left(\frac{d + \log(1/\beta)}{\eps}\right)$ users where each user has $\tilde{O}_{\beta}\left(d^3/\alpha^2\right)$ samples.
\end{theorem}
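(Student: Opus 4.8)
The plan is to run the public-randomness framework developed above — a base (non-private) learner whose global stability has been boosted via correlated sampling — and then aggregate the users' hypotheses with the exponential mechanism; this is the same skeleton as in \Cref{thm:local-dp-learner-generic}, the only difference being that in the central model the aggregation step costs a factor $1/\eps$ rather than $1/\eps^2$. Since $\prdim(C) = d$ is finite, there is a distribution $\MP$ over hypothesis classes, each of size $2^{O(d)}$, such that for every target concept and every data distribution a class $H \sim \MP$ contains an $(\alpha/4)$-accurate hypothesis with probability at least, say, $1/2$; taking the union of $O(\log(1/\beta))$ independent draws amplifies this confidence to $1 - \beta/4$ while keeping the log-cardinality $O(d + \log\log(1/\beta))$. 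Call the resulting class $H^\st$; the analyzer draws $H^\st$ together with the shared randomness needed for correlated sampling (all of this is data-independent, hence free for privacy). Running the base learner over $H^\st$ on each user's $m = \tilde{O}_\beta(d^3/\alpha^2)$ samples and invoking the correlated-sampling boosting lemma established earlier yields a canonical hypothesis $h^\st \in H^\st$, determined by the shared randomness, such that with probability at least $1 - \beta/4$ over that randomness $h^\st$ is $\alpha$-accurate and, conditioned on this, each user independently outputs $h^\st$ with probability at least $1 - \eta_0$ over its own sample, where $\eta_0$ is any constant we wish to fix (say $1/10$).

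With $n = O((d + \log(1/\beta))/\eps)$ users I then run the exponential mechanism over the candidate set $H^\st$, scoring each $h \in H^\st$ by the number of users that output it. Changing one user's data changes at most that single user's report, so each score has sensitivity $1$; hence the exponential mechanism with weights proportional to $\exp(\eps \cdot \mathrm{score}(h)/2)$ is $\eps$-DP, and since the shared randomness does not depend on the data the whole learner is $\eps$-DP. For accuracy, condition first on the shared randomness being good (probability $\ge 1 - \beta/4$), so that $h^\st$ is $\alpha$-accurate; a Chernoff bound over the $n$ independent users gives $\mathrm{score}(h^\st) \ge (1 - 2\eta_0)n$ except with probability $\beta/4$ (using $n = \Omega(\log(1/\beta))$), so that the scores of all other hypotheses together sum to at most $2\eta_0 n$. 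The standard exponential-mechanism utility bound then gives $\Pr[\text{output} \ne h^\st] \le |H^\st| \exp(-\Omega(\eps n))$, which is at most $\beta/4$ once $n = \Omega((\log|H^\st| + \log(1/\beta))/\eps) = \Omega((d + \log(1/\beta))/\eps)$; a union bound over the three bad events yields $(\alpha,\beta)$-accuracy. The per-user sample size is whatever the boosting lemma demands: uniform convergence over $H^\st$ needs $\tilde{O}(d/\alpha^2)$ samples and the stability boost inflates this by a further polynomial factor in $d$, for the stated $\tilde{O}_\beta(d^3/\alpha^2)$.

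The genuinely hard part lies not in this theorem but behind it: the correlated-sampling boosting lemma that converts the a priori minuscule global-stability probability of the base learner into an arbitrarily large constant $1 - \eta_0$ at only polynomial cost in $m$, together with arranging the base learner over $H^\st$ to be simultaneously near-optimal and anchored at a single hypothesis $h^\st$. Granting that machinery, the only new content here relative to \Cref{thm:local-dp-learner-generic} is the replacement of the local-model plurality estimator by the exponential mechanism, which is exactly what shaves off a factor of $1/\eps$; the point to check carefully is that no stray $\poly(1/\alpha)$ or $|H^\st|$-dependent term leaks into the user count, but the $O(\log(\text{domain})/\eps)$ shape of the exponential mechanism's guarantee is precisely matched to the target $O((d + \log(1/\beta))/\eps)$.
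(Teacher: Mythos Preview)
Your proposal is correct and follows essentially the same route as the paper: obtain a pseudo-globally stable learner from the probabilistic representation via correlated sampling (\Cref{cor:stability-from-representation}), then aggregate using the exponential mechanism as the pure-DP selection algorithm (\Cref{lem:pure-selection}) in place of the local-model histogram, which is exactly what trades the $1/\eps^2$ for $1/\eps$. The paper states this explicitly as \Cref{thm:stability-to-pure-dp} combined with \Cref{cor:stability-from-representation}, and your accuracy and privacy arguments match theirs.
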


Interestingly, we can also show that the number of users required in the above results is essentially the smallest possible (up to a $1/\eps$ factor in \Cref{thm:local-dp-learner-generic}), as stated below.

\begin{lemma}\label{thm:lb-main}
For any $\alpha, \beta \leq 1/4, \eps \in (0, 1)$ and $\delta \in (0, 0.1\eps^{1.1})$, if there exists an $(\eps, \delta)$-DP $(\alpha, \beta)$-accurate learner on $n$ users for a concept class $C$, then we must have $n \geq \Omega(\min\{\log(1/\delta), \prdim(C)\} / \eps)$.
\end{lemma}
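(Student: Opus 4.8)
The plan is to combine \emph{group privacy in the user model} with the hard instances underlying the characterization of pure private learnability by the probabilistic representation dimension, and then to use the hypothesis $\delta<0.1\eps^{1.1}$ to produce the $\min$ via a sub-sampling of the hard family. The first ingredient is a user-level group privacy bound: for any $(\eps,\delta)$-DP learner $M$ on $n$ users and any two ``per-user'' distributions $\mu,\nu$ over a single user's dataset, the output distributions under the product measures $\mu^{\otimes n}$ and $\nu^{\otimes n}$ satisfy, for every event $E$,
\[
\Pr_{S\sim\mu^{\otimes n}}[M(S)\in E]\ \le\ e^{n\eps}\,\Pr_{S\sim\nu^{\otimes n}}[M(S)\in E]\ +\ \delta_n,\qquad \delta_n:=\tfrac{e^{n\eps}-1}{e^{\eps}-1}\,\delta\ \le\ \tfrac{e^{n\eps}}{\eps}\,\delta .
\]
This is the usual hybrid argument over the $n$ users: replacing one user's entire dataset produces neighboring inputs, hence costs a factor $e^{\eps}$ and an additive $\delta$; composing $n$ such replacements and integrating over the product measures gives the displayed inequality. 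The point is that this is oblivious to how far apart $\mu$ and $\nu$ are, so $n$ users play exactly the role of $n$ samples, no matter how many samples per user the learner uses.

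The second ingredient is imported from the theory of pure-DP learning: since $\prdim(C)=d$, the lower bound ``an $\eps'$-DP $(\alpha,\beta)$-accurate learner for $C$ needs $\Omega(d/\eps')$ samples''~\cite{BeimelNS19} is witnessed by a hard family of learning instances $(\mathcal D_1,c_1),\dots,(\mathcal D_N,c_N)$ with $\log N=\Omega(d)$ on which any accurate learner is ``pinned down'': in the cleanest case (e.g.\ when $C$ has large VC dimension, taking $\mathcal D_i$ uniform on a shattered set and the $c_i$ a random code) there are pairwise \emph{disjoint} success events $E_1,\dots,E_N$ on the hypothesis space such that any $(\alpha,\beta)$-accurate learner, run on examples drawn from $\mathcal D_i$ and labeled by $c_i$, outputs in $E_i$ with probability $\ge1-\beta$. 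Let $\mu_i$ denote the single-user distribution of the learner's per-user sample under instance $i$ and $\mathcal P_i=\mu_i^{\otimes n}$. Then accuracy gives $\Pr_{\mathcal P_i}[M\in E_i]\ge1-\beta$; applying the group privacy bound between $\mathcal P_i$ and $\mathcal P_j$ and using disjointness of the $E_j$'s, for any sub-family of size $N_0\le N$,
\[
1\ \ge\ \sum_{j\ne i}\Pr_{\mathcal P_i}[M\in E_j]\ \ge\ (N_0-1)\,e^{-n\eps}\,(1-\beta-\delta_n),
\qquad\text{i.e.}\qquad e^{n\eps}\ \ge\ (N_0-1)(1-\beta-\delta_n).
\]

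Now take $N_0=\min\{N,\lceil\delta^{-2/11}\rceil\}$, so that $\ln N_0=\Omega(\min\{d,\ln(1/\delta)\})$. Either $n\eps\ge\tfrac12\ln N_0$, in which case we are done, or $e^{n\eps}<\sqrt{N_0}$; in the latter case $\delta_n\le e^{n\eps}\delta/\eps<\sqrt{2}\,\delta^{-1/11}\delta/\eps=\sqrt2\,\delta^{10/11}/\eps$, and the hypothesis $\delta<0.1\eps^{1.1}$ is tuned precisely so that $\delta^{10/11}<0.1^{10/11}\eps^{1.1\cdot10/11}=0.1^{10/11}\eps$, hence $\delta_n<\sqrt2\cdot0.1^{10/11}<1/5$. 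Plugging into the displayed inequality together with $\beta\le1/4$ gives $e^{n\eps}>(N_0-1)/2$, which contradicts $e^{n\eps}<\sqrt{N_0}$ unless $N_0\le5$. When $N_0\le5$ --- equivalently $\min\{\ln(1/\delta),\prdim(C)\}=O(1)$ --- the claimed bound is just $n=\Omega(1/\eps)$, which follows from the two-instance special case: with $E=E_1$, disjointness gives $\Pr_{\mathcal P_2}[M\in E_1]\le\beta$, so $1-\beta\le\Pr_{\mathcal P_1}[M\in E_1]\le e^{n\eps}\beta+\delta_n\le e^{n\eps}(\beta+\delta/\eps)$, and since $\beta\le1/4$ and $\delta/\eps<0.1$ this forces $e^{n\eps}\ge(3/4)/(0.35)>2$. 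In every case $n\ge\Omega(\ln N_0/\eps)=\Omega(\min\{\ln(1/\delta),\prdim(C)\}/\eps)$.

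The routine parts are the group-privacy bookkeeping, the choice of $N_0$ that simultaneously produces the $\min$ and calibrates the $\delta<0.1\eps^{1.1}$ hypothesis (the exponent $1.1$ is exactly what makes the power of $\eps$ in the bound on $\delta_n$ nonnegative), and the degenerate two-instance case. The main obstacle is the second ingredient at full strength: obtaining a hard family with $\log N=\Omega(\prdim(C))$ on which the group-privacy argument bites. A naive $2\alpha$-packing under a single distribution only yields $\Omega((\text{packing number})/\eps)$, which for ``low-dimensional but hard'' classes such as thresholds is far weaker than $\Omega(\prdim(C)/\eps)$ (their packing number is $O(1/\alpha)$ while $\prdim$ is $\Theta(\log(\text{domain size}))$). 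One therefore has to lift the full pure-DP learning lower bound of \cite{BeimelNS19} to the user model --- replacing ``sample'' by ``user'' throughout, which is legitimate precisely because the group-privacy bound above is insensitive to the per-user distributions --- while tracking the additive $\delta_n$ error so that it only degrades the conclusion once $n\eps$ reaches $\Theta(\log(1/\delta))$, which is what manifests the $\min$.
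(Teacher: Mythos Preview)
Your packing/Fano-style argument is correct for the ``cleanest case'' you single out (large VC dimension with a random code on a shattered set), and your bookkeeping around the cut-off $N_0=\min\{N,\lceil\delta^{-2/11}\rceil\}$ and the calibration of $\delta<0.1\eps^{1.1}$ is fine. The gap is exactly where you put your finger on it: the ``second ingredient at full strength.'' You assert that the \cite{BeimelNS19} lower bound is \emph{witnessed} by a hard family of $N=2^{\Omega(\prdim(C))}$ instances with pairwise disjoint success events, but that is not how BNS proceeds, and no such family need exist. BNS's lower bound is \emph{constructive}, not packing-based: it shows that any pure-DP learner on $n$ samples can be converted into a probabilistic representation of size $O(\eps n)$. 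So when you write ``one therefore has to lift the full pure-DP learning lower bound of \cite{BeimelNS19} to the user model,'' you are not describing a way to repair the packing argument---you are describing an entirely different argument, which you then do not carry out.

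The paper's proof is precisely that lift. Assume for contradiction that $n\le 0.01\log(1/\delta)/\eps$. Run the learner $T=100\lceil e^{n\eps}\rceil$ times on the \emph{empty} dataset (zero users) and collect the outputs into a set $H$; the resulting distribution on $H$ has $\size\le\log T=O(\eps n)$. For any realizable distribution $\MD$, accuracy gives a dataset on which the learner lands in the good set $G=\{f:\err{\MD}{f}\le\alpha\}$ with probability $\ge 1/2$; group privacy across $n$ users (exactly your first ingredient) moves this to the empty dataset at cost $e^{-n\eps}$ multiplicatively and $\frac{e^{n\eps}-1}{e^{\eps}-1}\delta$ additively. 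The assumption $n\eps\le 0.01\log(1/\delta)$ together with $\delta<0.1\eps^{1.1}$ keeps the additive term below a constant, so $\Pr[\BA(\emptyset)\in G]\ge\Omega(e^{-n\eps})\ge 10/T$, and hence with high probability some $h\in H$ lies in $G$. This exhibits an $(\alpha,1/4)$-probabilistic representation of $C$ of size $O(\eps n)$, forcing $n\ge\Omega(\prdim(C)/\eps)$; otherwise $n>0.01\log(1/\delta)/\eps$, and in either case $n\ge\Omega(\min\{\log(1/\delta),\prdim(C)\}/\eps)$. The point is that this argument never needs a hard family at all---it uses the \emph{definition} of $\prdim$ directly---and so it works uniformly, including for classes like thresholds where your packing yields nothing.
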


The summary of our results described above can be found in~\Cref{table:summary}.

While our previous results establish nearly tight bounds on the number of users required for DP learning, they are in general not efficient.  For example, the pure-DP learners have running times that grow (at least) exponentially in the size of the probabilistic representation, and the approximate-DP learner similarly has a running time that grows (at least) exponentially in the Littlestone dimension. Our final result investigates how to get efficient learners.  Informally, we show that we can take any efficient SQ algorithm and turn it into an efficient learner in the user-level setting (\Cref{thm:sq-reduction}). 

\begin{theorem} \label{thm:sq-reduction}
Let $C$ be any concept class, and suppose that there exists an algorithm $\BA$ that can $\alpha$-learn $C$ using $q$ statistical queries  $\stat_{\MD}(\tau)$.  Furthermore, suppose that any hypothesis output by $\BA$ can be represented by $b$ bits. Then, there exist the following algorithms:
\\
(i) An $(\eps, \delta)$-DP $(\alpha, \beta)$-accurate learner with $O(\log(1/\delta)/\eps)$ users, where each user has $\poly\left(\frac{q}{\beta\tau}\right)$ samples.
\\
(ii) An $\eps$-DP $(\alpha, \beta)$-accurate learner with $O\left(\frac{b + \log(1/\beta)}{\eps}\right)$ users, where each user has $\poly\left(\frac{q}{\beta\tau}\right)$ samples.
\\
(iii) An $\eps$-DP $(\alpha, \beta)$-accurate learner in the (public randomness) local DP model with $O\left(\frac{b + \log(1/\beta)}{\eps^2}\right)$ users, where each user has $\poly\left(\frac{q}{\beta\tau}\right)$ samples.

Moreover, all DP learners described above run in time $\poly(\TIME(\BA), 1/ \tau, 1 / \beta)$.
\end{theorem}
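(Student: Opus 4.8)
The plan is to factor the argument through the (pseudo-)globally stable learners that power \Cref{thm:apx-dp-learner-generic}, \Cref{thm:pure-dp-learner-generic}, and \Cref{thm:local-dp-learner-generic}: I will first turn the SQ algorithm $\BA$ into a learner $\BA'$ that, using public randomness and $\poly(q/(\beta\tau))$ samples per user, outputs one fixed hypothesis with probability close to $1$, and then feed $\BA'$ into the same private-aggregation procedures used there.

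\emph{Step 1 (each user simulates $\BA$ on a shared grid).} Give each user $m = q \cdot \Theta\!\left(\log(q/\beta)/\eta^2\right)$ samples with $\eta := \Theta(\beta\tau/q)$, split into $q$ batches of size $m/q$. Draw, from the public randomness, independent offsets $\rho_1,\dots,\rho_q \in [0,\tau/2)$. A user answers the $j$-th statistical query $\stat_{\MD}(\tau)$ of $\BA$ by computing the empirical mean $\hat v_j$ of the query function over its $j$-th batch and returning the point of the grid $\{k\cdot(\tau/2) + \rho_j : k \in \BZ\}$ nearest to $\hat v_j$. Since the $j$-th batch is independent of the first $j-1$ responses (hence of the $j$-th query itself), $|\hat v_j - \E_{\MD}[\cdot]| \le \eta$ with probability $\ge 1-\beta/(10q)$; on this event the returned grid point is within $\tau/2 + \eta < \tau$ of the true mean, so it is a valid response and $\BA$'s output is $\alpha$-accurate.

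\emph{Step 2 (global stability via rounding).} For two users whose empirical means for a common query differ by at most $2\eta$, the probability over $\rho_j$ that a grid point separates them is at most $4\eta/\tau \le \beta/(10q)$. Arguing inductively on the query index — conditioning on all users having produced identical responses to queries $1,\dots,j-1$, so that they all pose the same $j$-th query — a union bound over the $q$ queries shows that, for a random run, every user produces the same response sequence, hence the same hypothesis $h_\rho$, with probability $\ge 1-\beta/5$, and $h_\rho$ is $\alpha$-accurate. Thus $\BA'$ is $(1-\beta/5)$-pseudo-globally-stable; since the stability is already close to $1$, the correlated-sampling boosting step is not needed here.

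\emph{Step 3 (private aggregation).} Run $\BA'$ on $n$ users and privately recover $h_\rho$. (i) For $(\eps,\delta)$-DP, apply the stable-histogram aggregation: with $n = O(\log(1/(\delta\beta))/\eps)$ users, whose outputs agree with $h_\rho$ in expectation a $(1-\beta/5)$-fraction of the time, it privately returns $h_\rho$ with probability $\ge 1-\beta$. (ii) For $\eps$-DP in the central model, apply the exponential mechanism over $\{0,1\}^b$ with score equal to the number of users holding a string; some string has count $\ge (1-\beta/2)n$ while the normalizer is at most $2^b + n e^{\eps n/2}$, so it outputs $h_\rho$ with probability $\ge 1-\beta$ once $n = O((b+\log(1/\beta))/\eps)$ — and this mechanism is implementable in $\poly(n,b)$ time by first deciding, with the appropriate probability, whether to return a uniformly random $b$-bit string or to sample among the (at most $n$) held strings proportionally to $e^{\eps\cdot\mathrm{count}/2}$, so that $\{0,1\}^b$ is never enumerated. (iii) For $\eps$-DP in the local model, run an efficient locally private heavy-hitters protocol over $\{0,1\}^b$; with $n = O((b+\log(1/\beta))/\eps^2)$ users it recovers the string of frequency $\ge 1-\beta/5$, namely $h_\rho$, with probability $\ge 1-\beta$. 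In all cases each user runs $\BA$ once and evaluates $q \le \TIME(\BA)$ query functions on $\poly(q/(\beta\tau))$ points, and the aggregator runs in $\poly(n,b)$ time, so the total running time is $\poly(\TIME(\BA), 1/\tau, 1/\beta)$.

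The main obstacle is the efficiency of the pure-DP aggregation in Steps 3(ii)–(iii): the hypothesis domain $\{0,1\}^b$ is exponentially large, so the exponential mechanism and the local histogram must be realized without enumerating it — for (ii) via the two-stage sampling above, and for (iii) via a hashing-based frequency-oracle heavy-hitters protocol. A secondary point requiring care is the adaptivity of $\BA$'s statistical queries, which is why Step 1 uses disjoint sample batches and Step 2 couples the users' executions query by query rather than all at once.
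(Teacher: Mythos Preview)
Your proposal is correct and follows the same two-stage architecture as the paper: turn the SQ learner into a pseudo-globally stable one by simulating the $\stat_{\MD}(\tau)$ oracle query-by-query with shared randomness (\Cref{lem:sq-reduction}), then plug the result into the DP selection aggregators of \Cref{thm:stability-to-apx-dp,thm:stability-to-pure-dp,thm:stability-to-local-dp}. The only difference is cosmetic: the paper discretizes each SQ answer by placing a two-point distribution on a fixed grid and invoking the general correlated-sampling primitive of \Cref{thm:correlated-sampling}, whereas you round to a randomly shifted grid---a direct special case of correlated sampling on the line that yields the same $O(\eta/\tau)$ per-query disagreement bound and hence the same $\poly(q/(\beta\tau))$ sample count.
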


Thanks to the abundance of SQ learning algorithms, the above result can be applied to turn those into efficient user-level DP learners.  We discuss some interesting examples of these in \Cref{sec:efficient-sq-examples}.

\begin{table}[t]
    \centering
    \footnotesize
%\begin{adjustbox}{center}
\begin{tabular}{|c|c|c|c|c|c|}
    %   \hline
    \cline{3-6}
      \multicolumn{2}{c|}{} & \multicolumn{3}{c|}{User-Level} & Item-Level\\
    %   \hline
    \cline{3-6}
          \multicolumn{1}{c}{} &
          \multicolumn{1}{c|}{Bounds} & \# users & \# samples/user & Ref. & \# users\\ 
         \hline
         $\eps$-DP & Upper & $O(\prdim / \eps)$ & $\prdim^{O(1)}$ & \Cref{thm:pure-dp-learner-generic} & $\Theta(\prdim / \eps)$ \\ 
         \cdashline{2-5}
         (Central) & Lower & $\Omega(\prdim / \eps)$ & - & \Cref{thm:lb-main} & \cite{BeimelNS19} \\
      \hline
      $\eps$-DP & Upper & $O(\prdim / \eps^2)$ & $\prdim^{O(1)}$ & \Cref{thm:local-dp-learner-generic} & $\SQdim^{\Theta(1)}$ \\ 
         \cdashline{2-5}
         (Local) & Lower & $\Omega(\prdim / \eps)$ & - & \Cref{thm:lb-main} & \cite{kasiviswanathan2011can} \\
      \hline
      $(\eps, \delta)$-DP & Upper & $O(\log(1/\delta)/\eps)$ & $\Ldim^{O(1)}$ & \Cref{thm:apx-dp-learner-generic} & $\Ldim^{O(1)}$~\cite{GGKM20} \\ 
      \cdashline{2-6}
      (Central) & Lower & $\Omega(\min\{\log(1/\delta),$ & - & \Cref{thm:lb-main} & $\Omega(\log^* \Ldim)$ \\
      & & $\qquad \prdim\}/\epsilon)$ & & & \cite{AlonLMM19}\\
    \hline
    \end{tabular}
%    \end{adjustbox}
     \caption{Summary of our results in the user-level setting and prior results in the item-level setting. For simplicity, we assume that the accuracy parameter and success probability of the learner are constants, and we disregard their dependencies. Our lower bounds hold regardless of the number of samples each user receives.
    \label{table:summary}}
  \end{table}

\subsection*{Independent Work of Impagliazzo et al.~\cite{reproducibility}}

As an intermediate step of our proofs, we define a property called \emph{pseudo-globally stability} for learning algorithms (\Cref{defn:pseudo-stable}) and provide several such algorithms (\Cref{cor:stability-from-littlestone,cor:stability-from-representation}, and \Cref{lem:sq-reduction}). In an independent work, Impagliazzo et al.~\cite{reproducibility} studies a similar notion under the name \emph{reproducibility} and provide several reproducible algorithms e.g. for heavy hitters, SQ-based algorithms and learning halfspaces. Below we provide a more detailed discussion on the similarities and differences between the two papers:
\begin{itemize}
\item \textbf{Definition.} Strictly speaking, the main definition in~\cite{reproducibility} is slightly different compared to ours, but they note in the appendix that the two definitions are equivalent up to a polynomial factor in the parameters.
\item \textbf{SQ Algorithms.} Both Impagliazzo et al.'s work and ours (\Cref{lem:sq-reduction}) give generic reductions for turning SQ algorithms to pseudo-globally stable ones.
\item \textbf{Amplification of Stability Parameter.} In~\cite[Theorem A.2]{reproducibility}, a reduction for decreasing the stability parameter is given. Indeed, one can also view our reduction in \Cref{thm:stability-to-pseudo} in this form but our result is weaker as our reduction starts out with a (not pseudo) globally stable algorithms, whereas their reduction works even when starting with pseudo globally stable algorithms.
\item \textbf{Heavy Hitter Algorithms.} Our aforementioned reduction also implicitly gives an algorithm for heavy hitters. Once again, this is weaker than that in Impagliazzo et al.: ours only gives a single heavy hitter whereas that of~\cite{reproducibility} can provide a list of all heavy hitters.
\item \textbf{Additional Results in~\cite{reproducibility}.} ~\cite{reproducibility} also contains many additional results, including a lower bound on the overhead due to pseudo-global stability, a pseudo-globally stable median algorithm and a pseudo-globally stable algorithm for learning halfspaces with a margin.
\end{itemize}

\subsection{Proof Overview}

% \badih{Should we mention somewhere that simply applying the group property of DP yields to a large required number of users?}

For simplicity of presentation, we will focus on \Cref{thm:apx-dp-learner-generic}; we will briefly discuss the proofs of the other results, which are similar in flavor, at the end of this section.

Let us assume for the moment that each user, given their $m$ samples drawn i.i.d. from $\MD$, can output the same hypothesis $h^*$ (with small error) with high probability. If this holds, then we would be done: we can simply run a DP selection\footnote{\Cref{sec:dp-tools} contains the formal definition of the selection problem and known DP algorithms for it.} algorithm to pick the most frequently seen hypothesis. 

This assumption is quite strong, but not completely unreasonable. Specifically, Bun et al.~\cite{BunLM20}---in their seminal work that characterizes hypothesis classes learnable in the item-level DP setting---showed that it is possible to come up with a learner that outputs some hypothesis $h^*$ with probability $2^{-O(d)}$, where $d$ denotes the Littlestone dimension of the concept class. We may attempt to use this in the approach described above, but this does not work: in order to even see $h^*$ at all (with say a constant probability), we would need $2^{O(d)}$ users, which is prohibitive!

To overcome this, we exploit shared randomness between users. Our main technical result here is that, if the users share randomness, we can ensure that they output the same $h^*$ with probability arbitrarily close to one; we can then run the DP selection algorithm to pick $h^*$.  This immediately allows our overall strategy described above to go through.

The shared randomness is used in our algorithm(s) via \emph{correlated sampling}. Recall that a correlated sampling strategy is an algorithm that takes in a probability distribution $\MP$ together with randomness $r$. The guarantee is that, if we run it on two  distributions $\MP_1, \MP_2$ but with the same randomness $r$, then the probability (over $r$) that the outputs disagree is at most a constant times the total variation distance between $\MP_1$ and $\MP_2$.  The task then is simply to compute, for each user $i$, such a probability distribution $\MP_i$ from their own samples such that the $\MP_i$'s do not differ much between different users.  
%
\iffalse
In summary, the outline of our algorithm is presented in \Cref{alg:dp-learner-outline}.

\begin{algorithm}
\caption{User-Level Learner}
\label{alg:dp-learner-outline}
\begin{algorithmic}
\STATE \textbf{Input: } samples $\{(x^i_1, y^i_1), \dots, (x^i_m, y^i_m)\}_{i \in [n]}$, public randomness $r$
\FOR{Each user $i \in [n]$}
\STATE Construct a distribution $P_i$ on output hypotheses from samples $(x^i_1, y^i_1), \dots, (x^i_m, y^i_m)$.
\STATE $h_i \leftarrow \text{CorrelatedSampling}(P, r)$.
\ENDFOR
\RETURN \text{DPSelection}$(h_1, \dots, h_n)$.
\end{algorithmic}
\end{algorithm}
\fi

Our algorithms in \Cref{thm:apx-dp-learner-generic,thm:local-dp-learner-generic,thm:pure-dp-learner-generic} follow this framework. The differences are in the DP selection algorithms (based on central vs local model and whether we are interested in pure- or approximate-DP)---and, more importantly---how we construct the distribution $\MP_i$ of hypotheses.  In the case of the approximate-DP learner (\Cref{thm:apx-dp-learner-generic}), we build this on top of a learning algorithm of Ghazi et al.~\cite{GGKM20}, which has a slightly stronger guarantee than that of~\cite{BunNS19}: it outputs a list of size at most $2^{\poly(d)}$ with the guarantee that $h^*$ belongs to it with probability at least $1/\poly(d)$. Each user runs such an algorithm $\poly(d)$ times on fresh samples drawn from $\MD$, and uses the output hypotheses to build the distribution $\MP_i$.  For pure-DP learners (\Cref{thm:local-dp-learner-generic,thm:pure-dp-learner-generic}), each user simply uses the empirical error on the probabilistic representation of the class to build the distribution $\MP_i$. 

Finally, our SQ algorithm (\Cref{thm:sq-reduction}) deviates slightly from this framework. Instead of computing $\MP_i$ outright (which is usually inefficient since its support is large), we proceed one statistical query at a time. Specifically, each user employs correlated sampling to answer each statistical query; if they manage to answer all queries in the same manner, then the algorithm will output the same hypothesis. The point here is that, since the answer to each statistical query is just a bounded-precision number in $[0, 1]$, building a probability distribution of the possible answers can be done efficiently.

\section{Preliminaries}
\label{sec:prelim}

Let $Y = \{ 0, 1 \}$.  For a set $\Omega$, we use $2^{\Omega}$ to denote the set of all functions from $\Omega$ to $Y$ and use
$\Delta_{\Omega}$ to denote the set of all distributions on $\Omega$.  For distributions $\MP, \MQ$, we use $p \sim \MP$ to denote that
$p$ is drawn from $\MP$ and $d_{\TV}(\MP, \MQ)$ to denote the \emph{total variation distance}
between $\MP$ and $\MQ$.  

Let $X$ be a finite set.%
\footnote{While our results can be extended to the case where $X$ is infinite, it does require non-trivial generalization of notation and tools (e.g., correlated sampling) to that setting.}. 
Let $C$ denote the set of concepts from $X$ to $Y$, and let $\MD$ be any distribution on $X \times Y$ realizable by some $h \in C$.  

We first recall the notion of DP.  Let $\epsilon, \delta \in \mathbb{R}_{\geq 0}$.  Two datasets are \emph{neighboring} if one can be obtained from the other by adding or removing a single user.
\begin{defn}[Differential Privacy (DP)~\cite{DworkMNS06,dwork2006our}]\label{def:dp}
A randomized algorithm $\BA$ taking as input a dataset is \emph{$(\epsilon, \delta)$-differentially private} ($(\epsilon, \delta)$-DP or \emph{approximate-DP}) if for any two \emph{neighboring} datasets $D$ and $D'$, and for any subset $S$ of outputs of $\BA$, it holds that $\Pr[\BA(D) \in S] \le e^{\epsilon} \cdot \Pr[\BA(D') \in S] + \delta$.  If $\delta = 0$, then $\BA$ is \emph{$\epsilon$-differentially private} ($\epsilon$-DP or \emph{pure-DP}).
\end{defn}

A dataset in our setting consists of $n$ users, where user $i$ receives a sequence of $m$ samples $(x^i_1, y^i_1), \dots, (x^i_m, y^i_m)$ drawn i.i.d. from $\MD$. Similar to the standard PAC setting~\cite{Valiant84}, the algorithm $\BA$ takes in the dataset and outputs a hypothesis $f$. We say that it is an \emph{$(\alpha, \beta)$-accurate learner} if $\err{\MD}{f} \leq \alpha$ with probability $1 - \beta$ (where $\alpha, \beta \in (0, 1)$ are parameters); here, $\err{\MD}{f} = \Pr_{(x, y) \sim \MD}[f(x) \neq y]$.  We use $\TIME(\BA)$ to denote the running time of $\BA$.  If $\BA$ is randomized, sometimes we use the notation $\BA(\cdot; r)$ to explicitly call out the (public) randomness $r$ it might use.  

When each user holds exactly a single example (i.e., $m = 1$), we call this the \emph{item-level} setting.  We show results in both the (usual) central and local%
\footnote{
A DP algorithm in the \emph{local} model consists of a randomizer whose input is the samples held by one user and whose output is a sequence of messages, and an analyzer, whose input is the concatenation of the messages from all the randomizers and  whose output is the output of the algorithm.  An algorithm is DP in the local model if for any dataset, the concatenation of the outputs of all the randomizers is DP.
} models of DP.

All missing proofs are in the Supplementary Material.

\subsection{Correlated Sampling}

\begin{defn}[Correlated Sampling]
A \emph{correlated sampling} strategy for a set $\Omega$ with multiplicative error $\kappa$ is an algorithm $\CS: \Delta_{\Omega} \times \MR' \to \Omega$ and a distribution $\MR'$ on random strings such that 
\begin{itemize}
\item (Marginal Correctness) For all $\MP \in \Delta_{\Omega}$ and $\omega \in \Omega$, $\Pr_{r' \sim \MR'}[\CS(\MP; r') = \omega] = \MP(\omega)$.
\item (Error Guarantee) For $\MP, \MQ \in \Delta_\Omega$, $\Pr_{r' \sim \MR'}[\CS(\MP; r') \ne \CS(\MQ; r')] \leq \kappa \cdot d_{\TV}(\MP, \MQ)$.
\end{itemize}
\end{defn}

\begin{theorem}[\cite{Broder97,KleinbergT02,Holenstein07}] \label{thm:correlated-sampling}
For any finite set $\Omega$, there exists a correlated sampling strategy for $\Omega$ with multiplicative error 2.
\end{theorem}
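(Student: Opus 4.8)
The plan is to exhibit the classical Broder/Kleinberg--Tardos min-wise hashing construction, adapted to sampling from an arbitrary finite distribution. The shared random string $r'$ will encode an infinite (or sufficiently long) sequence of i.i.d. pairs $(\omega_1, t_1), (\omega_2, t_2), \dots$ where each $\omega_j$ is drawn uniformly from $\Omega$ and each $t_j$ is drawn uniformly from $[0,1]$, independently. Given a target distribution $\MP \in \Delta_\Omega$, the strategy $\CS(\MP; r')$ scans $j = 1, 2, \dots$ and outputs the first $\omega_j$ for which $t_j \le |\Omega| \cdot \MP(\omega_j) / c$ for a suitable normalizing constant $c \ge \max_\omega |\Omega| \MP(\omega)$ (equivalently, one rejection-samples from $\MP$ using the uniform proposal); since $\sum_\omega \MP(\omega) = 1$ the acceptance probability in each round is $1/|\Omega| \cdot \sum_\omega (|\Omega|\MP(\omega)/c) = 1/c > 0$, so the process halts almost surely.

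First I would verify \emph{marginal correctness}: conditioned on halting at round $j$, the accepted element is distributed according to $\MP$ by the standard rejection-sampling identity $\Pr[\omega_j = \omega \mid \text{accept at } j] = \MP(\omega)$, and since this is the same for every $j$, the unconditional output law is exactly $\MP$. Next, for the \emph{error guarantee}, I would run $\CS(\MP; r')$ and $\CS(\MQ; r')$ on the \emph{same} stream. Write $A_j^\MP$ for the event that round $j$ is accepting for $\MP$ and similarly $A_j^\MQ$. The two runs return different answers only if at the first round $j^*$ that is accepting for at least one of them, the acceptance decisions differ, i.e. $\omega_{j^*}$ is accepted for one distribution but not the other. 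A clean way to couple the $[0,1]$-thresholds is to normalize both by the same $c \ge \max_\omega |\Omega|\max\{\MP(\omega),\MQ(\omega)\}$; then in a single round the probability that $\omega_j$ is accepted for exactly one of $\MP,\MQ$ equals $\tfrac{1}{|\Omega|}\sum_\omega \tfrac{|\Omega|}{c}|\MP(\omega)-\MQ(\omega)| = \tfrac{2}{c} d_{\TV}(\MP,\MQ)$, while the probability that $\omega_j$ is accepted for at least one of them is $\tfrac{1}{|\Omega|}\sum_\omega \tfrac{|\Omega|}{c}\max\{\MP(\omega),\MQ(\omega)\} = \tfrac{1}{c}(1 + d_{\TV}(\MP,\MQ))$. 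Conditioning on the geometric ``first relevant round'', the disagreement probability is the ratio $\tfrac{2 d_{\TV}(\MP,\MQ)}{1 + d_{\TV}(\MP,\MQ)} \le 2\, d_{\TV}(\MP,\MQ)$, giving multiplicative error $2$ exactly as claimed.

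The point I expect to require the most care is the coupling bookkeeping: one must argue that conditioning on the value of $j^*$ (the first round relevant to at least one distribution) leaves the within-round event of ``accepted for exactly one'' with the conditional probability computed above, which follows because the rounds are i.i.d. and the two within-round events (``relevant'' and ``disagreeing'') are measurable functions of the same pair $(\omega_j, t_j)$; so the ratio formula is just $\Pr[\text{disagree} \mid \text{relevant}]$ in a single round. A secondary technicality is that the random string is formally infinite, but since the halting round is geometric with parameter $\ge 1/c$, with probability $1$ only finitely many pairs are inspected, and one may truncate the stream after $O(c \log(1/\eta))$ rounds to make everything finite at the cost of an arbitrarily small failure probability $\eta$ (or simply observe that the definition as stated permits $\MR'$ to be a distribution over infinite strings). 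I would close by noting that a symmetric choice $c = |\Omega|$ always works when $\MP$ is arbitrary provided we also allow $\MP(\omega) > 1/|\Omega|$ — in which case one simply takes $c = \max_\omega |\Omega|\MP(\omega)$ in the first run and re-derives the bound with the common normalizer $c \ge \max_\omega |\Omega|\max\{\MP(\omega),\MQ(\omega)\}$ in the coupling — so the construction and the error bound go through for every pair of distributions on the finite set $\Omega$.
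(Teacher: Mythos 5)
The paper does not supply a proof of \Cref{thm:correlated-sampling}; it cites it to Broder, Kleinberg--Tardos, and Holenstein. Your construction reproduces precisely that standard rejection-sampling-with-shared-randomness strategy: share a stream of i.i.d.\ pairs $(\omega_j, t_j)$ with $\omega_j$ uniform on $\Omega$ and $t_j$ uniform on $[0,1]$, accept the first $\omega_j$ whose threshold test passes, and bound the disagreement probability via the first round that is ``relevant'' for at least one distribution. The coupling bookkeeping, the per-round probabilities, the identity $\sum_\omega \max\{\MP(\omega),\MQ(\omega)\} = 1 + d_{\TV}(\MP,\MQ)$, and the resulting ratio $\tfrac{2\,d_{\TV}(\MP,\MQ)}{1+d_{\TV}(\MP,\MQ)} \le 2\,d_{\TV}(\MP,\MQ)$ are all correct.

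The one place your write-up must be tightened is the normalizer $c$. You first introduce $c \ge \max_\omega |\Omega|\MP(\omega)$ and later a ``common normalizer $c \ge \max_\omega |\Omega|\max\{\MP(\omega),\MQ(\omega)\}$,'' but a quantity that depends on $\MP$ (let alone on both $\MP$ and $\MQ$) cannot be part of the algorithm: $\CS$ is by definition a map $\Delta_\Omega \times \MR' \to \Omega$, so $\CS(\MP;r')$ sees only $\MP$ and $r'$, and has no way to coordinate a $\MQ$-dependent scale with the other invocation. The resolution you already gesture at---$c = |\Omega|$---is the right one and needs no hedging: accept $\omega_j$ iff $t_j \le \MP(\omega_j)$. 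Since $\MP(\omega) \le 1$ for all $\omega$, this threshold is always in $[0,1]$; the per-round acceptance probability is $\tfrac{1}{|\Omega|}\sum_\omega \MP(\omega) = 1/|\Omega| > 0$, so the procedure halts almost surely, and both the marginal-correctness and error calculations go through verbatim with this fixed $c$. The closing sentence about ``taking $c = \max_\omega |\Omega|\MP(\omega)$ in the first run'' is both unnecessary and incorrect as stated, and should simply be dropped.
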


\subsection{Representation Dimension}

The \emph{size} of a hypothesis class $H$ is defined as $\size(H) := \log |H|$, and the size of a distribution $\MH$ of hypothesis classes is defined as $\size(\MH) := \max_{H \in \supp(\MH)} \size(H)$.

\begin{defn}[Probabilistic Representation Dimension~\cite{BeimelNS19}]
A distribution $\MH$ on $2^X$ is said to \emph{$(\alpha, \beta)$-probabilistically represent} a concept class $C$ if for every $f \in C$ and for every distribution $\MD$ on $X$, with probability $1 - \beta$ over $H \sim \MH$, there exists $h \in H$ such that $\Pr_{x \sim \MD}[f(x) \ne h(x)] \leq \alpha$.
The \emph{$(\alpha, \beta)$-probabilistic representation dimension} of a concept class $C$ is defined as
\begin{align*}
\prdim_{\alpha, \beta}(C) := \min_{\MH \text{ that } (\alpha, \beta)\text{-probabilistically represents } C} \size(\MH).
\end{align*}
We use $\prdim(C)$ as a shorthand for $\prdim_{1/4,1/4}(C)$.
\end{defn}

%Feldman and Xiao \cite{FeldmanX15} showed that $\prdim$ is equal to the communication complexity of evaluating a function in a class $C$ on an element in $X$; for our purposes, we will use the above definition.

\begin{lemma}[\cite{BeimelNS19}] \label{lem:prdim-boost}
For every concept class $C$ and $\alpha, \beta > 0$, we have
\begin{align*}
\prdim_{\alpha, \beta}(C) \leq O\left(\log(1/\alpha) \cdot (\prdim(C) + \log\log\log(1/\alpha) + \log\log(1/\beta))\right).
\end{align*}
\end{lemma}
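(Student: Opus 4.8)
The plan is to obtain the bound by two successive amplifications of the base $(1/4,1/4)$-representation: first pushing the confidence parameter from $1/4$ down to a small $\beta_0$, then pushing the accuracy parameter from $1/4$ down to $\alpha$ via a boosting argument whose \emph{combining rule is oblivious to the target}.

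\textbf{Confidence amplification.} Let $\MH$ be a distribution on $2^X$ of size $d=\prdim(C)$ that $(1/4,1/4)$-probabilistically represents $C$. Given $\beta_0$, let $\MH_0$ be the distribution of $H_1\cup\cdots\cup H_k$ with $H_1,\dots,H_k\sim\MH$ i.i.d.\ and $k=O(\log(1/\beta_0))$. For any $f\in C$ and distribution $\MD$ on $X$, each $H_j$ independently fails to contain a $1/4$-approximator of $f$ with probability at most $1/4$, so the union fails with probability at most $(1/4)^k\le\beta_0$; and $|H_1\cup\cdots\cup H_k|\le k\cdot2^d$ gives $\size(\MH_0)\le d+O(\log\log(1/\beta_0))$. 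Thus $\MH_0$ is a $(1/4,\beta_0)$-representation of $C$.

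\textbf{Accuracy amplification via boosting.} Fix $T=O(\log(1/\alpha))$ and let $\tilde\MH$ be the distribution of the class $\tilde H:=\{\mathrm{maj}(h_1,\dots,h_T): h_1\in H_1,\dots,h_T\in H_T\}$, where $H_1,\dots,H_T\sim\MH_0$ are i.i.d.\ and $\mathrm{maj}$ is the fixed unweighted majority vote; note this construction inspects neither $f$ nor $\MD$. The size bound $\size(\tilde\MH)\le T\cdot\size(\MH_0)$ is immediate since $|\tilde H|\le\prod_j|H_j|\le 2^{T\,\size(\MH_0)}$. For the representation guarantee, fix $f\in C$ and $\MD$ and run Freund's boost-by-majority algorithm in the analysis only: at round $j$ it prescribes a reweighted distribution $\MD_j$ on $X$ (determined by $f$, $\MD$, and the already-fixed weak hypotheses $h^*_1,\dots,h^*_{j-1}$), and applying the $(1/4,\beta_0)$-representation property of $\MH_0$ to the distribution $\MD_j$, with probability at least $1-\beta_0$ the independently drawn class $H_j$ contains some $h^*_j$ with $\Pr_{x\sim\MD_j}[h^*_j(x)\ne f(x)]\le 1/4$. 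A union bound over the $T$ rounds shows all succeed with probability at least $1-T\beta_0$, and in that case the edge-$1/4$ guarantee of boost-by-majority gives $\err{\MD}{\mathrm{maj}(h^*_1,\dots,h^*_T)}\le e^{-\Omega(T)}\le\alpha$, with $\mathrm{maj}(h^*_1,\dots,h^*_T)\in\tilde H$. Hence $\tilde\MH$ is an $(\alpha,T\beta_0)$-representation of $C$. Now set $\beta_0=\beta/(2T)$ so that $T\beta_0\le\beta$; then $\log\log(1/\beta_0)\le\log\log(1/\beta)+\log\log\log(1/\alpha)+O(1)$, so $\size(\MH_0)\le d+O(\log\log(1/\beta)+\log\log\log(1/\alpha))$ and $\prdim_{\alpha,\beta}(C)\le T\cdot\size(\MH_0)=O(\log(1/\alpha)\cdot(d+\log\log(1/\beta)+\log\log\log(1/\alpha)))$, as claimed.

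\textbf{Main obstacle.} The delicate point is the accuracy step. The naive idea---take a good hypothesis and ``patch'' it, on its error region, by a fresh hypothesis drawn from another copy of $\MH_0$---fails, because the error region $\{f\ne h\}$ depends on the unknown $f$, so the patched hypothesis cannot be placed in a small target-oblivious class. Using a boosting scheme whose \emph{combiner} is fixed in advance (plain majority), while only the \emph{intermediate} reweightings depend on $f$, circumvents this. A second subtlety is that one genuinely needs a fixed combiner rather than data-dependent weights (as in AdaBoost): data-dependent weights would have to be discretized and enumerated, costing a $\poly(\log(1/\alpha))$ rather than linear $\log(1/\alpha)$ blow-up in the size. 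Finally one must check the routine point that each round-$j$ reweighting of $\MD$ is again a legitimate distribution on $X$, so that the representation property of $\MH_0$ applies to it.
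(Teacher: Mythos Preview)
The paper does not contain its own proof of this lemma; it is quoted from~\cite{BeimelNS19} and used as a black box. Your argument is correct and matches the approach in that source: amplify confidence by unioning $O(\log(1/\beta_0))$ independent draws, then amplify accuracy via Freund's boost-by-majority over $T=O(\log(1/\alpha))$ rounds, exploiting that the unweighted-majority combiner is target-oblivious so the resulting class has size at most $T\cdot\size(\MH_0)$.
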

For a concept class $C$, let $\Ldim(C)$ denote its
\emph{Littlestone dimension}~\cite{littlestone_learning_1987}.

\subsection{Tools from DP}
\label{sec:dp-tools}

In the \emph{selection} problem, each user $i$ receives an element $u_i$ from a universe $U$. For each $u \in U$, define $c_u := |\{i \in [n] \mid u_i = u\}|$. The goal is to output $u^*$ such that $c_{u^*} \geq \max_{u \in U} c_u - \alpha$; when the output satisfies this with probability $1 - \beta$, the algorithm is said to be \emph{$(\alpha, \beta)$-accurate}.

% The goal is to compute, for every $u \in U$, $c_u := |\{i \in [n] \mid u_i = u\}|$. We say that a randomized algorithm is $(\alpha, \beta)$-accurate for the histogram problem if it outputs an estimate $\tc$ such that, with probability $1 - \beta$, we have $\max_{u \in U} |c_u - \hat{c}_u| \leq \alpha$.

\begin{lemma}[Approximate-DP Selection~\cite{KorolovaKMN09,BunNS19}] \label{lem:apx-selection}
There is an $(\eps, \delta)$-DP $(O(\log(1/\delta)/\eps), 0)$-accurate algorithm for the selection problem in the central model. Moreover, the algorithm runs in $\poly(n, \log |U|)$ time.
\end{lemma}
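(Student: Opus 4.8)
The plan is to obtain the lemma from the approximate-DP selection algorithms of \cite{KorolovaKMN09,BunNS19}; here I sketch the route and indicate where the difficulty lies. First I would observe that at most $n$ elements of $U$ have a positive count (and, assuming $n \ge 1$, at least one does), so all $u$ with $c_u = 0$ can be discarded, leaving an effective universe of size at most $n$ that is enumerable in $\poly(n, \log|U|)$ time. It then remains to output some $u$ with $c_u \ge \max_v c_v - O(\log(1/\delta)/\eps)$ while being $(\eps,\delta)$-DP.

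The heart of the construction is a GAP-MAX / distance-to-instability step. Sort the counts $c^{(1)} \ge c^{(2)} \ge \cdots$, with corresponding elements $u^{(1)}, u^{(2)}, \dots$, and set $\mathrm{gap} := c^{(1)} - c^{(2)}$. Two facts drive the argument: (i) $\mathrm{gap}$ changes by at most $2$ between neighboring datasets (adding or removing one user moves a single count by $1$ and shifts it by at most one position in the sorted order, so each of $c^{(1)}, c^{(2)}$ moves by at most $1$); and (ii) whenever $\mathrm{gap}$ is large, the identity of a maximum-count element is the same on all neighbors. Hence one may safely release $u^{(1)}$ only after a stability check: compute $\mathrm{gap} + \mathrm{Lap}(O(1/\eps))$ and, if it exceeds a threshold of order $\log(1/\delta)/\eps$, output $u^{(1)}$. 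The noisy-threshold test is $(\eps,\delta)$-DP by the standard analysis, and --- crucially --- since $u^{(1)}$ is by definition a maximizer of $c$, outputting it has error exactly $0$ whenever this branch fires; this is precisely what lets the accuracy parameter be $\beta = 0$ rather than merely small.

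The \emph{main obstacle} is the complementary branch, where $\mathrm{gap}$ is small and many elements are simultaneously within $O(\log(1/\delta)/\eps)$ of the optimum: the algorithm must still commit to one of them, yet naively reporting, say, the larger of two near-tied elements leaks which one it is, and a single user can control that. Resolving this without paying a $\log|U|$ (or even $\log n$) additive term in the error is exactly the content of the cited selection algorithms, which use a more careful iterated/recursive thresholding of the successive counts, together with the observations that once a large tie set has been reached one can fall back to a uniform choice among it (an output distribution insensitive to one user because the tie set is then large), and that reporting $u^{(1)}$ always has error $0$. Assembling the privacy accounting across the thresholded steps gives the $(\eps,\delta)$-DP guarantee, while tracking the Laplace scales gives the $O(\log(1/\delta)/\eps)$ error bound and the $\poly(n, \log|U|)$ running time; for the full construction and its analysis I would follow \cite{KorolovaKMN09,BunNS19}.
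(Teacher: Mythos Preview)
The paper does not prove this lemma; it is quoted from \cite{KorolovaKMN09,BunNS19} as a black box. So there is no ``paper proof'' to compare against, and your proposal ultimately also defers to those references for the hard step. That said, a couple of remarks on your sketch:

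Your description of the small-gap branch does not match the cited constructions and, as written, does not work. The ``fall back to a uniform choice over a large tie set'' idea is not private in the required sense: a single user can move one element in or out of the tie set, and the total variation distance between uniform distributions on sets of size $s$ and $s\pm 1$ is $\Theta(1/s)$, so you would need $s \gtrsim 1/\delta$ to absorb this into the $\delta$ budget --- something you cannot guarantee. Likewise, ``iterated/recursive thresholding'' of successive order statistics would compose privacy losses across levels and does not obviously terminate with an $O(\log(1/\delta)/\eps)$ error. The constructions in \cite{KorolovaKMN09,BunNS19} are not GAP-MAX with a recursive fallback; the relevant mechanism is the \emph{stability-based histogram}: add Laplace noise only to the (at most $n$) elements with positive count, and take the argmax. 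Privacy holds because a neighbor differs in exactly one count by $1$; if that count is positive on both sides the usual Laplace analysis applies, and if it crosses from $0$ to $1$ its noisy value can affect the argmax only with probability $O(\delta)$ once the scale and implicit threshold are set to $\Theta(\log(1/\delta)/\eps)$. There is no recursion and no tie-set sampling.

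Finally, on $\beta=0$: your sketch correctly observes that outputting the true argmax has zero error, but that branch does not always fire, and the other branch in your outline does not guarantee bounded error with probability~$1$. Achieving $\beta=0$ exactly requires bounded-support noise (e.g., truncated Laplace at scale $O(1/\eps)$ truncated to $\pm\,O(\log(1/\delta)/\eps)$); with ordinary Laplace one only gets $\beta>0$. In practice this distinction is immaterial here (the paper later uses the lemma with a positive failure probability anyway), but if you want the statement as written you should say which bounded-noise variant you invoke.
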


%The following pure-DP histogram algorithm in the central model follows from the Laplace mechanism~\cite{DworkMNS06}; an analysis for the guarantee stated below can be found in~\cite[Proposition 2.8]{Vadhan17}.

The following pure-DP histogram algorithm in the central model follows from the exponential mechanism~\cite{McSherryT07}. While a trivial implementation would result in a running time that depends linearly on $|U|$, it is not hard to see that we can first toss a coin to determine whether the output would come from the input set. If so, the sampling can be done in $O(n)$ time; if not, one can randomly output one of the remaining candidates in $U$, which only requires time $O(\log |U|)$. This yields the following.

\begin{lemma}[Pure-DP Selection~\cite{McSherryT07}] \label{lem:pure-selection}
There is an $\eps$-DP $(O(\log(|U|/\beta) / \eps), \beta)$-accurate algorithm for the selection problem in the central model. Moreover, the algorithm runs in $\poly(n, \log |U|)$ time.
\end{lemma}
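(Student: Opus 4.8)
\textbf{Proof plan for \Cref{lem:pure-selection}.}

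The plan is to instantiate the exponential mechanism of McSherry and Talwar~\cite{McSherryT07} on the universe $U$, with score function equal to the count $c_u$, and then to observe that the naive implementation can be sped up to $\poly(n, \log|U|)$ time. Recall that the exponential mechanism, given a score function $q : U \to \BR$ with sensitivity $\Delta$ (here $q(u) = c_u$, which changes by at most $1$ when a single user is added or removed, so $\Delta = 1$), outputs each $u$ with probability proportional to $\exp\!\left(\frac{\eps \cdot c_u}{2\Delta}\right) = \exp(\eps c_u / 2)$. The standard analysis gives that this mechanism is $\eps$-DP. For the accuracy guarantee, the classical utility bound for the exponential mechanism states that, for any $t > 0$,
\[
\Pr\!\left[c_{u^*} \le \max_{u \in U} c_u - \frac{2\Delta}{\eps}\left(\ln|U| + t\right)\right] \le e^{-t}.
\]
Setting $t = \ln(1/\beta)$ and $\Delta = 1$ yields that with probability at least $1 - \beta$ the output $u^*$ satisfies $c_{u^*} \ge \max_u c_u - \frac{2}{\eps}\ln(|U|/\beta) = \max_u c_u - O(\log(|U|/\beta)/\eps)$, which is exactly the claimed $(O(\log(|U|/\beta)/\eps), \beta)$-accuracy.

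The remaining, and only slightly subtle, point is the running time: written naively, sampling from the exponential mechanism requires computing the normalizing constant $Z = \sum_{u \in U} \exp(\eps c_u / 2)$, which is a sum over all of $U$ and hence costs $\Omega(|U|)$. To avoid this, I would split $U$ into the set $U_{\mathrm{in}} := \{u_1, \dots, u_n\}$ of elements actually appearing in the input (of size at most $n$), and the complement $U_{\mathrm{out}} := U \setminus U_{\mathrm{in}}$, on which $c_u = 0$ so each such $u$ has the same weight $\exp(0) = 1$. Let $W_{\mathrm{in}} := \sum_{u \in U_{\mathrm{in}}} \exp(\eps c_u/2)$, computable in $O(n)$ time after a pass over the data, and $W_{\mathrm{out}} := (|U| - |U_{\mathrm{in}}|) \cdot 1$, computable in $O(\log|U|)$ arithmetic. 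First flip a biased coin that lands ``in'' with probability $W_{\mathrm{in}}/(W_{\mathrm{in}} + W_{\mathrm{out}})$; conditioned on ``in'', sample from $U_{\mathrm{in}}$ proportionally to $\exp(\eps c_u/2)$ (an $O(n)$-time discrete sampling step), and conditioned on ``out'', output a uniformly random element of $U_{\mathrm{out}}$, which can be done in $O(\log|U|)$ time given an indexing of $U$. This two-stage procedure samples exactly from the exponential-mechanism distribution, so both the privacy and the accuracy guarantees are inherited verbatim, and the total running time is $\poly(n, \log|U|)$.

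I do not expect a genuine obstacle here; the result is essentially a packaging of well-known facts. The one place requiring a little care is the running-time argument, where one must be explicit that $U$ is presented in a way that supports $O(\log|U|)$-time uniform sampling from a co-finite subset (e.g., $U = [|U|]$), and that the coin flip in the first stage can be carried out with the requisite precision --- but since all weights are either $1$ or of the form $\exp(\eps j/2)$ for integer $j \le n$, the bit-complexity of the arithmetic is $\poly(n, \log|U|)$ as well. Putting these pieces together gives \Cref{lem:pure-selection}.
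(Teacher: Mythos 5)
Your proposal is correct and follows essentially the same route as the paper: instantiate the exponential mechanism with score $c_u$ and sensitivity $1$, invoke the standard utility bound, and obtain the $\poly(n,\log|U|)$ running time by first flipping a biased coin to decide between the at-most-$n$ input elements and the uniform-weight complement. The paper's exposition of this lemma is a brief remark making exactly these points, so there is nothing further to reconcile.
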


%The following guarantee can be achieved by combining the RAPPOR algorithm~\cite{ErlingssonPK14} with a standard concentration inequality (e.g. Hoeffding inequality).

The next guarantee follows from the heavy-hitters algorithm of Bassily et al.~\cite{bassily2017practical}:

\begin{lemma}[Pure-DP Histogram in the Local Model~\cite{erlingsson2014rappor}] \label{lem:local-selection}
There is an $\eps$-DP $\left(O\left(\sqrt{n \cdot \log(|U|/\beta)} / \eps\right), \beta\right)$-accurate algorithm for the histogram problem in the local model. Furthermore, the algorithm runs in $\poly(n, \log |U|)$ time.
\end{lemma}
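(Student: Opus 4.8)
The plan is to instantiate a local-DP frequency oracle and finish with a union bound over the universe. Concretely, I would invoke the hashing-based local heavy-hitters protocol of Bassily et al.~\cite{bassily2017practical} (a descendant of RAPPOR~\cite{erlingsson2014rappor}): each user $i$ holding item $u_i \in U$ applies an $\eps$-DP local randomizer $R$ --- e.g.\ randomized response on a (pseudo)randomly chosen coordinate of a hash of $u_i$, together with the public hash seed --- and sends the resulting short message to the server. Since every transcript message is the output of an $\eps$-DP randomizer applied to a single user's datum, the concatenation of all messages is $\eps$-DP in the local model, which gives the privacy claim.

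For accuracy, recall that from the $n$ messages the server can form, for each $u \in U$, an (essentially) unbiased estimator $\hat c_u$ of $c_u = |\{i : u_i = u\}|$ whose randomness decomposes as a sum of $n$ independent bounded contributions, one per user. A Bernstein/Hoeffding bound then yields, for any fixed $u$ and any $\beta' \in (0,1)$, that $|\hat c_u - c_u| \le O(\sqrt{n\log(1/\beta')}/\eps)$ except with probability $\beta'$. Taking $\beta' = \beta/|U|$ and a union bound over all $u \in U$ gives $\max_{u \in U} |\hat c_u - c_u| \le O(\sqrt{n \log(|U|/\beta)}/\eps)$ with probability $1-\beta$, which is exactly $(\alpha,\beta)$-accuracy for the histogram problem with $\alpha = O(\sqrt{n\log(|U|/\beta)}/\eps)$.

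The one point requiring care --- and the main obstacle to a naive argument --- is the running-time bound $\poly(n, \log|U|)$: the textbook one-hot RAPPOR encoding has messages and server work of size $\Theta(|U|)$, which is prohibitive when $U$ is, say, a large hypothesis class. This is precisely why I would use the hashing / succinct-histogram machinery of~\cite{bassily2017practical}: each user does $\poly(\log|U|)$ work, and the server, rather than writing down all of $U$, explicitly maintains only the $O(n)$ candidate heavy hitters surfaced by the protocol while answering any query $\hat c_u$ via the implicit frequency oracle in $\poly(n, \log|U|)$ time. One then checks that the per-element error bound quoted above still holds uniformly over $U$ under this construction (it does, since the estimator's sum-of-independent-contributions structure is unchanged), so the union-bound step goes through verbatim and the overall algorithm runs in $\poly(n, \log|U|)$ time.
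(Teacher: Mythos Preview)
Your proposal is correct and aligns with the paper's treatment: the paper does not prove this lemma at all but states it as a known consequence of the heavy-hitters algorithm of Bassily et al.~\cite{bassily2017practical} (with attribution to RAPPOR~\cite{erlingsson2014rappor}). Your sketch --- invoking that protocol's $\eps$-local-DP randomizer, deriving the per-element $O(\sqrt{n\log(1/\beta')}/\eps)$ deviation via a sum-of-independent-contributions concentration bound, and then union-bounding over $U$ --- is exactly the standard derivation the paper is implicitly deferring to, and your discussion of why the hashing/succinct-histogram machinery is needed for the $\poly(n,\log|U|)$ running time is the right justification for that clause.
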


\section{Global Stability and Pseudo-Global Stability}
  
We recall the notion of global stability of Bun et al.~\cite{BunLM20} and generalize it in two ways.
\begin{defn}[Global Stability~\cite{BunLM20}]
A learner $\BA$ is said to be \emph{$m$-sample $\alpha$-accurate $\eta$-globally stable} if there exists a hypothesis $h$ (depending on $\MD$) such that $\Err_{\MD}(h) \leq \alpha$ and \\$\Pr_{(x_1, y_1), \dots, (x_m, y_m) \sim \MD}[\BA((x_1, y_1), \dots, (x_m, y_m)) = h] \geq \eta$.
\end{defn}
We now present the first generalization.  Let $\MR$ be a distribution of random strings.  
\begin{defn}[Pseudo-Global Stability] \label{defn:pseudo-stable}
A learner $\BA$ is said to be \emph{$m$-sample $(\alpha, \beta)$-accurate $(\eta, \nu)$-pseudo-globally stable} if there exists a hypothesis $h_r$ for every $r \in \supp(\MR)$ (depending on $\MD$) such that $\Pr_{r \sim \MR}[\Err_{\MD}(h_r) \leq \alpha] \geq 1 - \beta$ and
\begin{align*}
\Pr_{r \sim \MR}\left[\Pr_{(x_1, y_1), \dots, (x_m, y_m) \sim \MD}[\BA((x_1, y_1), \dots, (x_m, y_m); r) = h_r] \geq \eta\right] \geq \nu.
\end{align*}
\end{defn}
We also generalize global stability in a slightly different manner, in order to capture the guarantees of~\cite{GGKM20}.
\begin{defn}[List Global Stability]
A learner $\BA$ is said to be \emph{$m$-sample $\alpha$-accurate $(L, \eta)$-list globally stable} if $\BA$ outputs a set of at most $L$ hypotheses and there exists a hypothesis $h$ (depending on $\MD$) such that $\Pr_{(x_1, y_1), \dots, (x_m, y_m) \sim \MD}[h \in \BA((x_1, y_1), \dots, (x_m, y_m))] \geq \eta$ and $\Err_{\MD}(h) \leq \alpha$.
\end{defn}

\subsection{Learners with Global Stability}

Bun et al.~\cite{BunLM20} give a globally stable learner in terms of the Littlestone dimension:

\begin{theorem}[\cite{BunLM20}]
Let $\alpha > 0$ and $C$ be any concept class with $\Ldim(C) = d$. Then, there exists a $(2^{O(d)}/\alpha)$-sample $\alpha$-accurate $2^{-O(d)}$-globally stable learner for $C$.
\end{theorem}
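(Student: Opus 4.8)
The plan is to route everything through the Standard Optimal Algorithm ($\SOA$) of online learning: since $\Ldim(C)=d$, the $\SOA$ for $C$ makes at most $d$ mistakes on every realizable sequence, and whenever it errs the Littlestone dimension of its version space strictly drops (the usual $\SOA$ argument: the true label realizes the child of smaller dimension, and that dimension is at most $\Ldim(V)-1$). I would first extract accuracy almost for free. Because $\vc(C)\le\Ldim(C)=d$, an online-to-batch reading of the $\SOA$ already yields a \emph{proper} base learner $\BA_0$: draw $m_0=\tilde{O}(d/\alpha)$ i.i.d.\ examples, run the $\SOA$, and output the \emph{canonical} consistent hypothesis $h^\star(V)$ determined by its final version space $V=\{c\in C:\ c\text{ consistent with the sample}\}$ — for instance the concept obtained by following the $\SOA$'s own predictions to convergence inside $V$. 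Since the target $c^\star$ lies in $V$, the set $V$ is nonempty, and as $h^\star(V)$ is consistent with $m_0$ realizable samples the realizable-case uniform-convergence bound (with $m_0=O((d\log(1/\alpha)+\log(1/\beta))/\alpha)$) gives $\Err_\MD(h^\star(V))\le\alpha$ with probability at least $0.9$. So $\BA_0$ is an $(\alpha,0.1)$-accurate learner whose output always lies in $C$.

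The heart of the proof is upgrading $\BA_0$ to a \emph{globally stable} learner, i.e.\ producing a fixed $h$ with $\Err_\MD(h)\le\alpha$ that is output with probability $2^{-O(d)}$. Let $\MP$ be the distribution of $\BA_0$'s output over its sample. Since good hypotheses carry at least $0.9$ of the mass of $\MP$, it suffices to show that $\BA_0$ produces only $2^{O(d)}$ distinct good hypotheses outside a $0.1$-probability event; pigeonhole then hands us a good $h$ with $\MP(h)\ge 0.9\cdot 2^{-O(d)}$. The way I would establish this is by contradiction through the Littlestone tree: if the output were spread over more than $2^{Cd}$ hypotheses for a large constant $C$, I would reconstruct a shattered tree for $C$ of depth exceeding $d$. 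The handle is that each of the $\le d$ mistakes made in a run of the $\SOA$ is a "branch": its point $x$ and its (incorrect) prediction outcome are, conditioned on the history, governed by a fixed distribution, and the version space after the mistake is $V|_{x\mapsto c^\star(x)}$, of strictly smaller Littlestone dimension. Pigeonholing over the $\le d$ branch decisions isolates, with probability $2^{-O(d)}$, a run whose mistake sequence is a \emph{fixed} list of point/label pairs, after which $V$ — hence $h^\star(V)$ — is pinned down; and two distinct "likely" outputs surviving this conditioning exhibit a genuine shatter point sitting below depth $d$ in the tree assembled from those branch points, the contradiction. It is in this translation that the blow-up to $2^{O(d)}/\alpha$ samples (so that none of the $d$ conditioning steps collapses the probability) and the stability parameter $2^{-O(d)}$ arise.

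I expect the "spread $\Rightarrow$ deep shattered tree" step to be the real fight: it is the only place the full strength of $\Ldim(C)=d$ enters, and it genuinely cannot be weakened to $\vc(C)=d$ — for thresholds on $\{1,\dots,N\}$ the VC dimension is $1$ while no fixed good hypothesis is output with probability more than $\approx 1/(\alpha N)$, so one must exploit how the $\SOA$ funnels its output, not merely the size or the consistency structure of $C$. Everything after global stability of $\BA_0$ is routine: if the tree argument only yields weaker constants, amplify by running $\BA_0$ a few times on fresh samples and outputting the empirical mode, at a $\poly$ cost in samples, to reach exactly the form in the statement.
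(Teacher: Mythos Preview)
The paper does not prove this theorem; it is quoted from \cite{BunLM20} and used as a black box (the paper's own development rests on the sharper list-stable version from \cite{GGKM20}). So the relevant comparison is to the actual Bun--Livni--Moran construction, and your plan departs from it precisely at the step you flag as ``the real fight.''

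That step has a genuine gap. You run the $\SOA$ once on a random sample and then claim that ``pigeonholing over the $\le d$ branch decisions isolates, with probability $2^{-O(d)}$, a run whose mistake sequence is a \emph{fixed} list of point/label pairs.'' But the mistake \emph{points} are random draws from $X$; there is no $2^{O(d)}$-size set of mistake sequences to pigeonhole over, and for a diffuse $\MD$ no single sequence need carry mass $2^{-O(d)}$. Conditioning only on the $\le d$ mistake \emph{labels} (which is a $2^d$-size set) does not pin down the output, since different mistake points with the same label pattern give different version spaces. Your next clause---two distinct likely outputs ``surviving this conditioning exhibit a shatter point''---then has no content: after conditioning on the full sequence the output is unique, and after conditioning on labels alone you have supplied no mechanism for locating a tree node. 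There is also a smaller inconsistency: your $V$ is defined as the set of concepts consistent with the \emph{entire} sample, so $h^\star(V)$ is not a function of the mistake sequence to begin with. Finally, the closing remark that one can ``amplify by outputting the empirical mode'' does not rescue anything: taking a mode of independent runs does not boost $\eta$-global stability when $\eta\ll 1/2$.

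The \cite{BunLM20} argument is not a one-shot $\SOA$ run followed by a counting argument; it is iterative. One maintains a growing deterministic prefix of labeled examples, runs $\SOA$ on the prefix followed by fresh i.i.d.\ samples, and checks whether the resulting learner is already $2^{-O(d)}$-stable. If not, two independent runs disagree with good probability at some concrete point $x$; appending $(x,y)$ to the prefix strictly drops the Littlestone dimension of the $\SOA$'s working class for at least one label $y$, and one recurses. After at most $d$ rounds stability must hold, because otherwise the accumulated distinguishing points \emph{are} a depth-$(d{+}1)$ shattered tree. The $2^{O(d)}/\alpha$ sample complexity and the $2^{-O(d)}$ stability parameter arise from the branching and probability-halving across these $d$ stages---the shattered tree is built explicitly, node by node, rather than extracted post hoc from the randomness of a single run.
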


Although not explicitly stated in this manner, the improved result of Ghazi et al.~\cite{GGKM20} proceeds by giving a list globally stable learner, where the stability parameter $\eta$ is $\Omega(1/d)$, the list size $L$ is $2^{(d/\alpha)^{O(1)}}$, and the sample complexity is $(d/\alpha)^{O(1)}$.

\begin{theorem}[\cite{GGKM20}] \label{thm:list-stable}
Let $\alpha > 0$ and $C$ be any concept class with $\Ldim(C) = d$. Then, there is a $(d/\alpha)^{O(1)}$-sample $\alpha$-accurate $\left(\exp\left(\left(d/\alpha\right)^{O(1)}\right), \Omega(1/d)\right)$-list globally stable learner for $C$.
\end{theorem}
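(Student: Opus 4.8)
The plan is not to prove anything new but to read the list-stability guarantee off the construction inside the proper $(\eps,\delta)$-DP learner of Ghazi et al.~\cite{GGKM20}: stripped of its privacy wrapper, their algorithm already produces a small-ish list of hypotheses one of which is accurate and is hit with probability $\Omega(1/d)$, which is exactly what \Cref{thm:list-stable} asserts. Their learner is organized around the Standard Optimal Algorithm $\SOA$ for online prediction. Since $\Ldim(C)=d$, running $\SOA$ on any realizable sequence incurs at most $d$ mistakes, and once the ordered list of mistake examples is fixed, the $\SOA$ predictor on every future point is determined; so feeding $\SOA$ a batch of $m_0=(d/\alpha)^{O(1)}$ i.i.d.\ samples from $\MD$ yields a predictor living in an explicit finite "effective class'' of bounded description (a compression scheme of size $\le d$). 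By standard compression/uniform-convergence bounds, $m_0=\poly(d/\alpha)$ samples already make this $\SOA$-predictor $\alpha$-accurate with respect to $\MD$ with probability $1-\beta_0$, for any target constant $\beta_0$.

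First I would recall the recursive tree construction (the $\ReduceTree$ routine). At a node we draw many independent batches, extract the $\SOA$-predictor from each, and let $\MP$ be the induced distribution over the finite effective class; we then branch on a dichotomy. \emph{Reducible node}: some hypothesis has $\MP$-mass at least a fixed threshold $\eta_0$ (chosen with $\eta_0>\beta_0$); declare the canonically chosen heaviest such hypothesis and stop on this branch. \emph{Irreducible node}: no hypothesis is that heavy; then one conditions on one further labeled example — morally, on the next $\SOA$ mistake landing at a specific point with its "surprising'' label — moving to the subclass consistent with that pair, which by the standard Littlestone argument (of $C_{x\to 0}$ and $C_{x\to 1}$ at least one has dimension $\le\Ldim(C)-1$, and on a mistake $\SOA$ is forced into it) strictly decreases the Littlestone dimension. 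Hence every root-to-leaf path reaches a reducible node within $d$ levels, since $\Ldim=0$ forces reducibility.

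Step two reads off the three parameters. The \emph{list} is the union, over all $\le d$ levels and all conditioning events that can arise along the way (at most $m_0$ labeled points per level, finitely many effective predictors per batch), of the hypotheses declarable at a reducible node; a crude count bounds its size by $\exp((d/\alpha)^{O(1)})$. For \emph{stability}, isolate the "correct path,'' along which the conditioning events track the genuine $\SOA$ mistakes on $\MD$; the first reducible node on it sits at some level $t^\star\le d$ and declares a hypothesis $h^\star$ depending only on $\MD$. A counting/averaging argument over the $\le d$ levels then shows a single run outputs $h^\star$ (equivalently, reaches that node and stabilizes there) with probability $\Omega(1/d)$: the aggregate probability mass over the relevant levels is $\Omega(1)$, hence $\Omega(1/d)$ on some level, and — the key gain over \cite{BunLM20} — because we only need $h^\star$ to lie in the \emph{list}, we pay this level count once (additively, a $1/d$ in probability) rather than once per level (which would collapse to $2^{-\Theta(d)}$). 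Accuracy $\err{\MD}{h^\star}\le\alpha$ is then immediate, since a hypothesis with $\MP$-mass $\ge\eta_0>\beta_0$ at a reducible node cannot be one of the inaccurate ones. This is exactly the claimed $m$-sample $\alpha$-accurate $(\exp((d/\alpha)^{O(1)}),\Omega(1/d))$-list global stability with $m=(d/\alpha)^{O(1)}$.

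I expect the main obstacle to be making the reducible/irreducible dichotomy quantitatively airtight and, in particular, getting the stability all the way down to $\Omega(1/d)$ rather than to $2^{-O(d)}$. One must pick $\eta_0,m_0$ so that simultaneously (i) a "heavy'' hypothesis at a reducible node really is output with probability $\Omega(\eta_0)$ on fresh batches — a concentration statement for the empirical batch-frequencies around $\MP$; (ii) an "irreducible'' configuration genuinely admits a dimension-decreasing conditioning occurring with non-negligible probability; and (iii) the accuracy, the list size $\exp(\poly(d/\alpha))$, and the $\Omega(1/d)$ stability all survive composition across the $\le d$ levels without the per-level losses multiplying. Pinning down the precise averaging that yields the additive (rather than multiplicative) dependence on the level count is the real content; the $\SOA$ mistake bound, the compression/uniform-convergence argument, and the recursion-tree bookkeeping are routine given the tools recalled above.
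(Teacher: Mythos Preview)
The paper does not give its own proof of \Cref{thm:list-stable}; it is stated as a citation to~\cite{GGKM20}, with the surrounding text noting only that the result is ``not explicitly stated in this manner'' there but can be read off their construction. So there is nothing in the present paper to compare your proposal against.

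Your sketch is a reasonable high-level account of the~\cite{GGKM20} argument: the $\SOA$-based tree with the reducible/irreducible split, the Littlestone-dimension drop at irreducible nodes bounding depth by $d$, and the averaging over at most $d$ levels to extract an $\Omega(1/d)$ stability rather than the $2^{-O(d)}$ of~\cite{BunLM20}. Since the paper treats this as a black box, any verification would in any case have to go back to~\cite{GGKM20} itself; your outline identifies the right ingredients and the right place where the quantitative care is needed (the choice of $\eta_0,m_0$ and the additive-versus-multiplicative loss across levels).
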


We will need a slight strengthening of the above result, where there is another parameter $\zeta > 0$ and we want to ensure that every hypothesis in the output list has  error at most $2\alpha$. This is stated below.

\begin{lemma} \label{lem:list-stable-with-restricted-accuracy}
Let $\alpha, \zeta > 0$ and $C$ be any concept class with $\Ldim(C) = d$. Then, there is a $(d \log(1/\zeta)/\alpha)^{O(1)}$-sample $\alpha$-accurate $\left(\exp\left(\left(d/\alpha\right)^{O(1)}\right), \Omega(1/d)\right)$-list globally stable learner for $C$ such that with probability $1 - \zeta$, every hypothesis $h'$ in the output list satisfies $\Err_\MD(h') \leq 2\alpha$.
\end{lemma}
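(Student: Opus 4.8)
The plan is to obtain $\BA$ by running the list globally stable learner $\BA'$ of \Cref{thm:list-stable} on part of the sample, and then using a \emph{fresh} part of the sample to prune away hypotheses whose estimated error is too large. Write $m_1 = (d/\alpha)^{O(1)}$ for the sample complexity of $\BA'$, and let $L = \exp((d/\alpha)^{O(1)})$ and $\eta = \Omega(1/d)$ be its list size and stability parameters; let $h$ be the hypothesis witnessing list global stability, so $\Err_\MD(h) \le \alpha$ and $\Pr_{S_1 \sim \MD^{m_1}}[h \in \BA'(S_1)] \ge \eta$. Set $m_2 := \lceil 2(\ln L + \ln(1/\zeta))/\alpha^2 \rceil$ and define $\BA$ on $m := m_1 + m_2$ i.i.d.\ samples as follows: split them into the first $m_1$, call them $S_1$, and the remaining $m_2$, call them $S_2$; compute $\mathcal{L} := \BA'(S_1)$; for each $h' \in \mathcal{L}$ let $\hat\Err_{S_2}(h')$ be its empirical error on $S_2$; and output $\mathcal{L}' := \{\, h' \in \mathcal{L} : \hat\Err_{S_2}(h') \le \tfrac{3}{2}\alpha \,\}$. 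The threshold $\tfrac{3}{2}\alpha$ sits strictly between $\alpha$ and $2\alpha$, which gives Hoeffding slack in both directions.

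I would then verify the two required guarantees, keeping the same $h$ as the witness of list global stability. Accuracy of $h$ (that is, $\Err_\MD(h) \le \alpha$) is inherited verbatim. For the stability bound: since $S_1$ and $S_2$ are independent, $\Pr[h \in \mathcal{L}'] = \Pr_{S_1}[h \in \mathcal{L}] \cdot \Pr_{S_2}[\hat\Err_{S_2}(h) \le \tfrac{3}{2}\alpha]$, where the first factor is $\ge \eta$ and the second is $\ge 1 - \exp(-m_2\alpha^2/2) \ge 1 - \zeta/L \ge \tfrac12$ by Hoeffding (using $\Err_\MD(h) \le \alpha < \tfrac32\alpha$ and the choice of $m_2$); hence $\Pr[h \in \mathcal{L}'] \ge \eta/2 = \Omega(1/d)$, and trivially $|\mathcal{L}'| \le |\mathcal{L}| \le L = \exp((d/\alpha)^{O(1)})$, so $\BA$ is $\alpha$-accurate and $(\exp((d/\alpha)^{O(1)}), \Omega(1/d))$-list globally stable. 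For the pruning guarantee, condition on $S_1$, so that $\mathcal{L}$ is a \emph{fixed} family of at most $L$ hypotheses: for each $h' \in \mathcal{L}$ with $\Err_\MD(h') > 2\alpha$, Hoeffding gives $\Pr_{S_2}[h' \in \mathcal{L}'] = \Pr_{S_2}[\hat\Err_{S_2}(h') \le \tfrac32\alpha] \le \exp(-m_2\alpha^2/2)$, and a union bound over these $\le L$ hypotheses yields $\Pr_{S_2}[\exists\, h' \in \mathcal{L}',\ \Err_\MD(h') > 2\alpha] \le L\,\exp(-m_2\alpha^2/2) \le \zeta$ by the choice of $m_2$. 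Since this holds for every realization of $S_1$, it holds unconditionally, which is the claimed $(1-\zeta)$-probability guarantee.

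It remains to check the sample complexity: $m = m_1 + m_2 = (d/\alpha)^{O(1)} + O\left((\ln L + \log(1/\zeta))/\alpha^2\right) = (d\log(1/\zeta)/\alpha)^{O(1)}$, since $\ln L = (d/\alpha)^{O(1)}$ (one may first assume $\zeta \le 1/2$ without loss of generality to make this parse cleanly for $\zeta$ close to $1$). This lemma is essentially a post-processing of \Cref{thm:list-stable}, so there is no serious obstacle; the one point requiring care — and the only idea beyond the construction itself — is that in the pruning step the union bound must be taken over the \emph{realized} list $\mathcal{L}$ after conditioning on $S_1$, which has at most $L$ members and therefore costs only $\ln L = \poly(d/\alpha)$ extra samples, rather than over all of $2^X$, which would blow up $m_2$ by a factor of $|X|$. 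Independence of $S_1$ and $S_2$ is exactly what licenses this conditioning and also what makes the stability computation factor cleanly; everything else is a routine Hoeffding-and-union-bound calculation.
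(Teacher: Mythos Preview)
Your proposal is correct and takes essentially the same approach as the paper: run the learner of \Cref{thm:list-stable}, then use $O(\log(L/\zeta)/\alpha^2)$ fresh samples to prune the list at the empirical-error threshold $1.5\alpha$, invoking Hoeffding plus a union bound over the at most $L$ hypotheses in the realized list. Your write-up is in fact more careful than the paper's sketch, spelling out the independence/conditioning that makes both the stability factorization and the union bound go through.
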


\begin{proof}[Proof Sketch]
This can be done by first running the algorithm in \Cref{thm:list-stable} to get a set $H$ of size at most $L = \exp\left(\left(d/\alpha\right)^{O(1)}\right)$. Then, we draw additional $100 \cdot \log(L/\zeta) / \alpha^2$  samples $S$. Finally, we output $H' = \{h' \in H \mid \Err_S(h') \leq 1.5\alpha\}$. By the Chernoff bound, with probability $1 - \zeta$, every hypothesis $h' \in H$ satisfies $|\Err_\MD(h') - \Err_S(h')| \leq 0.5\alpha$, which yields the desired guarantees.
\end{proof}

\section{Approximate-DP Learner}

In this section, we prove \Cref{thm:apx-dp-learner-generic}.  We first show how to go from list global stability to pseudo-global stability using correlated sampling (\Cref{thm:stability-to-pseudo}).  We then show how to go from pseudo-global stability to an approximate-DP learner using DP selection (\Cref{thm:stability-to-apx-dp}).

\subsection{From List Global Stability to Pseudo-Global Stability}

\begin{theorem} \label{thm:stability-to-pseudo}
Let $\alpha, \beta, \eta \in (0, 0.1), L \in \BN$, and $C$ a concept class. Suppose that there exists a learner $\BA$ that is $m$-sample $\alpha/2$-accurate $(L, \eta)$-list globally stable. Furthermore, with probability $1 - \left(\frac{\beta \cdot \eta^2}{10^6 \cdot \log(L/\eta)}\right)^2$, every hypothesis $h'$ in the output list satisfies $\Err_\MD(h') \leq \alpha$.  Then, there exists a learner $\BA'$ that is $m'$-sample $(\alpha, \beta)$-accurate $(1 - \beta, 1 - \beta)$-pseudo-globally stable, where $m' =  O_{\beta}\left(m \cdot \log^3(L/\eta) / \eta^2\right)$.
\end{theorem}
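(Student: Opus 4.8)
The plan is to run the list-globally-stable learner $\BA$ many times on independent batches of samples, aggregate the resulting lists into an empirical distribution over hypotheses, and then apply correlated sampling (\Cref{thm:correlated-sampling}) so that different users, seeing i.i.d.\ (but distinct) data from the same $\MD$, end up selecting the same hypothesis with probability close to one. Concretely, I would have $\BA'$ do the following: draw $k = \Theta(\log(L/\eta)/\eta)$ fresh batches of $m$ samples each, run $\BA$ on each batch to get lists $H_1,\dots,H_k$, and form the multiset $H_1 \cup \dots \cup H_k$; from this build a probability distribution $\MP$ on hypotheses — most naturally, $\MP$ is the uniform distribution over the multiset, or (to control variation distance better) a distribution that puts most of its mass on the "frequent" hypotheses and is carefully normalized. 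Then output $\CS(\MP; r')$ where $r'$ is the shared randomness. The total sample complexity is $m' = k \cdot m$ times an extra $O(\log(1/\beta))$-ish factor, matching $O_\beta(m \log^3(L/\eta)/\eta^2)$ up to how the parameters are balanced (the $\log^3$ and $1/\eta^2$ suggest $k \approx \log^2(L/\eta)/\eta^2$ repetitions, with an extra $\log$ for the accuracy filtering from \Cref{lem:list-stable-with-restricted-accuracy}).

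The first key step is a concentration argument: since $h^*$ (the globally stable hypothesis with $\Err_\MD(h^*) \le \alpha/2$) appears in $\BA$'s list with probability $\ge \eta$, by a Chernoff bound over the $k$ independent runs, with high probability $h^*$ appears in at least $\approx \eta k / 2$ of the lists; so its empirical frequency, hence its mass under $\MP$, is $\gtrsim \eta / (\text{normalization})$. The second, and this is the crux, is the total-variation stability of the empirical distribution $\MP$ across two independent draws $\MP_1, \MP_2$ of the whole procedure. This is where I'd need to be careful: the supports of $\MP_1$ and $\MP_2$ could be largely disjoint on rarely-occurring hypotheses, so naively $d_{\TV}(\MP_1, \MP_2)$ could be close to $1$. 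The fix is to truncate: only keep hypotheses whose empirical count exceeds a threshold $\approx \eta k / 4$, and argue that (i) with high probability $h^*$ survives truncation, and (ii) only $O(1/\eta)$ hypotheses can survive, so each surviving probability $p_j(h)$ is well-concentrated (it's an average of $k$ bounded indicators), giving $\E |p_1(h) - p_2(h)| = O(\sqrt{p(h)/k})$ per hypothesis and hence $\E\, d_{\TV}(\MP_1,\MP_2) = O(\sqrt{1/(\eta k)})$ after summing over the $O(1/\eta)$ surviving hypotheses and accounting for normalization drift. Choosing $k$ large enough makes this $O(\beta^2)$.

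Given $\E_{\text{two indep.\ runs}} d_{\TV}(\MP_1, \MP_2) \le \beta^2 / (\text{const})$, I'd finish as follows. By Markov, the "bad" event (over the two runs' data) that $d_{\TV}(\MP_1,\MP_2)$ is more than $\beta^2$ has probability $\le O(\beta)$; combined with the error guarantee (probability $1 - (\beta\eta^2/\cdots)^2$ that every list hypothesis has error $\le \alpha$, which via union bound over the $k$ runs still holds with probability $1-o(\beta)$), I get: for a typical $r'$, there is a fixed $h_{r'} := \CS(\MP^\star; r')$ (where $\MP^\star$ is, say, the "idealized" distribution or one can define $h_{r'}$ via the most likely outcome) with $\Err_\MD(h_{r'}) \le \alpha$ and $\Pr_{\text{data}}[\CS(\MP; r') = h_{r'}] \ge 1 - O(\beta)$ by the correlated-sampling error guarantee and a triangle inequality over pairs of runs. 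Formally: two independent runs give the same output with probability $\ge 1 - \kappa\, \E[d_{\TV}] \ge 1 - O(\beta^2)$ (over data and $r'$ jointly); a standard "collision probability" argument then shows there is a mode $h_{r'}$ hit with probability $\ge 1 - O(\beta)$ for a $1-O(\beta)$ fraction of $r'$. Re-parametrizing the constants ($\beta \leftarrow \beta/C$) yields the claimed $(1-\beta, 1-\beta)$-pseudo-global stability.

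The main obstacle I anticipate is precisely controlling $d_{\TV}(\MP_1,\MP_2)$: one must choose the truncation threshold and the normalization of $\MP$ so that simultaneously $h^*$ is retained, only $O(1/\eta)$ hypotheses survive, the per-hypothesis deviations are $O(\sqrt{p/k})$, and the renormalization constants for $\MP_1$ and $\MP_2$ are themselves close — a somewhat delicate bookkeeping exercise, and the likely source of the $\log^3$ factor (one $\log$ from the failure probability of the count concentration for each of $L$ hypotheses, one from accuracy filtering, one from boosting $\beta$).
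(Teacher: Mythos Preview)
Your high-level strategy---run $\BA$ repeatedly, aggregate into a distribution, apply correlated sampling---matches the paper, and your collision-probability finish is a valid alternative to the paper's cleaner device of comparing each run's empirical distribution to a single \emph{fixed} idealized distribution $\MP$ (so that $h_{r'} := \CS(\MP; r')$ is well-defined from the outset). But the step you flag as ``somewhat delicate bookkeeping'' is in fact the main technical hurdle, and the distributions you propose will not get past it.

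The issue is the truncation boundary. Suppose some hypothesis $f$ has true inclusion probability $Q(f) := \Pr_S[f \in \BA(S)]$ sitting exactly at your threshold. Then $f$ survives truncation in roughly half the runs and not in the other half. Under either of your candidates (uniform on survivors, or frequency-weighted on survivors), whenever $f$ does survive it receives mass \emph{comparable to that of} $h^*$: e.g.\ if only $h^*$ and $f$ survive, uniform gives each mass $1/2$, and frequency-weighting gives $f$ mass $\tau/(\eta+\tau) \approx 1/3$. Thus two independent runs can have $d_{\TV}(\MP_1,\MP_2) = \Omega(1)$, and increasing $k$ does not help---the instability is in the \emph{support}, not in the counts, and your per-hypothesis $O(\sqrt{p/k})$ bound ignores this discontinuity. (Relatedly, ``only $O(1/\eta)$ hypotheses survive'' is off by a factor of $L$: each list has size up to $L$, so up to $O(L/\eta)$ hypotheses can exceed the threshold.)

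The paper's fix is a softmax: set $\hat{\MP}(h) \propto \exp(\gamma \cdot \hat Q(h))$ on the truncated support $H$, with $\gamma = \Theta(\log(L/\eta)/\eta)$. A boundary hypothesis with $Q(f)\approx 1.1\tau$ then gets weight $\exp(\gamma \cdot 1.1\tau)$ while $h^*$ (with $Q(h^*)\geq 2\tau$) gets weight $\exp(2\gamma\tau)$, so $f$'s relative mass is at most $\exp(-0.9\gamma\tau) \ll \beta^2$ regardless of whether $f$ made it into $H$. The paper also decouples the two uses of repetition: a first batch of $k_1 = \Theta(\log(L/\eta)/\eta^2)$ runs fixes the support $H$ (with a sandwich $H_{\geq 1.1\tau} \subseteq H \subseteq H_{\geq 0.9\tau}$), and a fresh second batch of $k_2 = \Theta(\gamma^2\log(L/\eta)/\beta^4)$ runs estimates $\hat Q$ on that fixed $H$; this separation makes the concentration of $\hat Q$ around $Q$ clean and is what produces the $\log^3(L/\eta)/\eta^2$ in $m'$.
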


Before we prove \Cref{thm:stability-to-pseudo}, we note that with \Cref{lem:list-stable-with-restricted-accuracy}, it gives the following corollary.

\begin{corollary} \label{cor:stability-from-littlestone}
Let $\alpha, \beta \in \mathbb{R}_{> 0}$ and $C$ be any concept class with finite $\Ldim(C)$. Then, there exists a learner $\BA$ that is $m$-sample $(\alpha, \beta)$-accurate and $(1 - \beta, 1 - \beta)$-pseudo-globally stable, where $m = O_{\beta}((\Ldim(C) / \alpha)^{O(1)})$.
\end{corollary}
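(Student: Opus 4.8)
The plan is to derive \Cref{cor:stability-from-littlestone} by combining \Cref{lem:list-stable-with-restricted-accuracy} with \Cref{thm:stability-to-pseudo} in a straightforward way, choosing the parameters of the former so that they satisfy the hypotheses of the latter. First I would fix the target accuracy $\alpha$ and failure probability $\beta$, set $d = \Ldim(C)$, and invoke \Cref{lem:list-stable-with-restricted-accuracy} with accuracy parameter $\alpha/2$ in place of $\alpha$. This produces a learner that is $m_0$-sample $(\alpha/2)$-accurate $(L, \eta)$-list globally stable, where $L = \exp((d/\alpha)^{O(1)})$ and $\eta = \Omega(1/d)$, and where the number of samples is $m_0 = (d \log(1/\zeta)/\alpha)^{O(1)}$ for a parameter $\zeta$ we get to choose.

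The key step is matching the ``every hypothesis in the list has error at most $\alpha$'' requirement in \Cref{thm:stability-to-pseudo}: that theorem asks for this to hold with probability $1 - \left(\frac{\beta \eta^2}{10^6 \log(L/\eta)}\right)^2$, whereas \Cref{lem:list-stable-with-restricted-accuracy} guarantees the slightly stronger bound that \emph{every} list hypothesis has error at most $2 \cdot (\alpha/2) = \alpha$ with probability $1 - \zeta$. So I would simply set $\zeta := \left(\frac{\beta \eta^2}{10^6 \log(L/\eta)}\right)^2$. Since $\eta = \Omega(1/d)$ and $\log(L/\eta) = (d/\alpha)^{O(1)}$, this value of $\zeta$ is at least $1/(d/(\alpha\beta))^{O(1)}$, so $\log(1/\zeta) = O(\log(d/(\alpha\beta)))$; plugging back into $m_0 = (d\log(1/\zeta)/\alpha)^{O(1)}$ still gives $m_0 = O_\beta((d/\alpha)^{O(1)})$.

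Then I would apply \Cref{thm:stability-to-pseudo} with this learner (noting its $\alpha/2$-accuracy is exactly what the hypothesis ``$m$-sample $\alpha/2$-accurate'' requires), obtaining a learner $\BA'$ that is $m'$-sample $(\alpha, \beta)$-accurate $(1-\beta, 1-\beta)$-pseudo-globally stable, with $m' = O_\beta(m_0 \cdot \log^3(L/\eta)/\eta^2)$. Since $\log^3(L/\eta)/\eta^2 = (d/\alpha)^{O(1)}$ and $m_0 = O_\beta((d/\alpha)^{O(1)})$, the product is still $O_\beta((d/\alpha)^{O(1)}) = O_\beta((\Ldim(C)/\alpha)^{O(1)})$, which is the claimed bound.

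The only mildly delicate point — and the ``main obstacle,'' such as it is — is bookkeeping the $O(1)$ exponents through the substitution of $\zeta$: one must check that setting $\zeta$ to a quantity that itself depends polynomially on $1/d$, $1/\alpha$, and $\beta$ does not blow up $m_0$ beyond $\poly(d/\alpha)$ (with a $\beta$-dependent constant), which it does not because $\zeta$ enters $m_0$ only logarithmically. Everything else is a direct chaining of the two black boxes, so no genuinely new idea is needed here.
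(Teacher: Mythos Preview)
Your proposal is correct and is exactly the approach the paper takes: the paper simply remarks that the corollary follows by combining \Cref{lem:list-stable-with-restricted-accuracy} with \Cref{thm:stability-to-pseudo}, and you have spelled out the parameter choices and the routine bookkeeping that make this combination go through. There is nothing to add.
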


\begin{proof}[Proof of \Cref{thm:stability-to-pseudo}]
Let $\tau = 0.5\eta, \gamma = \frac{10^6 \log(L/(\beta \tau))}{\tau}, k_1 = \frac{10^6 \log(L/(\beta \tau))}{\tau^2}$, and $k_2 = \lceil \frac{10^6 \gamma^2 \cdot \log(L/(\tau \beta))}{\beta^4} \rceil$. Let $\CS$ be a correlated sampling strategy for $2^X$ and let $\MR'$ be the (public) randomness it uses, as in \Cref{thm:correlated-sampling}.  \Cref{alg:approxstable} presents our learner $\BA'$.

\begin{algorithm}[ht]
\caption{Pseudo-Globally Stable Learner $\BA'$.
\label{alg:approxstable}}
%\small
\begin{algorithmic}
\FOR{$i = 1, \dots, k_1$}
\STATE Draw $S_i \sim \MD^m$, run $\BA$ on $S_i$ to get a set $H_i$
\ENDFOR
\STATE Let $H$ be the set of all $f \in 2^X$ that appears in at least $\tau \cdot k_1$ of the sets $H_1, \dots, H_{k_1}$
\FOR{$j = 1, \dots, k_2$}
\STATE
Draw $T_j \sim \MD^m$, run $\BA$ on $T_j$ to get a set $G_j$
\ENDFOR
\FOR{$h \in H$}
\STATE $\mbox{Let } \hat{Q}_{H, G_1, \dots, G_{k_2}}(h) = \frac{|\{j \in [k_2] \mid h \in G_j\}|}{k_2}$
\ENDFOR
\STATE Let $\hat{\MP}_{H, G_1, \dots, G_{k_2}}$ be the probability distribution on $2^X$ defined by
\begin{align*}
\hat{\MP}_{H, G_1, \dots, G_{k_2}}(h) =
\begin{cases}
\frac{\exp(\gamma \cdot \hat{Q}_{G_1, \dots, G_{k_2}}(h))}{\sum_{h' \in H} \exp(\gamma \cdot \hat{Q}_{G_1, \dots, G_{k_2}}(h'))} & \text{ if } h \in H, \\
0 & \text{ otherwise.}
\end{cases}
\end{align*}
\STATE Output $\CS(\hat{\MP}_{H, G_1, \dots, G_j}; r')$, where $r' \sim \MR'$
\end{algorithmic}
\end{algorithm}

Notice that the number of samples used in $\BA'$ is $m \cdot (k_1 + k_2) = m \cdot O_{\beta}(\log^3(L/\eta) / \tau^2)$ as claimed.

\paragraph{(Accuracy Analysis)}
Since we assume that the output of $\BA$ consists only of hypotheses with distributional error at most $\alpha$ with probability $1 - \beta / k_1$, a union bound implies that this holds for all hypotheses in $H$ with probability $1 - \beta$. This yields the desired $(\alpha, \beta)$-accuracy of the algorithm.

\paragraph{(Pseudo-Global Stability Analysis)}
For this, we need a few additional notation. First, for every $h \in 2^X$, we let $Q(h)$ denote $\Pr_{S \sim \MD^m}[h \in \BA(S)]$. Moreover, let $H_{\geq 1.1\tau} = \{h \in 2^X \mid Q(h) \geq 1.1\tau\}$ and similarly $H_{\geq 0.9\tau} = \{h \in 2^X \mid Q(h) \geq 0.9\tau\}$. A crucial property we will use is that $H$ is w.h.p. sandwiched between $H_{\geq 1.1\tau}$ and $H_{\geq 0.9\tau}$, as stated below.

\begin{lemma} \label{lem:candidate-sandwich}
Let $\mathscr{E}$ denote the event that $H_{\geq 1.1\tau} \subseteq H \subseteq H_{\geq 0.9\tau}$. Then,  
\begin{align*}
\Pr[\mathscr{E}] \geq 1 - \beta^2 / 30,
\end{align*}
where the probability is over the randomness of $S_1, \dots, S_{k_1}$ and that of $\BA$ on these datasets.
\end{lemma}

\begin{proof}[Proof of \Cref{lem:candidate-sandwich}]
We will separately argue that $\Pr[H_{\geq 1.1\tau} \nsubseteq H] \leq \beta^2 / 60$ and $\Pr[H \nsubseteq H_{\geq 0.9\tau}] \leq \beta^2 / 60$.  A union bound then yields the claimed statement.

To prove the first bound, observe that since $\BA$ outputs a set of size at most $L$, $|H_{\geq 1.1\tau}| \leq L/(1.1\tau) < L/\tau$. Consider each $f \in H_{\geq 1.1\tau}$; notice that $\bone[f \in H_i]$ is simply an i.i.d. Bernoulli random variable with success probability  $Q(f) \geq 1.1\tau$. Hence, by the Hoeffding inequality, we have
\begin{align*}
\Pr[f \notin H] \leq \exp\left(-0.02 \tau^2 k_1\right) < 0.001 \beta^2\tau / L,
\end{align*}
where the last inequality follows from our choice of $\tau, k_1$. Taking a union bound over all $f \in H_{\geq 1.1\tau}$ concludes our proof for the first inequality.

For the second inequality, consider the set $H_{< 0.9\tau} := 2^X \setminus H_{\geq 0.9\tau}$. Since each element $f \in H_{< 0.9\tau}$ satisfies $Q(f) < 0.9\tau$, we may partition\footnote{A simple way is to start with a singleton partition and then merge any two parts whose total $Q(\cdot)$ is less than $0.9\tau$; in the end, we will left with a partition where all but at most one part has weight at least $0.45\tau$.} $H_{< 0.9\tau}$ into $H^1_{<0.9\tau} \cup \cdots \cup H^q_{0.9\tau}$ such that $\sum_{f \in H^j_{<0.9\tau}} Q(f) < 0.9\tau$ for all $j \in [q]$ and $q \leq L/(0.45\tau) + 1 < 4L/\tau$. Fix $j \in [q]$; notice that
\begin{align*}
\Pr[H \cap H^j_{< 0.9\tau} \ne \emptyset] \leq \Pr[|\{i \in [k_1] \mid H_i \cap H^j_{< 0.9\tau} \ne \emptyset\}| \geq \tau k_1].
\end{align*}
Now, each $\bone[H_i \cap H^j_{< 0.9\tau} \ne \emptyset]$ is an i.i.d. Bernoulli random variable with success probability at most $\sum_{f \in H^j_{< 0.9\tau}} Q(f) < 0.9\tau$. Thus, we can apply the Hoeffding inequality to conclude that
\begin{align*}
\Pr[H \cap H^j_{< 0.9\tau} \ne \emptyset] \leq \exp(-0.02\tau^2 k_1) < 0.001\beta^2 \tau / L.
\end{align*}
Taking a union bound over all $j \in [q]$, we have $\Pr[H \cap H_{< 0.9\tau} \ne \emptyset] < 0.01\beta^2$. This completes our proof of the second inequality.
\end{proof}

Next, let $\MP$ be the probability distribution on $2^X$ defined by
\begin{align*}
\MP(f) =
\begin{cases}
\frac{\exp(\gamma \cdot Q(f))}{\sum_{f' \in H_{\geq 0.9\tau}} \exp(\gamma \cdot Q(f'))} & \text{ if } f \in H_{\geq 0.9\tau} \\
0 & \text{ otherwise.}
\end{cases}.
\end{align*}
Furthermore, let $\MP_H$ be the probability distribution on $2^X$ defined by
\begin{align*}
\MP_H(f) =
\begin{cases}
\frac{\exp(\gamma \cdot Q(f))}{\sum_{f' \in H} \exp(\gamma \cdot Q(f'))} & \text{ if } f \in H, \\
0 & \text{ otherwise.}
\end{cases}
\end{align*}
Once again, notice that $\MP$ is independent of the run of the algorithm (i.e., it only depends on $\BA$), whereas $\MP_H$ can vary on different runs, depending on $H$. Our first component of the proof is to argue that $\MP$ and $\MP_H$ are often close:
\begin{lemma} \label{lemma:restriction-prob-close-to-true}
When $\mathscr{E}$ holds, we have $d_{\TV}(\MP, \MP_H) \leq \beta^2 / 30$.
\end{lemma}
\begin{proof}[Proof of \Cref{lemma:restriction-prob-close-to-true}]
Recall that from the assumption of list global stability, there exists $h$ such that $Q(h) \geq \eta = 2\tau$. When $\mathscr{E}$ holds, $H$ is a subset of $H_{\geq 0.9\tau}$, meaning that $\MP_H$ is the conditional probability of $\MP$ on $H$. Thus, we have
\begin{align*}
d_{\TV}(\MP, \MP_H) 
& \leq \MP(H \setminus H_{\geq 0.9\tau}) 
= \frac{\sum_{f \in H_{\geq 0.9\tau} \setminus H} \exp(\gamma \cdot Q(f))}{\sum_{f' \in H_{\geq 0.9\tau}} \exp(\gamma \cdot Q(f'))} 
\overset{(a)}{\leq} \frac{\sum_{f \in H_{\geq 0.9\tau} \setminus H_{\geq 1.1\tau}} \exp(\gamma \cdot Q(f))}{\sum_{f' \in H_{\geq 0.9\tau}} \exp(\gamma \cdot Q(f'))} \\
& \leq \frac{\sum_{f \in H_{\geq 0.9\tau} \setminus H_{\geq 1.1\tau}} \exp(\gamma \cdot 1.1\tau)}{\sum_{f' \in H_{\geq 0.9\tau}} \exp(\gamma \cdot Q(f'))} 
\leq \frac{|H_{\geq 0.9\tau}| \cdot \exp(\gamma \cdot 1.1\tau)}{\exp(\gamma \cdot Q(h))} \\
& \overset{(b)}{\leq} |H_{\geq 0.9\tau}| \cdot \exp(-\gamma \cdot 0.9\tau) 
\leq L / (0.9\tau) \cdot \exp(-\gamma \cdot 0.9 \tau) 
\overset{(c)}{\leq} \beta^2 / 30,
\end{align*}
where inequality (a) follows from $H_{\geq 1.1\tau} \subseteq H$ (which in turns holds because of $\mathscr{E}$), inequality (b) follows since $Q(h) \geq 2\tau$, and inequality (c) follows from our choice of $\gamma$.
\end{proof}

Next, we show that $\MP_H$ is often close to its ``empirical'' version $\hat{\MP}_{H, G_1, \dots, G_{k_2}}$.
\begin{lemma} \label{lemma:restriction-prob-close-to-empirical}
$\E[d_{\TV}(\MP_H, \hat{\MP}_{H, G_1, \dots, G_{k_2}})] \leq \beta^2 / 30$ where the probability is over the randomness of $T_1, \dots, T_{k_2}$ and that of $\BA$'s executions on these datasets.
\end{lemma}

\begin{proof}[Proof of \Cref{lemma:restriction-prob-close-to-empirical}]
From how $H$ is selected and from the assumption that the output of $\BA$ has size at most $L$, we have $|H| \leq L / \tau$. Now, fix $f \in H$.  Note that $\hat{Q}_{H, G_1, \dots, G_{k_2}}(f)$ is simply an average of $k_2$ i.i.d. Bernoulli random variables with success probability $Q(f)$.  Using the Hoeffding inequality,
\begin{align} \label{eq:freq-diff}
\Pr\left[|\hat{Q}_{H, G_1, \dots, G_{k_2}}(f) - Q(f)| > 100\sqrt{\frac{\log(L/(\tau \beta))}{k_2}}\right] \leq \frac{\beta^2}{60 \cdot (L / \tau)}.
\end{align}
By a union bound over all $f \in H$, we can conclude that with probability $1-\beta^2/60$ we have $|\hat{Q}_{H, G_1, \dots, G_{k_2}}(f) - Q(f)| \leq 100\sqrt{\frac{\log(L/(\tau \beta))}{k_2}}$ for all $f \in H$. When this holds, we have
\begin{align*}
& d_{\TV}(\MP_H, \hat{\MP}_{H, G_1, \dots, G_{k_2}}) = \sum_{h \in h} \MP_H(h) \cdot \max\left\{0, \left(\frac{\hat{\MP}_{H, G_1, \dots, G_{k_2}}(h)}{\MP_H(h)} - 1\right)\right\} \\
& \overset{\eqref{eq:freq-diff}}{\leq} \sum_{h \in h} \MP_H(h) \cdot \left(\exp\left({\gamma \cdot 100\sqrt{\frac{\log(L/(\tau \beta))}{k_2}}}\right) - 1\right) 
\leq \exp\left({\gamma \cdot 100\sqrt{\frac{\log(L/(\tau \beta))}{k_2}}}\right) - 1 \\
& \leq 200\gamma \sqrt{\frac{\log(L/(\tau \beta))}{k_2}} \leq \beta^2 / 60,
\text{ from our choice of } k_2.
\qedhere
\end{align*}
\end{proof}

Combining \Cref{lem:candidate-sandwich,lemma:restriction-prob-close-to-true,lemma:restriction-prob-close-to-empirical}, we can conclude that $\E[d_{\TV}(\MP, \hat{\MP}_{H, G_1, \dots, G_{k_2}})] \leq \beta^2/10$ where the expectation is over all the randomness involved in the algorithm except $r'$. Now, let $h_{r'} = \CS(P; r')$. From the error guarantee of correlated sampling in \Cref{thm:correlated-sampling}, we can conclude that $$\Pr_{r' \sim \MR', S_1, \dots, S_{k_1}, T_1, \dots, T_{k_2}}[\BA'(S_1, \dots, S_{k_1}, T_1, \dots, T_{k_2}; r') \ne h_{r'}] \leq \beta^2/5.$$ 
Thus, applying Markov inequality, we get the pseudo-global stability of $\BA$
\[
\Pr_{r' \sim \MR'}\left[\Pr_{S_1, \dots, S_{k_1}, T_1, \dots, T_{k_2}}[\BA'(S_1, \dots, S_{k_1}, T_1, \dots, T_{k_2}; r') = h_{r'}] \geq 1 - \beta\right] \geq 1 - \beta.
\qquad
\qedhere
\]
\end{proof}

\subsection{From Pseudo-Global Stability to Approximate-DP Learner in the Central Model}

In this section we prove \Cref{thm:stability-to-apx-dp} that allows us to convert any pseudo-globally stable learner to an approximate-DP learner. Combining  \Cref{thm:stability-to-apx-dp} and \Cref{cor:stability-from-littlestone} yields Theorem~\ref{thm:apx-dp-learner-generic}.

\begin{theorem} \label{thm:stability-to-apx-dp}
Let $\alpha, \beta \in (0, 0.1)$ and $C$ a concept class. Suppose that there exists a learner $\BA$ that is $m$-sample $(\alpha, \beta/3)$-accurate $(0.9, 1-\beta/3)$-pseudo-globally stable.  Then, for any $\eps \in (0, 1)$, there is an $(\eps, \delta)$-DP $(\alpha, \beta)$-accurate learner for $C$ in the central model that requires $n = O\left(\log(1/(\delta \beta)) / \eps\right)$ users where each user has $m$ samples.  The learner runs in time $\poly(\TIME(\BA), n, d)$.
\end{theorem}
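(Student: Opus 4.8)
The plan is to instantiate the generic framework from the proof overview. First I would draw the public randomness $r \sim \MR$ once and broadcast it to all $n$ users; each user $i$ then computes $h_i := \BA(S_i; r)$ from their own block of $m$ i.i.d.\ samples $S_i \sim \MD^m$; and finally the analyzer runs the $(\eps,\delta)$-DP selection algorithm of \Cref{lem:apx-selection} on the multiset $\{h_1,\dots,h_n\}$ and outputs whichever hypothesis it returns. Privacy is essentially immediate: for any fixed $r$, two neighboring datasets produce tuples $(h_1,\dots,h_n)$ that differ in a single coordinate, so \Cref{lem:apx-selection} run on them is $(\eps,\delta)$-indistinguishable, and averaging over the data-independent draw of $r$ (together with post-processing) preserves $(\eps,\delta)$-DP. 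For the running time, each user spends $\TIME(\BA)$, the selection step costs $\poly(n, \log|U|)$, and since any hypothesis $\BA$ outputs has description length at most $\TIME(\BA)$, the total is $\poly(\TIME(\BA), n, d)$.

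For accuracy, I would call $r$ \emph{good} if both $\Err_\MD(h_r) \leq \alpha$ and $\Pr_{S \sim \MD^m}[\BA(S; r) = h_r] \geq 0.9$ hold. By the two parts of pseudo-global stability and a union bound, $\Pr_{r \sim \MR}[r\text{ is good}] \geq 1 - 2\beta/3$. Conditioning on a good $r$, the indicators $\bone[h_i = h_r]$ are independent across users (over the fresh samples and any internal coins of $\BA$), each with mean $\geq 0.9$, so the Hoeffding bound gives $\Pr[\,|\{i : h_i = h_r\}| < 0.8n\,] \leq e^{-\Omega(n)} \leq \beta/3$ once $n = \Omega(\log(1/\beta))$. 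On the complementary event, $h_r$ occurs at least $0.8n$ times among the $h_i$ while every other hypothesis occurs at most $0.2n$ times; since \Cref{lem:apx-selection} is $(O(\log(1/\delta)/\eps),0)$-accurate, the returned $u^*$ has count at least $0.8n - O(\log(1/\delta)/\eps)$, and choosing $n = \Omega(\log(1/\delta)/\eps)$ with a large enough constant makes this exceed $0.2n$, forcing $u^* = h_r$ with probability $1$. Combining the two failure events, the learner outputs a hypothesis of error $\leq \alpha$ with probability at least $(1 - 2\beta/3)(1 - \beta/3) \geq 1 - \beta$, and since $\eps < 1$ the number of users is $n = O(\log(1/\beta)) + O(\log(1/\delta)/\eps) = O(\log(1/(\delta\beta))/\eps)$.

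The one genuinely delicate point is this last calibration, not any individual lemma: DP selection only pins down the plurality element up to an additive slack of $\Theta(\log(1/\delta)/\eps)$, so to be certain the element it returns is \emph{exactly} the consensus hypothesis $h_r$ I need the consensus margin — about $0.6n$, since $h_r$ is produced by a $0.9$-fraction of users in expectation — to dominate that slack, and this is precisely what forces $n = \Omega(\log(1/\delta)/\eps)$. Having the stability parameter be a fixed constant like $0.9$ (rather than barely above $1/2$) is what leaves room both for the Hoeffding concentration and for this margin to remain a constant fraction of $n$; with only a weaker parameter one would first want to amplify it (as \Cref{thm:stability-to-pseudo} in fact does), so no extra work is needed at this stage.
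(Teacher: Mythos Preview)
Your proposal is correct and follows essentially the same approach as the paper: draw shared randomness $r$, have each user compute $h_i = \BA(S_i; r)$, and apply the $(\eps,\delta)$-DP selection algorithm of \Cref{lem:apx-selection} to the resulting hypotheses, with the same union-bound-plus-Hoeffding accuracy analysis. If anything, your handling of the selection step is slightly cleaner than the paper's---you correctly use that \Cref{lem:apx-selection} is $(O(\log(1/\delta)/\eps),0)$-accurate (i.e., succeeds with probability $1$), whereas the paper loosely allots an additional $\beta/6$ failure probability there.
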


\begin{proof}
Let $n = K \cdot \log\left(1/(\delta \beta)\right) / \eps$ where $K$ is a sufficiently large constant.
Let $\MR$ denote the public randomness shared between the users.  \Cref{alg:approxDP} shows our approximate-DP learner.
\begin{algorithm}[H]
\caption{Approximate-DP Learner in the Central Model. \label{alg:approxDP}}
%\small
\begin{algorithmic}
\STATE User $i$ randomly draws $m$ samples $S_i \sim \MD^m$ and runs $\BA(S_i; r)$ to obtain a hypothesis $h_i$
\STATE Run the $(\eps, \delta)$-DP selection algorithm from~\Cref{lem:apx-selection} where $U = 2^X$ and user $i$'s item is $h_i$ %Let $(\tc_h)_{h \in H_r}$ denote the estimate histogram output by the algorithm.
\STATE Return the output $h^*$ from the previous step
\end{algorithmic}
\end{algorithm}
It is clear that \Cref{alg:approxDP} is $(\eps, \delta)$-DP. We will now analyze its accuracy. First, from the definition of pseudo-global stability and a union bound, with probability $1 - 2\beta/3$ over $r \sim \MR$, there exists $h_r$ such that $\Err_{\MD}(h_r) \leq \alpha$, and $\Pr_{S \sim \MD^m}[\BA(S; r) = h_r] \geq 0.9$. Conditioned on this, we can use the Hoeffding inequality to conclude that, with probability $1 - \beta/6$, we have\footnote{Recall that $c_h = |\{i \in [n] \mid h_i = h\}|$ is as defined in \Cref{sec:dp-tools}.}
\begin{align} \label{eq:correct-large-apx}
c_{h_r} \geq 0.8 n.
\end{align}
Conditioned on~\eqref{eq:correct-large-apx}, \Cref{lem:apx-selection} guarantees that $c_{h^*} \geq \max_{h \in 2^X}  c_h - 0.1 n$ with probability $1 - \beta/6$ (for sufficiently large $K$); when this is the case, \Cref{alg:approxDP} outputs $h^* = h_r$. Hence, applying a union bound (over this and~\eqref{eq:correct-large-apx}), we can conclude that the algorithm is $(\alpha, \beta)$-accurate as desired.
%we can conclude that the algorithm outputs a hypothesis with error at most $\alpha$ with respect to $\MD$ as desired.
\end{proof}

\section{Conclusions and Future Directions}
\label{sec:conc}

In this work, we study the question of learning with user-level DP when each user may have many examples. We prove tight upper and lower bounds on the \emph{number of users} required to learn each concept class, provided that each user has sufficiently many i.i.d. samples. An immediate open question here is whether one can also derive a tight bound on the \emph{number of samples per users} required; note that this bound will depend on the number of users. For approximate-DP learning, this problem might be hard because the big gap in the item-level learning setting between $\poly(\Ldim)$~\cite{GGKM20} and $\Omega(\log^* \Ldim)$~\cite{AlonLMM19}) is still open.  The pure-DP case might be easier since tight bounds are known both in the central~\cite{BeimelNS19} and the local models (up to a polynomial factor)~\cite{kasiviswanathan2011can}.

Another interesting direction is to derive additional efficient user-level DP learners whose sample complexities are better than item-level DP learners. We give two ``weak'' examples of this in \Cref{sec:sq-efficient} for the case of the non-interactive local model. It would be good to give such an example for general algorithms in the local model as well. On this front, PARITY seems to be a good candidate; as stated earlier, it has SQ dimension $2^{\Omega(d)}$~\cite{BlumFJKMR94} meaning that it requires $2^{\Omega(d)}$ samples in the (interactive) item-level local model~\cite{kasiviswanathan2011can}. Can we come up with an efficient user-level DP algorithm in the local model that requires $\poly(d)$ samples in total?

Furthermore, our algorithms make extensive use of shared randomness---in the form of correlated sampling.  Is this necessary?  In particular,
\begin{itemize}
\item Is there a local user-level DP algorithm for learning any class $C$ using $\poly(\prdim(C))$ users each having $\poly(\prdim(C))$ samples, without using public randomness? In other words, can the use of public randomness be removed from \Cref{thm:local-dp-learner-generic}?
\item Is there an $\eta$-globally stable learner for any class $C$ with finite Littlestone dimension where $\eta > 0$ is some absolute constant? In other words, can the use of public randomness be removed from \Cref{cor:stability-from-littlestone}?
\end{itemize}
While it is not hard to show that the second question has a negative answer if we require $\eta > 1/2$, we are not aware of a proof that $\eta$ must go to zero for some family of concept classes.

Lastly, it would also be interesting to see whether techniques employed in our paper may be useful beyond the PAC setting. For example, Golowich~\cite{Golowich21} gives DP regression algorithms based on stability notions similar to~\cite{BunLM20,GGKM20} and it is plausible that our approach gives user-level regression algorithm in a setting similar to~\cite{Golowich21}.

\subsection*{Acknowledgement}
We thank Jessica Sorrell for pointing us to~\cite{reproducibility} and explaining the main results of that paper.

\bibliographystyle{alpha}
\bibliography{refs}

%\ifneurips
%\input{checklist}
%\fi

\newpage

\ifneurips
\setcounter{page}{1}
\fi

\appendix

\ifneurips
\section*{Supplementary Material}
\fi

\section{Pure-DP Learners}

In this section we prove \Cref{thm:local-dp-learner-generic} and \Cref{thm:pure-dp-learner-generic}.  First we show how to go from probabilistic representation to pseudo-global stability (\Cref{thm:stability-from-representation}) using correlated sampling.  Next we show how to go from pseudo-global stability to a pure-DP learner using the histogram algorithm in the local (\Cref{thm:stability-to-local-dp}) model and the central (\Cref{thm:stability-to-pure-dp}) model.

\subsection{Pseudo-Globally Stable Learner from Probabilistic Representation}

\begin{theorem} \label{thm:stability-from-representation}
Let $\alpha, \beta \in \mathbb{R}_{> 0}$ and $C$ be any concept class with finite $\prdim(C)$. Then, there exists a learner $\BA$ that is $m$-sample $(\alpha, \beta)$-accurate and $(1 - \beta, 1 - \beta)$-pseudo-globally stable, where $m = O_{\beta}\left(\prdim_{\alpha/2,\beta/2}(C)^3/\alpha^2\right)$. Furthermore, the public randomness specifies $H \in \supp(\MH)$ such that the output of $\BA$ belongs to $H$.
\end{theorem}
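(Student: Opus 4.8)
The plan is to follow the same template as the proof of \Cref{thm:stability-to-pseudo}, replacing the list-globally-stable learner with the probabilistic representation distribution $\MH$. The key simplification is that here the hypothesis class in which the output lies is drawn \emph{directly} from the public randomness --- since $\MH$ is a fixed, publicly known distribution --- so there is no need for an analog of the ``sandwich'' lemma (\Cref{lem:candidate-sandwich}) that reconstructs a candidate set from the executions of a base learner.

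Concretely, write $d = \prdim_{\alpha/2,\beta/2}(C)$ and fix a distribution $\MH$ of hypothesis classes witnessing it, so $|H| \le 2^d$ for every $H \in \supp(\MH)$; let $\CS$ be the correlated sampling strategy for $2^X$ of \Cref{thm:correlated-sampling} with randomness $\MR'$. The public randomness is the pair $r = (H, r')$ with $H \sim \MH$ and $r' \sim \MR'$; since the learner's output will always lie in $H$, this already yields the ``furthermore'' clause. Fix an inverse-temperature $\gamma = \Theta\!\left((d + \log(1/\beta))/\alpha\right)$ and a sample size $m = O_\beta(d^3/\alpha^2)$ (made precise below). On input $S \sim \MD^m$, the learner $\BA$ computes $\Err_S(h)$ for every $h \in H$, forms the distribution $\MP_S$ on $H$ with $\MP_S(h) \propto \exp(-\gamma\,\Err_S(h))$, and outputs $\CS(\MP_S; r')$.

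Both accuracy and pseudo-global stability rest on one Chernoff-plus-union-bound statement: for $m = \Theta\!\left(\log(|H|/\beta)/\rho^2\right)$, except with probability $\beta^2/100$ over $S$ we have $|\Err_S(h) - \Err_\MD(h)| \le \rho$ for all $h \in H$ simultaneously, where $\rho := \Theta(\beta^2/\gamma)$ --- and it is exactly this choice of $\rho$, together with $|H| \le 2^d$, that forces the cubic bound $m = O_\beta(d^3/\alpha^2)$. For accuracy, the representation guarantee yields, with probability $1-\beta/2$ over $H$, some $h^\star \in H$ with $\Err_\MD(h^\star) \le \alpha/2$; on the concentration event $h^\star$ has empirical error $\le \alpha/2 + \rho$ while every $h$ with $\Err_\MD(h) > \alpha$ has empirical error $> \alpha - \rho$, so the softmax $\MP_S$ places mass at most $|H|\exp(-\gamma\alpha/3) \le \beta/10$ on hypotheses of true error $> \alpha$; by marginal correctness of $\CS$ the output is distributed as $\MP_S$, and chaining the three bad events gives $(\alpha,\beta)$-accuracy. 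For stability, let $\MP^\star_H$ on $H$ be $\MP^\star_H(h) \propto \exp(-\gamma\,\Err_\MD(h))$ and set the canonical hypothesis $h_r := \CS(\MP^\star_H; r')$; on the concentration event the ratio $\MP_S(h)/\MP^\star_H(h)$ lies in $[\exp(-2\gamma\rho), \exp(2\gamma\rho)]$, hence $d_{\TV}(\MP_S, \MP^\star_H) \le \exp(2\gamma\rho) - 1 = O(\gamma\rho) \le \beta^2/100$, so $\E_{H,S}[d_{\TV}(\MP_S, \MP^\star_H)] \le \beta^2/50$. The error guarantee of correlated sampling then gives $\Pr_{H,r',S}[\BA(S;(H,r')) \ne h_{(H,r')}] \le \beta^2/25$, and Markov's inequality upgrades this to $\Pr_r[\Pr_S[\BA(S;r) = h_r] \ge 1-\beta] \ge 1-\beta$; finally $\Pr_r[\Err_\MD(h_r) \le \alpha] \ge 1-\beta$ holds for the same reason as accuracy, since conditioned on the good event for $H$ the distribution $\MP^\star_H$ itself puts mass $\le \beta/10$ on hypotheses of true error $> \alpha$.

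I expect the main obstacle to be purely the parameter bookkeeping: $\gamma$ must be as large as $\Theta((d+\log(1/\beta))/\alpha)$ for $\MP_S$ to concentrate on low-error hypotheses, but then stability demands that $\gamma\rho$ be tiny, which forces $\rho$ --- and hence $m \asymp \log(|H|/\beta)/\rho^2$ --- up to $\Theta(d^3/\alpha^2)$ (for constant $\beta$). The only genuinely computational step is the ``softmax is $\exp$-Lipschitz in the $\ell_\infty$-norm of its exponents'' estimate bounding $d_{\TV}(\MP_S,\MP^\star_H)$, which is identical in form to the calculation in the proof of \Cref{lemma:restriction-prob-close-to-empirical}. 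It is also worth noting the harmless point that $\CS$ can be instantiated once on the fixed finite set $2^X$ (or on $H$, conditioned on the public randomness), so no extension of \Cref{thm:correlated-sampling} to varying ground sets is needed.
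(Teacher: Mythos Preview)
Your proposal is correct and is essentially identical to the paper's proof: the same algorithm (softmax over $H$ on empirical errors, then correlated sampling), the same canonical hypothesis $h_r = \CS(\MP^\star_H; r')$, the same Hoeffding-plus-union-bound concentration over $H$, the same ``softmax is $\exp$-Lipschitz'' bound on $d_{\TV}(\MP_S,\MP^\star_H)$ (cf.\ \Cref{lemma:restriction-prob-close-to-empirical}), and the same exponential-mechanism style accuracy argument for $\MP^\star_H$. The only cosmetic difference is that the paper argues accuracy of $h_r$ directly from $\MP^\star_H$ (via marginal correctness of $\CS$) rather than first going through $\MP_S$, but you cover that case too in your final sentence.
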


Before we prove \Cref{thm:stability-from-representation}, we note that this along with \Cref{lem:prdim-boost} gives the following corollary.

\begin{corollary} \label{cor:stability-from-representation}
Let $\alpha, \beta \in \mathbb{R}_{> 0}$ and $C$ be any concept class with finite $\prdim(C)$. Then, there exists a learner $\BA$ that is $m$-sample $(\alpha, \beta)$-accurate and $(1 - \beta, 1 - \beta)$-pseudo-globally stable, where $m = \tilde{O}_{\beta}\left(\prdim(C)^3/\alpha^2\right)$. Furthermore, the public randomness specifies $H \in \supp(\MH)$ such that the output of $\BA$ belongs to $H$. 
\end{corollary}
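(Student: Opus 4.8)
The plan is to build the learner $\BA$ directly from a minimum-size $(\alpha/2,\beta/2)$-probabilistic representation $\MH$ of $C$; write $d := \prdim_{\alpha/2,\beta/2}(C)$, so every $H \in \supp(\MH)$ has $|H| \le 2^d$. The public randomness will be a pair $r = (H,r')$ where $H \sim \MH$ and $r' \sim \MR'$ is the randomness of a correlated sampling strategy $\CS$ for $2^X$ (\Cref{thm:correlated-sampling}); reading off $H$ from $r$ and producing an output inside $H$ is then automatic. Given $r = (H,r')$ and its $m$ i.i.d.\ samples $S$, a user computes the empirical errors $\Err_S(h)$ for all $h \in H$, forms the exponential-mechanism-type distribution $\MP_{H,S}$ on $H$ with $\MP_{H,S}(h) \propto \exp(-\gamma \cdot \Err_S(h))$, and outputs $\CS(\MP_{H,S}; r')$. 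We will set $\gamma = \Theta\big((d + \log(1/\beta))/\alpha\big)$ and $m = \Theta_\beta(d^3/\alpha^2)$, precisely $m = \Theta(\gamma^2 \log(|H|/\beta)/\beta^4)$, which dominates all other sample-size requirements.

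For accuracy, fix a concept $f \in C$ realizing $\MD$. By the defining property of $\MH$, with probability $1 - \beta/2$ over $H \sim \MH$ there is $h^* \in H$ with $\Err_\MD(h^*) \le \alpha/2$. Condition on this and on the event (probability $\ge 1 - \beta/4$ over $S$, by Hoeffding and a union bound over the $\le 2^d$ members of $H$, using $m \gtrsim d/\alpha^2$) that $|\Err_S(h) - \Err_\MD(h)| \le \alpha/8$ for all $h \in H$. Then $\Err_S(h^*) \le 5\alpha/8$, and the choice $\gamma \ge 8\log(2^{d+1}/\beta)/\alpha$ makes $\MP_{H,S}$ place mass $\ge 1 - \beta/4$ on $\{h : \Err_S(h) \le 3\alpha/4\}$; every such $h$ has $\Err_\MD(h) \le 7\alpha/8 \le \alpha$. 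Since $\CS(\MP_{H,S}; r')$ is distributed as $\MP_{H,S}$ (marginal correctness), a union bound over the three events gives $\Err_\MD(\BA(S;r)) \le \alpha$ with probability $\ge 1 - \beta$, i.e.\ $(\alpha,\beta)$-accuracy; an analogous (in fact simpler, needing no empirical concentration) computation shows that the canonical hypothesis $h_r$ defined below has $\Err_\MD(h_r) \le \alpha$ for at least a $(1-\beta)$-fraction of $r$.

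For pseudo-global stability, define for each $H$ the population distribution $\MP_H$ on $H$ with $\MP_H(h) \propto \exp(-\gamma \cdot \Err_\MD(h))$, and set $h_r := \CS(\MP_H; r')$ for $r = (H,r')$ (so $h_r \in H$). Fix $H$. On the Hoeffding event $|\Err_S(h) - \Err_\MD(h)| \le \rho := O\big(\sqrt{\log(|H|/\beta)/m}\big)$ for all $h \in H$, a direct numerator/denominator comparison gives $\MP_{H,S}(h)/\MP_H(h) \in [e^{-2\gamma\rho}, e^{2\gamma\rho}]$ for every $h$, hence $d_{\TV}(\MP_H,\MP_{H,S}) \le e^{2\gamma\rho} - 1 = O(\gamma\rho)$; off this event we bound the distance by $1$. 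With $m = \Theta(\gamma^2\log(|H|/\beta)/\beta^4)$ this yields $\E_S[d_{\TV}(\MP_H,\MP_{H,S})] \le \beta^2/10$. The error guarantee of $\CS$ then gives $\E_S \Pr_{r'}[\CS(\MP_H;r') \ne \CS(\MP_{H,S};r')] \le 2\,\E_S[d_{\TV}(\MP_H,\MP_{H,S})] \le \beta^2/5$; exchanging the two expectations and applying Markov over $r'$ shows that for at least a $(1-\beta)$-fraction of $r = (H,r')$ we have $\Pr_S[\BA(S;r) = h_r] \ge 1 - \beta$. Together with the accuracy of the $h_r$'s this is exactly $(1-\beta,1-\beta)$-pseudo-global stability, and \Cref{lem:prdim-boost} replaces $\prdim_{\alpha/2,\beta/2}(C)$ by $\tilde O_\beta(\prdim(C))$ to give \Cref{cor:stability-from-representation}.

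I expect the only delicate point to be the simultaneous calibration of $\gamma$ and $m$: $\gamma$ must be large enough (roughly $d/\alpha$) for the exponential weights to concentrate on $\alpha$-accurate hypotheses, yet the TV-stability bound scales like $\gamma\sqrt{d/m}$, so $m$ must grow like $d^3/\alpha^2$ (with the stated $\beta$-dependence) to absorb it; keeping the constants consistent across the accuracy and stability arguments, and ensuring the union bounds over the up-to-$2^d$ hypotheses in $H$ cost only $d$ in the exponent, is the part that needs care. Everything else is Hoeffding plus the two properties of correlated sampling.
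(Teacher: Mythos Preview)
Your proposal is correct and is essentially the same as the paper's: both use public randomness $(H,r')$ with $H\sim\MH$ an $(\alpha/2,\beta/2)$-probabilistic representation and $r'$ the correlated-sampling seed, form the exponential-weights distribution $\MP_{H,S}(h)\propto\exp(-\gamma\Err_S(h))$ with $\gamma=\Theta((d+\log(1/\beta))/\alpha)$, output $\CS(\MP_{H,S};r')$, and then invoke \Cref{lem:prdim-boost} to replace $\prdim_{\alpha/2,\beta/2}(C)$ by $\tilde O_\beta(\prdim(C))$. The only cosmetic difference is that the paper proves the $(\alpha,\beta)$-accuracy directly for the canonical $h_r=\CS(\MP_H;r')$ via the population distribution $\MP_H$ (which is all the definition requires), whereas you first argue accuracy of the actual output $\BA(S;r)$ through an extra empirical-concentration step and then note the $h_r$ case is simpler.
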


\begin{proof}[Proof of \Cref{thm:stability-from-representation}]
For brevity, let $d := \prdim_{\alpha/2, \beta/2}(C)$ and $\gamma = \frac{2(d + \log(1/\beta) + 10)}{\alpha}$. 
Let $\CS$ be a correlated sampling strategy for $2^X$ and let $\MR'$ be the (public) randomness it uses, as in \Cref{thm:correlated-sampling} and let $\MH$ be an $(\alpha/2, \beta/2)$-probabilistic representation of $C$ such that $\size(\MH) = d$.

The public randomness used in our learner $\BA$ is split into two parts: $r' \sim \MR'$ and $H \sim \MH$. 
\Cref{alg:purestable} presents the pseudo-globally stable learner $\BA$.  

\begin{algorithm}[H]
\caption{Pseudo-Globally Stable Learner $\BA$.
\label{alg:purestable}}
\begin{algorithmic}
\STATE Draw $m := \frac{10^6 \gamma^2 d \log(1/\beta)}{\beta^4}$ samples from $\MD$; let $S$ denote the multiset of these samples
\STATE Let $\hat{\MP}_{H, S}$ denote the probability distribution on $\Delta_{2^X}$ where
\begin{align*}
\hat{\MP}_{H, S}(h) =
\begin{cases}
\frac{\exp(-\gamma \cdot \Err_S(h))}{\sum_{h' \in H} \exp(-\gamma \cdot \Err_S(h'))} & h \in H, \\
0 & \text{otherwise.}
\end{cases}
\end{align*}
\STATE Output $\CS(\hat{\MP}_{H, S}; r')$
\end{algorithmic}
\end{algorithm}

We remark that the last property in the theorem statement holds simply because $H$ is part of the public randomness and $\BA$ always outputs a hypothesis that belongs to $H$.

To prove the pseudo-global stability and accuracy of the learner, we introduce additional notation: Let $\MP_H$ denote the probability distribution on $\Delta_{2^X}$ where
\begin{align*}
\MP_H(h) =
\begin{cases}
\frac{\exp(-\gamma \cdot \Err_\MD(h))}{\sum_{h' \in H} \exp(-\gamma \cdot \Err_\MD(h'))} & h \in H, \\
0 & \text{otherwise.}
\end{cases}
\end{align*}
In other words, $\MP_H$ is the distributional version of (the empirical) $\hat{\MP}_{H, S}$.

\paragraph{(Pseudo-Global Stability Analysis)} 
To show $(1 - \beta, 1 - \beta)$-pseudo-global stability, we start by bounding the total variation distance between $\MP_H$ and $\hat{\MP}_{H, S}$.
\begin{lemma} \label{lemma:analysis-after-good-representation}
For a fixed $H \in \supp(\MH)$, we have
\begin{align} \label{eq:rep-tv-bound}
\Pr_{S \sim \MD^m}[d_{\TV}(\MP_H, \hat{\MP}_{H, S}) \leq \beta / 10] \geq 1 - \beta^2/10.
\end{align}
\end{lemma}

\begin{proof}[Proof of \Cref{lemma:analysis-after-good-representation}]
By the Hoeffding inequality, for each $h \in \MH$, we have $\Pr_{S \sim \MD^m}[|\Err_{\MD}(h) - \Err_S(h)| \leq 10 \sqrt{d \log(1/\beta)/m}] \leq \frac{\beta^2}{10 \cdot 2^d}$. By a union bound, with probability $1 - \beta^2/10$ we have $|\Err_{\MD}(h) - \Err_S(h)| \leq 10 \sqrt{d\log(1/\beta)/m}$ for all $h \in H$. When this holds, we have
\begin{align*}
d_{\TV}(\MP_H, \hat{\MP}_{H, S}) &= \sum_{h \in h} \MP_H(h) \cdot \max\left\{0, \left(\frac{\hat{\MP}_{H, S}(h)}{\MP_H(h)} - 1\right)\right\} \\
&\leq \sum_{h \in h} \MP_H(h) \cdot \left(\exp\left({\gamma \cdot 20\sqrt{d \log(1/\beta) / m}}\right) - 1\right) \\
&\leq \exp\left({\gamma \cdot 20\sqrt{d \log(1/\beta) / m}}\right) - 1 \\
&\leq 40\gamma\sqrt{d \log(1/\beta) / m} \\
(\text{From our choice of } m) &\leq \beta^2 / 10.
\qedhere
\end{align*}
\end{proof}

Let $h_{(r', H)} = \CS(\MP_H; r')$. Recall from the guarantee of the correlated sampling algorithm that
\begin{align*}
\Pr_{r' \sim \MR'}[\CS(\hat{\MP}_{H, S}; r') \ne h_{(r', H)}] \leq 2 \cdot d_{\TV}(\MP_H, \hat{\MP}_{H, S}).
\end{align*}
Combining the above inequality with~\eqref{eq:rep-tv-bound}, for any fix $H \in \supp(\MH)$, we have
\begin{align*}
\Pr_{r' \sim \MR'}\Pr_{S \sim \MD^m}[\BA(S) \ne h_{(r', H)}] \leq \E_{d_{\TV}}[2 \cdot d_{\TV}(\MP_H, \hat{\MP}_{H, S})] \leq 2 \beta^2 / 5 < \beta^2.
\end{align*}
From this, we can conclude that
\begin{align*}
\Pr_{r' \sim \MR'} \left[\Pr_{S \sim \MD^m}[\BA(S) = h_{(r', H)}] \geq 1 - \beta\right] \geq 1 - \beta,
\end{align*}
for all $H \in \supp(\MH)$. 

\paragraph{(Accuracy Analysis)}
First, recall from the definition of $(\alpha/2, \beta/2)$-probabilistic representation of $C$, with probability $1 - \beta/2$ over $H \sim \MH$, we have
\begin{align} \label{eq:good-representation}
\exists h^* \in H, \Err_{\MD}(h^*) \leq \alpha/2.
\end{align}
When this holds, we can analyze as in~\cite{HardtT10} for the accuracy of the exponential mechanism. More formally, from the marginal correctness of the correlated sampling algorithm, we have
\begin{align*}
\Pr_{r' \sim \MR'}[\Err_{\MD}(h_{r', H}) > \alpha] &= \Pr_{h \sim P_H}[\Err_{\MD}(h) > \alpha] \\
&= \frac{\sum_{h \in H \atop \Err_{\MD}(h) > \alpha} \exp(-\gamma \cdot \Err_\MD(h))}{\sum_{h' \in H} \exp(-\gamma \cdot \Err_\MD(h'))} \\
&\leq \frac{|\MH| \cdot \exp(-\gamma \cdot \alpha)}{\exp(-\gamma \cdot \Err_{\MD}(h^*))} \\
(\text{From~\eqref{eq:good-representation}}) &\leq 2^d \cdot \exp(-\gamma \cdot \alpha / 2) \\
(\text{From our choice of } \gamma) &\leq \beta / 2.
\end{align*}
Combining the above inequality with the fact that~\eqref{eq:good-representation} holds with probability at least $1 - \beta/2$, 
\begin{align*}
\Pr_{r' \sim \MR', H \sim \MH}[\Err_{\MD}(h_{r', H}) > \alpha] \geq 1 - \beta,
\end{align*}
which concludes our proof.
\end{proof}

\subsection{From Pseudo-Globally Stable Learner to Pure-DP Learner in the Local Model}

In this section we prove \Cref{thm:stability-to-local-dp}, which allows us to convert any pseudo-globally stable learner to a pure-DP learner in the local model. Combining this and \Cref{cor:stability-from-representation} yields Theorem~\ref{thm:local-dp-learner-generic}.

\begin{theorem} \label{thm:stability-to-local-dp}
Let $\alpha, \beta \in (0, 0.1)$ and $C$ a concept class. Suppose that there exists a learner $\BA$ that is $m$-sample $(\alpha, \beta/3)$-accurate and $(0.9, 1-\beta/3)$-pseudo-globally stable. Furthermore, suppose that each public randomness $r$ of $\BA$ specifies a hypothesis class $H_r$ of size at most $d$ such that $\BA$ outputs (on that public randomness) always belong to $H_r$.
Then, for any $\eps \in (0, 1)$, there exists an $\eps$-DP $(\alpha, \beta)$-accurate learner in the (public randomness) local model for $C$ that requires $n = O\left(\frac{d + \log(1/\beta)}{\eps^2}\right)$ users where each user has $m$ samples. Moreover, the running time of the learner is $\poly(\TIME(\BA), n, d)$.
\end{theorem}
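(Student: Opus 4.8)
The plan is to mimic the structure of the proof of \Cref{thm:stability-to-apx-dp}, but replace the central-model DP selection with the pure-DP local-model histogram algorithm of \Cref{lem:local-selection}, exploiting the assumption that the output hypothesis of $\BA$ always lies in a publicly known class $H_r$ of size at most $d$. First I would describe the learner: sample the shared randomness $r = (r', H_r)$ (with $|H_r| \le d$), have each user $i$ draw a fresh $S_i \sim \MD^m$ and compute $h_i = \BA(S_i; r)$, and then run the $\eps$-DP local histogram algorithm of \Cref{lem:local-selection} over the universe $U = H_r$ (so $\log|U| \le \log d$), outputting the hypothesis with the largest estimated count. Privacy in the local model is immediate from \Cref{lem:local-selection} since each user's message depends only on their own $h_i$.

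For accuracy, I would argue exactly as in \Cref{thm:stability-to-apx-dp}. By $(0.9, 1-\beta/3)$-pseudo-global stability together with $(\alpha, \beta/3)$-accuracy and a union bound, with probability $1 - 2\beta/3$ over $r \sim \MR$ there is a hypothesis $h_r$ with $\Err_\MD(h_r) \le \alpha$ and $\Pr_{S \sim \MD^m}[\BA(S; r) = h_r] \ge 0.9$. Conditioned on this, each $\mathbf{1}[h_i = h_r]$ is an i.i.d.\ Bernoulli with mean $\ge 0.9$, so by a Hoeffding/Chernoff bound, with probability $1 - \beta/6$ the true count satisfies $c_{h_r} \ge 0.8 n$, while every other hypothesis $h \ne h_r$ has $c_h \le 0.2n$ (since counts sum to $n$). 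The histogram guarantee of \Cref{lem:local-selection} says that with probability $1-\beta/6$ every estimate is within $O(\sqrt{n \log(d/\beta)}/\eps)$ of the true count; choosing $n = O((d + \log(1/\beta))/\eps^2)$ with a large enough constant makes this error margin smaller than $0.3n$, so the hypothesis maximizing the estimated count must be $h_r$. A final union bound over these three events gives $(\alpha, \beta)$-accuracy. The running time is $\poly(\TIME(\BA))$ per user to compute $h_i$ plus $\poly(n, \log|U|) = \poly(n, \log d)$ for the histogram step, so $\poly(\TIME(\BA), n, d)$ overall.

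The main obstacle—really the only subtle point—is making sure the error term from \Cref{lem:local-selection} is genuinely a \emph{constant} fraction of $n$ after the substitution, which is why we need $|H_r| \le d$: the additive error is $O(\sqrt{n \log(|U|/\beta)}/\eps)$, and with $|U| \le d$ and $n \asymp (d + \log(1/\beta))/\eps^2$ one checks that $\sqrt{n \log(d/\beta)}/\eps = O(\sqrt{n} \cdot \sqrt{d + \log(1/\beta)}/\eps) = O(n)$ with a controllable constant; picking the hidden constant in $n$ large enough drives this below $0.3n$. Everything else is a routine repetition of the argument in \Cref{thm:stability-to-apx-dp}, with the central-model selection lemma swapped for the local-model histogram lemma and the universe restricted to the publicly specified class $H_r$.
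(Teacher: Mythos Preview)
Your proposal is correct and follows essentially the same approach as the paper: each user runs $\BA$ on shared randomness $r$ and reports $h_i \in H_r$, then the local-model selection/histogram algorithm of \Cref{lem:local-selection} over $U = H_r$ picks the majority; the accuracy analysis via the union bound on the stability, accuracy, Hoeffding, and selection-error events is exactly what the paper does. One small terminological point: in the paper ``size'' means $\log|H_r|$ (so $\log|U| \le d$, not $\log d$), but your calculation still goes through since you bound $\log(|U|/\beta)$ by $d + \log(1/\beta)$ anyway.
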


\begin{proof}
Let $n = K \left(d + \log(1/\beta)\right) / \eps^2$ where $K$ is a sufficiently large constant.  Let $r$ denote the public randomness shared between the users.  \Cref{alg:purelearner} presents our pure-DP learner.
\begin{algorithm}[H]
\caption{Pure-DP Learner in the Local Model.  
\label{alg:purelearner}}
\begin{algorithmic}
\STATE User $i$ draws $m$ samples $S_i \sim \MD^m$
\STATE User $i$ runs $\BA(S_i; r)$ to obtain hypothesis $h_i \in H_r$
\STATE Run the $\eps$-DP selection algorithm in the local model from~\Cref{lem:local-selection}, where $U = H_r$ and user $i$'s item is $h_i$. %Let $(\tc_h)_{h \in H_r}$ denote the estimate histogram output by the algorithm.
\STATE Return the output $h^*$ from the previous step
%\item User $i$ then uses the RAPPOR algorithm~\cite{ErlingssonPK14} to obtain a vector $\ts^i \in \{0, 1\}^{H_r}$. More specifically, first let $s^i \in \{0, 1\}^{H_r}$  be the one-hot encoding of $h$. Then, let
%\begin{align*}
%\ts^i_h =
%\begin{cases}
%s^i_h & \text{ with probability } \frac{e^{\eps}}{1 + e^{\eps}}, \\
%1 - s^i_h & \text{ with probability } \frac{1}{1 + e^{\eps}},
%\end{cases}
%\end{align*}
%for all $h \in H_r$.
%\item Finally, the analyzer outputs $\argmax_{h \in H} \sum_{i \in [n]} \ts^i_h$.
\end{algorithmic}
\end{algorithm}
It is obvious to see that the algorithm is $\eps$-DP in the local model. We will now analyze its accuracy. First, from definition of pseudo-global stability and a union bound, with probability $1 - 2\beta/3$ over $r \sim \MR$ there exists $h_r$ such that $\Err_{\MD}(h_r) \leq \alpha$, and $\Pr_{S \sim \MD^m}[\BA(S; r) = h_r] \geq 0.9$. Conditioned on this, we can use the Hoeffding inequality to conclude that, with probability $1 - \beta/6$, 
\begin{align} \label{eq:correct-large}
c_{h_r} \geq 0.8.
\end{align}
Conditioned on~\eqref{eq:correct-large}, we can use \Cref{lem:local-selection} to guarantee that $c_{h^*} \geq \max_{h \in 2^X}  c_h - 0.1 n$ with probability at least $1 - \beta/6$ (when $K$ is sufficiently large); when this is the case, the algorithm outputs $h^* = h_r$. Hence, applying a union bound (over this and~\eqref{eq:correct-large}), we can conclude that the algorithm outputs a hypothesis with error at most $\alpha$ with respect to $\MD$ as desired.
%
%we may again apply Hoeffding inequality to conclude that, with probability $1 - \frac{\beta}{6 \cdot 2^d}$ we have $\sum_{i \in [n]} \ts^i_h \geq n\left(\frac{1}{1 + e^{\eps}} + \frac{0.7(e^{\eps} - 1)}{e^{\eps} + 1}\right)$. Similarly, conditioned on~\eqref{eq:correct-large}, for every $h \ne h_r$ in $H_r$, we have $\sum_{i \in [n]} s^i_{h_r} \leq 0.2$; thus, Hoeffding inequality implies that with probability $1 - \frac{\beta}{6 \cdot 2^d}$. Applying the union bound, we have that $h_r = \argmax_{h \in H} \sum_{i \in [n]} \ts^i_h$ with probability $1 - \beta/6$. Hence, applying yet another union bound (over this and~\eqref{eq:correct-large}), we can conclude that the algorithm outputs a hypothesis with error at most $\alpha$ with respect to $\MD$ as desired.
\end{proof}

\subsection{From Pseudo-Globally Stable Learner to Pure-DP Learner in the Central Model}

We next prove the following result, which is similar to \Cref{thm:stability-to-local-dp} except that we now work in the central model and achieve a slightly better bound, i.e., the dependency of the number of users on $\eps$ is $1/\eps$ instead of $1/\eps^2$. Plugging \Cref{thm:stability-to-local-dp} into \Cref{cor:stability-from-representation}, we get \Cref{thm:pure-dp-learner-generic}.

\begin{theorem} \label{thm:stability-to-pure-dp}
Let $\alpha, \beta \in (0, 0.1)$ and $C$ a concept class. Suppose that there exists a learner $\BA$ that is $m$-sample $(\alpha, \beta/3)$-accurate and $(0.9, 1-\beta/3)$-pseudo-globally stable. Furthermore, suppose that each public randomness $r$ of $\BA$ specifies a hypothesis class $H_r$ of size at most $d$ such that $\BA$ outputs (on that public randomness) always belong to $H_r$.
Then, for any $\eps \in (0, 1)$, there exists an $\eps$-DP $(\alpha, \beta)$-accurate learner in the central model for $C$ that requires $n = O\left(\frac{d + \log(1/\beta)}{\eps}\right)$ users where each user has $m$ samples. Moreover, the running time of the learner is $\poly(\TIME(\BA), n, d)$.
\end{theorem}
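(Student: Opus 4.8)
The plan is to follow the exact template of the proof of \Cref{thm:stability-to-local-dp}, replacing the local histogram primitive (\Cref{lem:local-selection}) by the pure-DP selection algorithm in the central model (\Cref{lem:pure-selection}); this substitution is precisely where the improvement from $1/\eps^2$ to $1/\eps$ comes from, since central pure-DP selection has additive error $O(\log(|U|/\beta)/\eps)$ rather than $O(\sqrt{n \log(|U|/\beta)}/\eps)$. Concretely, set $n = K(d + \log(1/\beta))/\eps$ for a sufficiently large absolute constant $K$, let $r \sim \MR$ be the shared public randomness, and consider the learner in which each user $i$ independently draws $S_i \sim \MD^m$, computes $h_i = \BA(S_i; r) \in H_r$, and then all users feed their $h_i$ into the $\eps$-DP selection algorithm of \Cref{lem:pure-selection} with universe $U = H_r$ (so $\log |U| \le d$); the output $h^*$ is returned.

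For privacy, fix the public randomness $r$. Each $h_i$ is a function of user $i$'s own data alone, so two neighboring datasets induce two lists of hypotheses differing in at most one coordinate (equivalently, their count vectors $(c_h)_{h \in H_r}$ differ by one user). Hence $\eps$-DP follows directly from the $\eps$-DP guarantee of \Cref{lem:pure-selection} together with the fact that conditioning on $r$ and then applying a per-user preprocessing map preserves differential privacy.

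For accuracy, first apply the definition of $(0.9, 1-\beta/3)$-pseudo-global stability together with $(\alpha, \beta/3)$-accuracy and a union bound: with probability at least $1 - 2\beta/3$ over $r \sim \MR$ there is a hypothesis $h_r$ with $\Err_\MD(h_r) \le \alpha$ and $\Pr_{S \sim \MD^m}[\BA(S; r) = h_r] \ge 0.9$. Condition on such an $r$. Then $c_{h_r}$ is a sum of $n$ i.i.d.\ Bernoulli random variables with success probability at least $0.9$, so by the Hoeffding bound $c_{h_r} \ge 0.8 n$ with probability at least $1 - \beta/6$ (using $n = \Omega(\log(1/\beta))$). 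Condition on this as well, and invoke \Cref{lem:pure-selection} with failure parameter $\beta/6$: with probability at least $1 - \beta/6$ the returned $h^*$ satisfies $c_{h^*} \ge \max_{h \in H_r} c_h - O(\log(|U|/\beta)/\eps) \ge \max_{h \in H_r} c_h - 0.1 n$, where the last step holds for $K$ large enough since $\log|U| \le d$. Combining, $c_{h^*} \ge c_{h_r} - 0.1n \ge 0.7 n$; but any hypothesis with count at least $0.7n$ is unique (two such would account for more than $n$ users), so $h^* = h_r$ and hence $\Err_\MD(h^*) \le \alpha$. A final union bound over the three failure events ($2\beta/3 + \beta/6 + \beta/6 = \beta$) gives $(\alpha, \beta)$-accuracy.

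The running time is $\poly(\TIME(\BA), n, d)$: each of the $n$ users runs $\BA$ once, and \Cref{lem:pure-selection} runs in $\poly(n, \log|U|) = \poly(n, d)$ time. There is no genuinely hard step here; the only points needing care are (i) that pure-DP central selection carries an intrinsic $\beta$-type failure probability (unlike the approximate-DP selection of \Cref{lem:apx-selection} used in \Cref{thm:stability-to-apx-dp}), which is why we split the budget into sixths and call \Cref{lem:pure-selection} with parameter $\beta/6$, and (ii) bookkeeping $K$ so that the $O(\log(|U|/\beta)/\eps)$ additive error of selection sits comfortably below $0.1n$ when $n = K(d+\log(1/\beta))/\eps$ and $|U| \le 2^d$.
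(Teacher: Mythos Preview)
Your proposal is correct and matches the paper's approach exactly: the paper simply states that the proof of \Cref{thm:stability-to-pure-dp} is identical to that of \Cref{thm:stability-to-local-dp} with the central-model selection primitive of \Cref{lem:pure-selection} substituted for the local-model one of \Cref{lem:local-selection}. Your write-up spells out precisely this substitution and the associated bookkeeping.
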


The proof of \Cref{thm:stability-to-pure-dp} is essentially the same as that of \Cref{thm:stability-to-local-dp}, except that we are using the histogram algorithm in the central model (\Cref{lem:pure-selection}) instead of in the local model (\Cref{lem:local-selection}).

\section{Efficient Reduction for SQ Algorithms}
\label{sec:sq-efficient}

This section is devoted to the proof of \Cref{thm:sq-reduction}.
To understand the intuition behind the proof, observe that our algorithms from the previous section are inefficient because they have to estimate a certain probability distribution over the set of possible output hypotheses (on which a correlating sampling is then applied); this set can be large, resulting in the inefficiency.  To overcome this, we observe that in the SQ model our job is now to simply produce a single number---the output of the oracle---which is then returned back to $\BA$.  Since it is a single number (and can have error as large as $\tau$), we can quite easily estimate its value and use correlated sampling to round it to some nearby number. By doing this for every oracle call from $\BA$, we can obtain a pseudo-globally stable algorithm, which can then be turned into user-level DP algorithms using  \Cref{thm:stability-to-apx-dp,thm:stability-to-pure-dp,thm:stability-to-local-dp}.

\subsection{From SQ Algorithms to Pseudo-Globally Stable Learners}

We recall the definition of statistical queries~\cite{Kearns98}.

\begin{defn}[Statistical Query Oracle~\cite{Kearns98}]
For a given distribution $\MD$ and accuracy parameter $\tau > 0$, a \emph{statistical query (SQ)}  $\stat_{\MD}(\tau)$ is an oracle that, when given a function $\phi: \supp(\MD) \to [-1, 1]$, outputs some number $o$ such that $|o - \E_{z \sim \MD}[\phi(z)]| \leq \tau$.
\end{defn}

For simplicity, we only present the proof for the case where $\BA$ is deterministic. The randomized case can be handled similarly, by additionally using public randomness as $\BA$'s private randomness.

\begin{lemma} \label{lem:sq-reduction}
Let $C$ be any concept class, and suppose that there exists an algorithm $\BA$ that can $\alpha$-learn $C$ using $q$ $\stat_{\MD}(\tau)$ queries.  Then, there exists a learner $\BA'$ that is $m$-sample $(\alpha, \beta)$-accurate $(1 - \beta, 1 - \beta)$-pseudo-globally stable, where $m =  \tilde{O}_{\beta}\left(q^3 / \tau^2\right)$. Furthermore, the running time of $\BA'$ is at most $\poly(\TIME(\BA), 1 / \tau, 1/\beta)$.
\end{lemma}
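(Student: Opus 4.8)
The plan is to mimic the structure of \Cref{thm:stability-to-pseudo} and \Cref{thm:stability-from-representation}, but applying correlated sampling at the level of \emph{individual query answers} rather than output hypotheses. Fix a large sample $S$ of size $m$, and let $\hat P_S$ denote the empirical distribution on $S$. We run $\BA$ and simulate the SQ oracle: for the $t$-th query $\phi_t$ (which may depend on the previous answers), the true value is $v_t := \E_{z \sim \MD}[\phi_t(z)]$, and the empirical value is $\hat v_t := \E_{z \sim \hat P_S}[\phi_t(z)]$. By Hoeffding and a union bound over $q$ queries, with probability $1 - \beta/10$ over $S$ we have $|v_t - \hat v_t| \le \rho$ for all $t$, where $\rho = \tilde O(\sqrt{q \log(1/\beta)/m})$; choosing $m = \tilde O_\beta(q^3/\tau^2)$ makes $\rho \le \tau/q$, say, so the empirical estimates are extremely accurate. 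That by itself does not give stability, since two users with independent samples will still get slightly different $\hat v_t$; the point of correlated sampling is to round these to a common value.

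Concretely, I would discretize $[-1,1]$ at granularity $\tau/4$ and, using the shared randomness $r'$, define for each step $t$ a distribution $\MP_t(\hat v_t)$ on grid points that is supported on the two grid points bracketing $\hat v_t$ (with weights linear in the distance, so that \emph{marginally} a grid point within $\tau/2$ of $v_t$ is always returned — this keeps the simulated oracle a valid $\stat_{\MD}(\tau)$ oracle). Apply the correlated sampling strategy $\CS$ of \Cref{thm:correlated-sampling} to $\MP_t(\hat v_t)$ with randomness $r'_t$ (a fresh block of the public randomness for each $t$) to obtain the answer $a_t$ fed back to $\BA$. The key estimate: if two runs have empirical values $\hat v_t, \hat v_t'$ with $|\hat v_t - \hat v_t'| \le 2\rho$, then $d_{\TV}(\MP_t(\hat v_t), \MP_t(\hat v_t')) \le 4\rho/\tau$, so by the error guarantee the correlated samples disagree with probability $\le 8\rho/\tau$. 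Since $\BA$ is deterministic, if all $q$ answers agree between two runs then the output hypothesis agrees; a union bound over $t$ gives disagreement probability $\le 8q\rho/\tau$, which is $\le \beta^2$ for the chosen $m$. Formally one fixes a reference run — e.g. defining $a_t(r')$ inductively using the \emph{true} values $v_t$ (which are well-defined given $r'$ and the history, since $\BA$ is deterministic) — and argues via the same Markov-inequality step as in \Cref{thm:stability-to-pseudo} that with probability $1 - \beta$ over $r'$, a random $S$ reproduces this reference hypothesis $h_{r'}$ with probability $\ge 1 - \beta$.

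For accuracy: conditioned on the good event that $|v_t - \hat v_t| \le \rho$ for all $t$ (and even unconditionally, by the marginal-correctness design above), every $a_t$ is within $\tau$ of $v_t$, so the simulated oracle is a legitimate $\stat_{\MD}(\tau)$ oracle and hence $\BA$ outputs a hypothesis $h$ with $\Err_\MD(h) \le \alpha$ by assumption. So the output is $\alpha$-accurate with probability $1 - \beta$, where the failure probability comes only from the chance that the marginal design required the good event — if the bracketing construction is done carefully this is automatic, otherwise it costs a $\beta/10$ term that is absorbed. Running time is $\poly(\TIME(\BA), 1/\tau, 1/\beta)$: the grid has $O(1/\tau)$ points, $\CS$ on such a constant-support distribution is cheap, and we make $q \le \TIME(\BA)$ queries over $m = \tilde O_\beta(q^3/\tau^2)$ samples.

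The main obstacle I anticipate is handling \emph{adaptivity} of the queries cleanly: because $\phi_{t+1}$ depends on $a_1, \dots, a_t$, one must be careful that the "reference run" is well-defined and that the union bound over $t$ is valid despite the branching — the standard fix is that, conditioned on the reference run's history matching up to step $t$, the query $\phi_{t+1}$ is determined, so $\hat v_{t+1}$ is a genuine average of i.i.d.\ bounded variables and Hoeffding applies; one threads the union bound so that "all $\hat v_t$ concentrate along the reference path" is a single event of probability $1 - \beta/10$. A secondary subtlety is choosing the bracketing distribution so that marginal correctness of the simulated oracle (the answer is always within $\tau$ of $v_t$) holds \emph{deterministically} rather than only on the concentration event, which simplifies the accuracy argument; this requires the grid granularity plus $\rho$ to stay below $\tau$, which is exactly what the choice of $m$ buys. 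The rest is routine and parallels the two earlier theorems almost verbatim.
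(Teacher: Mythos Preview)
Your proposal is correct and follows essentially the same route as the paper: discretize the answer space at granularity $O(\tau)$, for each query round the empirical mean to a bracketing grid point via correlated sampling (using a fresh block of public randomness per query), compare against a reference run that uses the true expectations $v_t$, bound the per-query disagreement probability, union-bound over the $q$ queries, and finish with the same Markov step. The one structural difference is that the paper draws \emph{fresh} samples $S_i \sim \MD^{m'}$ for the $i$th query rather than reusing a single sample $S$ throughout; this is what makes the total $m = q \cdot m' = \tilde O_\beta(q^3/\tau^2)$, and it buys a cleaner adaptivity argument --- conditioned on $\mathscr{E}_1,\dots,\mathscr{E}_{i-1}$ the fresh $S_i$ is still i.i.d., so Hoeffding applies directly. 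Your stated justification (``conditioned on the reference run's history matching up to step $t$, $\hat v_{t+1}$ is a genuine average of i.i.d.\ variables and Hoeffding applies'') is not literally correct, since that conditioning event depends on $S$; your alternate phrasing, treating ``all $\hat v_t$ concentrate along the reference path'' as a single unconditioned event (valid because for fixed $r'$ the reference-path queries are determined independently of $S$), does repair this and is the right way to make the single-sample variant go through.
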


Plugging~\Cref{lem:sq-reduction} into~\Cref{thm:stability-to-apx-dp,thm:stability-to-pure-dp,thm:stability-to-local-dp} implies \Cref{thm:sq-reduction}.

\begin{proof}[Proof of Lemma~\ref{lem:sq-reduction}]

Let $I = \lceil 3/\tau \rceil$, $\Omega = \{0, 1/I, \dots, (I - 1)/I, 1\}$ and $m' = 10^6 \cdot \frac{q^2}{\beta^2 \tau^2} \cdot \log\left(\frac{q^2}{\beta^4 \tau^2}\right)$.
Let $\CS$ with randomness $\MR'$ be the correlated sampling strategy for $\Omega$ given in \Cref{thm:correlated-sampling}. Let $\MR = (\MR')^{\otimes q}$. 

For $r = (r'_1, \dots, r'_q) \sim \MR$, we simulate the oracle $\stat_{\MD}(\tau)$ as follows:

\begin{algorithm}[ht]
\caption{Pseudo-Globally Stable Learner from SQ.}
\begin{algorithmic}
\FOR {the $i$th query $\phi_i$ to the SQ oracle}
\STATE Draw $m'$ samples $S_i \sim \MD^{m'}$
\STATE $\hu_i \gets \frac{1}{m's} \sum_{z \in S_i} \phi_i(z)$
\STATE Let $\hat{\MP}_i$ denote the probability distribution on $\Omega$ where
\begin{align*}
\hat{\MP}_i(\ell / I) =
\begin{cases}
\hu_i - \lfloor \hu_i \rfloor & \text{if } \ell = \lfloor \hu_i \rfloor  + 1, \\
1 - (\hu_i - \lfloor \hu_i \rfloor) & \text{if } \ell = \lfloor \hu_i \rfloor, \\
0 & \text{otherwise.}
\end{cases}
\end{align*}
\STATE Output $\CS(\hat{\MP}_i; r'_i)$
\ENDFOR
\end{algorithmic}
\end{algorithm}

Let $\BA'$ denote the learner that runs $\BA$ using the above oracle simulation. It worth keeping in mind that, when $\BA$ is adaptive, $\hat{\MP}_i$ depends on all of $S_1, r'_1, \dots, S_{i-1}, r'_{i - 1}, S_i$; however, we do note write this explicitly for notational ease.

Now, consider the ``distributional'' runs of the algorithm $\BA$ where the oracle is as defined above except the ``empirical'' $\hu_i$ is replaced by the ``distributional'' $u_i := \E_{z \sim \MD}[\phi_i(z)]$, and similarly $\hat{P}_i$ is replaced by $\MP_i$ 
where
\begin{align*}
\MP_i(\ell / I) =
\begin{cases}
u_i - \lfloor u_i \rfloor & \text{if } \ell = \lfloor u_i \rfloor  + 1, \\
1 - (u_i - \lfloor u_i \rfloor) & \text{if } \ell = \lfloor u_i \rfloor, \\
0 & \text{otherwise,}
\end{cases}
\end{align*}
and the output of the ``distributional'' oracle is now $\CS(\MP_i; r'_i)$. Let $h_r$ denote the output of the learner $\BA$ when using this ``distributional'' oracle. Notice that these outputs depend only on the distribution $\MD$ and the randomness $r$.

Let $\mathscr{E}_i$ denote the event that the output of oracle for the $i$th query is the same in the two cases. We will show that
\begin{align} \label{eq:one-sq-query-analysis}
\Pr_{S_i, r'_i}[\mathscr{E}_i \mid \mathscr{E}_{i - 1}, \dots, \mathscr{E}_1] \geq 1 - \beta^2 / q.
\end{align}
Before we prove~\eqref{eq:one-sq-query-analysis}, let us first explain why it implies our proof. By a union bound,~\eqref{eq:one-sq-query-analysis} implies that, with probability $1 - \beta^2$ (over $S_1, \dots, S_q, r$), the ``empirical'' version of the oracle answers the same $q$ queries as the ``distributional'' version, meaning that $\MA$ will return the same output in the former as in the latter; more formally, we have
\begin{align*}
\Pr_{r \sim \MR, S_1, \dots, S_m} [\BA'(S_1, \dots, S_m; r) = h_r] \geq 1 - \beta^2.
\end{align*}
Employing Markov's inequality, we have
\begin{align*}
\Pr_{r \sim \MR} \left[\Pr_{S_1, \dots, S_m}[\BA'(S_1, \dots, S_m; r) = h_r] \geq 1 - \beta\right] \geq 1 - \beta.
\end{align*}
In other words, the algorithm is $(1 - \beta, 1 - \beta)$-pseudo-globally stable as desired. Furthermore, observe that the answer of the distributional oracle is always within $\nu$ of the true answer; as a result, the accuracy guarantee of $\BA$ also implies the accuracy of $\BA'$.

We now turn our attention to proving~\eqref{eq:one-sq-query-analysis}. Conditioned on $\mathscr{E}_{i - 1}, \dots, \mathscr{E}_1$, $\BA$ issues the same $i$th query $\phi_i$ to both the empirical and the distributional versions of the oracle. From standard concentration inequality, with probability $1 - \frac{\beta^2}{3q}$, we have $|u_i - \hat{u}_i| \leq \frac{\beta^2}{3q I}$; the latter implies $d_{\TV}(\MP_i, \hat{\MP}_i) \leq \frac{\beta^2}{3q}$. Now, using the correlated sampling guarantee (\Cref{thm:correlated-sampling}) when this occurs, we have $\Pr_{r'_i}[\CS(\MP_i; r'_i) \ne \CS(\hat{\MP}_i; r'_i)] \leq \frac{2\beta^2}{3q}$. Applying a union bound then implies~\eqref{eq:one-sq-query-analysis}.
\end{proof}

\subsection{Implications}
\label{sec:efficient-sq-examples}

While \Cref{thm:sq-reduction} may be applied to any of the many known SQ algorithms, we highlight two applications: \emph{decision lists} (cf.~\cite{Rivest87}\footnote{Note that what is commonly called decision lists (DL) today is called 1-DL by Rivest~\cite{Rivest87}, who also studied the generalization $k$-DL where each term in the list can be a $k$-conjunction. The results listed here also apply to $k$-DL but $d$ will be replaced by $d^k$.} and for definition) and \emph{linear separators over $\{0, 1\}^d$}.  Both classes are known to have efficient SQ algorithms \cite{Kearns98,DunaganV08}. As a result, we obtain:

\begin{corollary} \label{cor:sq-efficient}
There exist $\eps$-DP $(\alpha, \beta)$-accurate learners for decision lists and linear separators in the (public randomness) non-interactive local model of DP with $O\left(\frac{\poly(d) + \log(1/\beta)}{\eps^2}\right)$ users, where each user has $\poly\left(\frac{d}{\beta\alpha}\right)$ samples. Moreover, these learners run in time $\poly(d, 1/ \alpha, 1 / \beta)$.
\end{corollary}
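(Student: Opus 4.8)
The plan is to instantiate \Cref{thm:sq-reduction}(iii) with the known efficient SQ learners for the two concept classes and then bookkeep the parameters. First I would recall that decision lists over $d$ variables and linear separators over $\{0,1\}^d$ both admit SQ learning algorithms $\BA$ that $\alpha$-learn the class using $q = \poly(d/\alpha)$ queries with tolerance $\tau = 1/\poly(d/\alpha)$, and running in time $\TIME(\BA) = \poly(d, 1/\alpha)$; these are from Kearns~\cite{Kearns98} (decision lists, and more generally the SQ-learnability of any class with small statistical query dimension via polynomial-size queries) and Dunagan--Vempala~\cite{DunaganV08} (a polynomial-time SQ algorithm for linear separators, obtained by derandomizing/SQ-simulating a perceptron-type update). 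Second, I would note that both $\BA$'s output hypotheses are succinctly describable: a decision list on $d$ variables is specified by $O(d\log d)$ bits, and a linear separator produced by these algorithms has weights of polynomially bounded bit-complexity, so $b = \poly(d)$ in the notation of \Cref{thm:sq-reduction}.

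Then I would simply quote \Cref{thm:sq-reduction}(iii): there is an $\eps$-DP $(\alpha,\beta)$-accurate learner in the public-randomness local model with $O\!\left(\frac{b + \log(1/\beta)}{\eps^2}\right)$ users, each holding $\poly\!\left(\frac{q}{\beta\tau}\right)$ samples, running in time $\poly(\TIME(\BA), 1/\tau, 1/\beta)$. Substituting $b = \poly(d)$ gives the stated $O\!\left(\frac{\poly(d) + \log(1/\beta)}{\eps^2}\right)$ users; substituting $q = \poly(d/\alpha)$ and $\tau = 1/\poly(d/\alpha)$ gives samples-per-user $\poly(q/(\beta\tau)) = \poly(d/(\alpha\beta))$; and substituting $\TIME(\BA) = \poly(d,1/\alpha)$ and $1/\tau = \poly(d/\alpha)$ gives total running time $\poly(d, 1/\alpha, 1/\beta)$. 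Finally I would remark that \Cref{thm:sq-reduction}(iii) yields a \emph{non-interactive} local protocol because each user's randomizer only needs to run $\BA$ on its own samples (simulating the $q$ SQ oracle calls internally via correlated sampling against the shared randomness) and then feed the single resulting hypothesis into the non-interactive local histogram protocol of \Cref{lem:local-selection}; no round interaction between users is required.

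The only genuine content beyond invoking the black box is verifying the concrete SQ parameters and bit-complexity for the two classes — in particular, for linear separators one must check that the Dunagan--Vempala algorithm, when phrased as an SQ algorithm, uses only $\poly(d)$ queries at tolerance $1/\poly(d)$ \emph{and} maintains weight vectors of $\poly(d)$ bit-length throughout (so that the final hypothesis satisfies the $b = \poly(d)$ bound required by \Cref{thm:sq-reduction}); for decision lists the analogous check (Rivest's greedy procedure, SQ-simulated by estimating the conditional error of each candidate literal to tolerance $\tau$) is routine. I expect this parameter-chasing for linear separators to be the main, though modest, obstacle; everything else is a direct substitution into \Cref{thm:sq-reduction}.
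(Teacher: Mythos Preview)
Your proposal is correct and follows the same approach as the paper: the paper simply cites the efficient SQ learners of Kearns~\cite{Kearns98} and Dunagan--Vempala~\cite{DunaganV08} for the two classes and invokes \Cref{thm:sq-reduction}, with no further argument. Your parameter bookkeeping and the explanation of non-interactivity are more detailed than what the paper provides, but they are exactly the checks one would carry out to make the corollary rigorous.
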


This result is particularly interesting because our algorithm is \emph{non-interactive} meaning that the users all just send the messages to the analyzer in one round. On the other hand, Daniely and Feldman~\cite{DanielyF19} recently showed that in the item-level setting any non-interactive local DP learner requires exponential number of samples. This demonstrates the power of user-level DP learning in overcoming the non-interactivity barrier.

%\subsubsection{Concrete Example: Learning Conjunctions}

%As a concrete example, we consider $k$-conjunctions which are functions of the form $f_{i_1, \dots, i_k}: \{0, 1\}^n \to \{0, 1\}$ defined by $f_{i_1, \dots, i_k}(x) = x_{i_1} \wedge \cdots \wedge x_{i_k}$. A $k$-conjunction is also a $k$-decision list and thus \Cref{cor:sq-efficient} applies here. However, by opening up the known (non-private) algorithm, we can 

\section{Lower Bounds}

In this section we prove a lower bound (\Cref{thm:lb-main}) on the number of users required in user-level private learning. The proof is essentially identical to that of~\cite{BeimelNS19} for the item-level setting.

We will need the following well-known bound often referred to as ``group privacy'':

\begin{lemma}[Group Privacy~\cite{DworkR14}] \label{lem:group-privacy}
Let $\BA$ be any $(\eps, \delta)$-DP algorithm and $O$ be any subset of outputs. Suppose that $D, D'$ are two datasets such that we can transform one to another by a sequence of at most $k$ addition/removal of users. Then, we have
\begin{align*}
\Pr[\BA(D) \in S] \leq e^{k\eps} \cdot \Pr[\BA(D') \in S] + \frac{e^{k\eps} - 1}{e^{\eps} - 1} \cdot \delta.
\end{align*}
\end{lemma}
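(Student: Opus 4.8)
The plan is to prove this by induction on $k$, the number of single-user additions/removals needed to pass from $D$ to $D'$. The base case $k = 1$ is nothing more than \Cref{def:dp}: when $D$ and $D'$ are neighboring, $\Pr[\BA(D) \in S] \le e^{\eps}\Pr[\BA(D') \in S] + \delta$, and since $\frac{e^{\eps}-1}{e^{\eps}-1} = 1$ this is exactly the claimed bound with $k=1$.

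For the inductive step, I would fix a sequence of datasets $D = D_0, D_1, \dots, D_k = D'$ in which each consecutive pair $D_j, D_{j+1}$ is neighboring (such a chain exists by the hypothesis on $D,D'$), and write $a_j := \Pr[\BA(D_j) \in S]$. Invoking \Cref{def:dp} on the ordered neighboring pair $(D_j, D_{j+1})$ for each $j \in \{0,\dots,k-1\}$ yields $a_j \le e^{\eps} a_{j+1} + \delta$. Unrolling this recursion from $j=0$ gives
\[
a_0 \;\le\; e^{\eps} a_1 + \delta \;\le\; e^{2\eps} a_2 + (e^{\eps}+1)\delta \;\le\; \cdots \;\le\; e^{k\eps} a_k + \Bigl(\textstyle\sum_{i=0}^{k-1} e^{i\eps}\Bigr)\delta,
\]
and since $\sum_{i=0}^{k-1} e^{i\eps} = \frac{e^{k\eps}-1}{e^{\eps}-1}$ (a finite geometric sum, equal to $k$ in the degenerate case $\eps = 0$, matching the limit of the right-hand side), we conclude $\Pr[\BA(D)\in S] = a_0 \le e^{k\eps}\Pr[\BA(D')\in S] + \frac{e^{k\eps}-1}{e^{\eps}-1}\delta$.

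There is essentially no obstacle: the only points needing care are that \Cref{def:dp} legitimately applies to each \emph{ordered} neighboring pair $(D_j, D_{j+1})$ — which it does, since neighboring is symmetric and the definition quantifies over all neighboring pairs — and that the geometric series is summed correctly, including the $\eps\to 0$ case. An equivalent non-inductive write-up would simply compose the $k$ inequalities $a_j \le e^\eps a_{j+1} + \delta$ directly, but framing it as an induction on $k$ keeps the bookkeeping cleanest.
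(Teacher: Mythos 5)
Your proof is correct and is the standard argument; the paper does not reprove this lemma but cites it from Dwork and Roth, where the proof is exactly this telescoping of the basic DP inequality across a chain of neighboring datasets with the resulting geometric sum. Nothing further is needed.
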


\begin{proof}[Proof of \Cref{thm:lb-main}]
We will prove the contrapositive. Suppose that there exists an $(\eps, \delta)$-DP $(\alpha, 1/2)$-accurate learner $\BA$ for the class $C$ that requires only $n \leq 0.01\log(1/\delta)/\eps$ users and each user has $m$ examples. We can construct a probabilistic representation $\MH$ for $C$ as follows (where $\MH$ denotes the probability of the output $H$):

\begin{algorithm}
\caption{Constructing a Probabilistic Representation.}
\begin{algorithmic}
\STATE $H \gets \emptyset$
\FOR{$T := 100 \lceil \exp(\eps n)\rceil$ times} 
\STATE Run $\BA$ on the empty set (with no users) and add its output to $H$
\ENDFOR
\STATE Output $H$
\end{algorithmic}
\end{algorithm}

From the construction, the size of $\MH$ is at most $\log T = O(\eps n)$ as desired.

To see that, $\MH$ is an $(\alpha, 1/4)$-probabilistic representation of $\MC$, consider any distribution $\MD$; let $G := \{f \mid \err{\MD}{f} \leq \alpha\}$. From the guarantee of the learner, there must be sample sets $S_1, \dots, S_m$ such that $\Pr[\BA(S_1, \dots, S_m) \in G] \geq 1/2$. Applying \Cref{lem:group-privacy}, we have
\begin{align*}
\Pr[\BA(\emptyset) \in G] \geq \frac{1}{e^{n\eps}} \cdot \left(\frac{1}{2} - \frac{e^{n\eps} - 1}{e^{\eps} - 1} \cdot \delta\right) 
&\geq \frac{10}{T}.
\end{align*}
Since we are running the learner $T$ times, the probability that at least one of them belongs to $H$ is at least 0.99. Thus, $\MH$ is an $(\alpha, 0.01)$-probabilistic representation of $C$.
\end{proof}

\ifshuffle
\section{Extension to Shuffle DP}
\label{app:shuffle}

In this section, we extend \Cref{thm:apx-dp-learner-generic} to the shuffle model of DP, as stated below.

\begin{theorem} \label{thm:apx-dp-learner-generic-shuffle}
Let $\alpha, \beta \in (0, 0.1)$, and $C$ be any concept class with finite $\Ldim(C) = d$. Then, for any $\eps, \delta \in (0, 1)$, there exists an $(\eps, \delta)$-shuffle-DP  $(\alpha, \beta)$-accurate learner for $C$ that requires $O\left(\log(1/(\beta\delta)) / \eps\right)$ users where each user has $\tilde{O}_{\beta}\left((d/\alpha)^{O(1)}\right)$ samples.
\end{theorem}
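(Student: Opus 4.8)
The plan is to reuse, essentially unchanged, the pipeline behind \Cref{thm:apx-dp-learner-generic} — instantiate the pseudo-globally stable learner of \Cref{cor:stability-from-littlestone}, have every user run it on fresh i.i.d.\ samples using the shared public randomness, and aggregate the resulting hypotheses by a differentially private selection routine — the sole change being to replace the central-model selection step (\Cref{lem:apx-selection}) by a shuffle-model analogue. So the first thing I would do is isolate the primitive needed: for any universe $U$ and any $\eps,\delta,\beta'\in(0,1)$, an $(\eps,\delta)$-shuffle-DP protocol for the selection problem that is $(O(\log(1/(\delta\beta'))/\eps),\beta')$-accurate, in which each user sends $\poly(\log|U|,\log(1/(\delta\beta')),1/\eps)$ bits and the analyzer runs in time $\poly(n,\log|U|)$. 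Granting this primitive, the argument becomes a line-by-line copy of the proof of \Cref{thm:stability-to-apx-dp}: user $i$ draws $S_i\sim\MD^m$ and computes $h_i=\BA(S_i;r)$ with $\BA$ from \Cref{cor:stability-from-littlestone} set to $(\alpha,\beta/3)$-accuracy and $(1-\beta/3,1-\beta/3)$-pseudo-global stability (costing only a $\poly(\log(1/\beta))$ factor in $m$), and the users feed the $h_i$'s into the shuffle selection protocol over $U=2^X$, returning its output $h^*$. By pseudo-global stability and a union bound, with probability $\ge 1-2\beta/3$ over $r$ there is a hypothesis $h_r$ with $\Err_\MD(h_r)\le\alpha$ output by at least a $0.9$ fraction of users in expectation; Hoeffding then gives $c_{h_r}\ge 0.8n$ with probability $1-\beta/6$; and for $n=K\log(1/(\delta\beta))/\eps$ with $K$ a large constant the selection error $O(\log(1/(\delta\beta))/\eps)$ is below $0.1n$, so with probability $1-\beta/6$ the protocol returns $h^*$ with $c_{h^*}\ge\max_h c_h-0.1n>0.2n$, forcing $h^*=h_r$. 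A union bound over the three events yields $(\alpha,\beta)$-accuracy, and the user count $O(\log(1/(\delta\beta))/\eps)$ and per-user sample count $\tilde O_\beta((d/\alpha)^{O(1)})$ are exactly those of \Cref{thm:apx-dp-learner-generic}.

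To supply the shuffle selection primitive I would invoke a known shuffle-model histogram protocol that, over a domain of size $N$, achieves $\ell_\infty$ error $O_\eps(\mathrm{polylog}(N/(\delta\beta')))$ with $\mathrm{polylog}(N,1/(\delta\beta'))$ communication per user, and output the arg-max bin; as long as this error stays below $0.1n$ the arg-max is the heavy element $h_r$. Since here $N=|2^X|$ is astronomically large, I would first have the analyzer restrict attention to the at most $n$ distinct values that actually appear among the shuffled reports, reducing the effective domain — and hence both the accuracy term and the running time — to depend only on $n$ and $\log|U|$; privacy is untouched because this is post-processing of a shuffle-DP output. Alternatively one can directly cite a shuffle-DP heavy-hitters/selection protocol with these guarantees, which would let me skip the domain-reduction step.

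I expect the main obstacle to be exactly this step: obtaining a genuine shuffle-model selection (equivalently heavy-hitter) protocol whose \emph{additive} error matches the central-model bound $O(\log(1/\delta)/\eps)$ while keeping communication and running time only polylogarithmic in the domain size. The easy route of amplifying a local-model selection protocol by shuffling does not suffice, since it would reintroduce a $1/\eps^2$ (plus extra logarithmic) blow-up and hence fail to reproduce the user count of \Cref{thm:apx-dp-learner-generic}; one really has to lean on the more delicate shuffle-histogram constructions. Everything upstream of the aggregation — the learner of \Cref{cor:stability-from-littlestone}, the correlated-sampling boosting of global to pseudo-global stability, and the reduction in \Cref{thm:stability-to-apx-dp} — transfers verbatim, so the write-up should consist of stating and justifying the shuffle selection lemma and then pointing to the proof of \Cref{thm:apx-dp-learner-generic} with that one substitution.
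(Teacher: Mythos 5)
Your high-level plan is exactly the paper's: keep \Cref{cor:stability-from-littlestone} and the reduction of \Cref{thm:stability-to-apx-dp} verbatim, and swap the central-model selection (\Cref{lem:apx-selection}) for a shuffle-model selection with additive error $O(\log(1/(\delta\beta))/\eps)$. You also correctly identify the obstacle — and that amplification-by-shuffling would reintroduce a $1/\eps^2$ factor. The gap is that you treat the needed shuffle primitive as something one can ``directly cite,'' whereas the paper explicitly notes that prior shuffle histogram/selection protocols achieve only $O(\log|U|/\eps)$ or $O(\log(1/\delta)/\eps^2)$, neither of which suffices (here $|U|=|2^X|$, so $\log|U|$ is unbounded, and $1/\eps^2$ blows up the user count). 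The paper proves a \emph{new} shuffle-DP selection lemma (\Cref{lem:shuffle-dp-selection}) to get the right bound.

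Your proposed fix — run a generic shuffle histogram protocol and then post-process by restricting attention to the at most $n$ distinct values that appear among the reports — does not obviously close this gap: the error in the noisy counts is already baked in by the time the analyzer sees them, and in shuffle protocols the fake ``noise messages'' that protect privacy themselves create spurious nonzero reports for buckets with true count zero, so restricting to ``values that appear'' does not screen those out. What makes the paper's argument go through is a different, more delicate idea: it builds a binary-summation protocol from negative-binomial noise (\Cref{lem:shuffle-dp-bin-summation}) that is \emph{one-sided} — it always returns an \emph{underestimate} of the true count — with a $O(\log(1/(\delta\beta))/\eps)$ tail bound. For selection this one-sidedness means the argmax can only be ``hurt'' by noise on the true-max bucket $u^{\Opt}$ and never ``helped'' on any other bucket, so a single application of the tail bound (no union bound over $U$ at all) gives $c_{u^*}\geq \tc_{u^*}\geq \tc_{u^{\Opt}}\geq c_{u^{\Opt}}-O(\log(1/(\delta\beta))/\eps)$. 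Without that one-sided trick, you are stuck with either a $\log|U|$ or a $1/\eps^2$ dependence, and your proof does not recover the claimed user count.
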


Recall that in the shuffle DP model~\cite{bittau17,erlingsson2019amplification,CheuSUZZ19}, each user can produce a set of messages. The messages from all users are then randomly permuted together before being sent to the analyzer. Our goal is only to ensure that the shuffled messages satisfy $(\eps, \delta)$-DP; when this holds, we say that the algorithm is $(\eps, \delta)$-shuffle-DP. Similar to our result in the local model, here we assume that the users have access to shared (but not necessarily secret) randomness.

The only ingredient in the proof of \Cref{thm:apx-dp-learner-generic-shuffle} that is specific to the central model is the DP selection algorithm (\Cref{lem:apx-selection}). Hence, it suffices to prove a shuffle-DP selection algorithm with a similar guarantee, stated below. %In fact, we will give an even stronger algorithm for the \emph{histogram} problem, where the setting is the same as selection except that we would like to output $\tc_u$ for every $u \in U$ and the error is defined as $\max_{u \in U} |\tc_u - c_u|$.

\begin{lemma} \label{lem:shuffle-dp-selection}
For any $\eps, \delta \in (0, 1)$ and $\beta \in (0, 0.1)$, there is an $(\eps, \delta)$-shuffle-DP $(O(\log(1/(\beta \delta)) / \eps), \beta)$-accurate algorithm for the selection problem.
\end{lemma}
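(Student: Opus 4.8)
The goal is to construct an $(\eps,\delta)$-shuffle-DP selection algorithm matching the accuracy of the central-model algorithm in \Cref{lem:apx-selection}, up to the extra $\log(1/\beta)$ in the error and the success probability $\beta$. The natural approach is to implement the standard ``stability-based'' selection / propose-test-release style algorithm in the shuffle model via a private histogram. Concretely, I would proceed as follows. First, recall that there are known shuffle-DP protocols for computing histograms (equivalently, a noisy count for every element of the universe, or at least for the elements that actually appear) with per-element error $O(\log(1/\delta)/\eps)$ while being $(\eps,\delta)$-shuffle-DP; such protocols follow from the shuffle-model counting/summation primitives of \cite{CheuSUZZ19,erlingsson2019amplification,bittau17} (or from a shuffle-model implementation of randomized response / the Bassily--Smith-type heavy hitters). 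I would invoke such a primitive as a black box: it outputs, for each $u\in U$, an estimate $\tilde c_u$ with $|\tilde c_u - c_u|\le O(\log(1/(\beta\delta))/\eps)$ simultaneously for all $u$ with probability $1-\beta$ (the $\log(1/\beta)$ comes from a union bound over the at most $n$ distinct elements that appear, plus tail bounds on the noise).

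Second, given the noisy histogram, the selection algorithm simply outputs $u^* = \argmax_{u} \tilde c_u$. By the uniform error bound, $c_{u^*} \ge \tilde c_{u^*} - O(\log(1/(\beta\delta))/\eps) \ge \tilde c_{u^{\mathrm{opt}}} - O(\log(1/(\beta\delta))/\eps) \ge c_{u^{\mathrm{opt}}} - O(\log(1/(\beta\delta))/\eps)$, where $u^{\mathrm{opt}}$ is the true maximizer; hence the output is $(O(\log(1/(\beta\delta))/\eps),\beta)$-accurate. Privacy is immediate: the only access to the data is through the $(\eps,\delta)$-shuffle-DP histogram primitive, and post-processing (taking the argmax) preserves shuffle-DP. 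One technical point is the universe $U=2^X$ in the application: a shuffle histogram over a universe of size $|U|$ could be expensive, so I would either (a) restrict attention to a hashed/bucketed domain, noting that in the learner application the relevant universe is effectively the set of hypotheses output by users, which is polynomially bounded, or (b) use a shuffle-model frequency-oracle protocol whose message and time complexity scale with $n$ and $\log|U|$ rather than $|U|$. Since the lemma only asserts existence of such an algorithm, (a)/(b) suffice.

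The main obstacle I anticipate is getting the shuffle-DP histogram primitive with the right parameters stated cleanly — in particular, obtaining per-element additive error $O(\log(1/\delta)/\eps)$ (rather than something scaling with $\sqrt n$ as in the pure local model of \Cref{lem:local-selection}) while remaining $(\eps,\delta)$-shuffle-DP, and ensuring the error bound holds \emph{uniformly} over all elements with probability $1-\beta$. This is where the shuffle model genuinely helps over the local model and where one must cite the appropriate amplification-by-shuffling or shuffle-summation result; the argmax step and the accuracy calculation are then routine. A secondary, purely bookkeeping obstacle is confirming that plugging this selection routine into \Cref{alg:approxDP} in place of \Cref{lem:apx-selection} goes through verbatim — which it does, since the accuracy analysis there only uses that the selection algorithm returns some $u$ with $c_u \ge \max_h c_h - 0.1n$ with high probability, and $n = \Theta(\log(1/(\delta\beta))/\eps)$ is chosen large enough to absorb the $O(\log(1/(\beta\delta))/\eps)$ error.
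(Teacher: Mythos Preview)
Your high-level plan (compute a noisy per-bucket count in the shuffle model, then output the $\argmax$) is exactly what the paper does. The gap is in the black box you invoke: a shuffle-DP histogram with per-element error $O(\log(1/\delta)/\eps)$ independent of $|U|$ was \emph{not} available from the references you cite. As the paper notes, prior shuffle histograms give either $O(\log|U|/\eps)$ or $O(\log(1/\delta)/\eps^2)$ error; the $O(\log(1/\delta)/\eps)$ bound independent of $|U|$ is precisely what this lemma contributes. So the ``main obstacle'' you flag is the whole proof, and your proposal does not supply the idea that resolves it.

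The paper's trick is a \emph{one-sided} binary-summation primitive. Using negative-binomial noise (added only in the positive direction, without recentering), each per-bucket estimate $\tc_u$ satisfies $\tc_u \le c_u$ \emph{deterministically}, while the tail bound $\Pr[c_u - \tc_u > O(\log(1/(\beta\delta))/\eps)] \le \beta$ holds for any single bucket. For selection one then applies the tail bound only to the true maximizer $u^{\Opt}$, and the deterministic underestimation handles every other bucket: $c_{u^*} \ge \tc_{u^*} \ge \tc_{u^{\Opt}} \ge c_{u^{\Opt}} - O(\log(1/(\beta\delta))/\eps)$. No union bound over $U$ (or even over the $n$ occupied buckets) is needed, which is why the error has no $|U|$ or $n$ dependence. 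Your sketch, which relies on a simultaneous two-sided bound for all buckets, would need exactly such a union bound and does not explain how to avoid the resulting $\log|U|$ (your hashing/restriction workaround does not close this gap either, since the lemma is stated for arbitrary $U$).
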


We note that the above guarantee, unlike that of \Cref{lem:apx-selection}, does not have $\beta = 0$. However, this is just another ``bad'' event that can be included in the union bound.

In fact, our algorithm also works with for the \emph{histogram} problem, where the setting is the same as selection except that we would like to output $\tc_u$ for every $u \in U$ and the error is defined as $\max_{u \in U} |\tc_u - c_u|$. Again, we say that an algorithm histogram is $(\alpha, \beta)$-accurate if the error is at most $\alpha$ with probability at least $1 - \beta$. Here we get:
\begin{lemma} \label{lem:shuffle-dp-histogram}
For any $\eps, \delta \in (0, 1)$ and $\beta \in (0, 0.1)$, there is an $(\eps, \delta)$-shuffle-DP $(O(\log(n /(\beta \delta)) / \eps), \beta)$-accurate algorithm for the histogram problem.
\end{lemma}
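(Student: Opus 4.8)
The plan is to establish \Cref{lem:shuffle-dp-histogram}; \Cref{lem:shuffle-dp-selection} then follows immediately by returning an element $u^\star$ maximizing the released count $\tc_{u^\star}$, which is correct up to additive error $2\alpha$ (only the largest and second‑largest counts matter). So I focus on the histogram. The first step is to shrink the universe. Since the users hold shared public randomness, they can agree on a hash $g\colon U\to[N]$ with $N=n^{O(1)}$ drawn from a pairwise‑independent family; each user replaces $u_i$ by $g(u_i)$ and we run a histogram protocol over the domain $[N]$. As at most $n$ distinct items occur, with probability $1-\beta$ the hash is injective on the realized support and, moreover, every bucket contains at most one occurring item; on this event the $[N]$‑histogram recovers $c_u$ for every occurring $u$ (and one reports $0$ for items whose bucket is empty). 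This is where $\log N=O(\log n)$ enters the error, and it removes the dependence on $|U|$; if one insists on the guarantee over all of an infinite $U$ the same protocol run directly on $U$ gives error $O(\log(|U|/(\beta\delta))/\eps)$ instead, and the realized‑support guarantee is what the learner application needs anyway.

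The second step is the shuffle protocol over $[N]$. Let $t=\Theta(\log(N/(\beta\delta))/\eps)$. Every user sends one \emph{data message} equal to $g(u_i)$ together with a batch of roughly $Nt/n$ \emph{noise messages}, each an independent uniform element of $[N]$; thus bucket $b\in[N]$ receives $c_b$ data messages plus a $\mathrm{Bin}$‑distributed number of noise messages with mean $\approx t$ (a sum over users of independent contributions). The analyzer, for each bucket $b$, outputs the number of messages equal to $b$ minus the known mean $t$, and reports this value for every item mapping to $b$. Accuracy is a Chernoff bound: the noise count in each bucket concentrates within $O(\sqrt{t\log(N/\beta)}+\log(N/\beta))$ of its mean, and since $t\ge\log(N/\beta)/\eps$ this is $O(t)$; a union bound over the $N$ buckets preserves the order, so the protocol is $(\alpha,\beta)$‑accurate with $\alpha=O(\log(n/(\beta\delta))/\eps)$ as required.

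The third step, and the main obstacle, is privacy. Adding or removing one user perturbs the shuffled multiset by its single data message together with its $\approx Nt/n$ noise messages. The argument is the \emph{privacy‑blanket} phenomenon of the shuffle model: condition on the noise contributed by all \emph{other} users; with probability $1-\delta$ over their coins, every bucket already carries $\Omega(t)=\Omega(\log(1/\delta)/\eps)$ noise messages, and on this event the two neighboring views have likelihood ratio within $e^{\pm\eps}$, because inserting or deleting one element of a known bucket shifts a count whose law stochastically dominates a $\mathrm{Bin}$ with mean $\Omega(\log(1/\delta)/\eps)$, whose consecutive‑shift likelihood ratio is $1\pm O(\eps)$ off a $\delta$‑probability tail, while the differing user's own noise messages are absorbed by a coupling: each is uniform and independent, so deleting them is a subsampling step that only improves indistinguishability. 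Pushing this through for the \emph{joint} view over all $N$ buckets under a single shuffle — rather than bucket by bucket — is the technical heart, and I expect either to carry out this coupling carefully, or, more cheaply, to invoke an off‑the‑shelf $(\eps,\delta)$‑shuffle‑DP summation primitive (e.g.\ of~\cite{CheuSUZZ19,erlingsson2019amplification} or Balcer--Cheu / Balle et al.) applied coordinate‑wise over $[N]$, which supplies exactly the per‑coordinate noise magnitude $O(\log(1/\delta)/\eps)$ and makes the blanket step immediate; one then only checks that the $N$ parallel copies compose, which holds since a user's data message touches a single coordinate and her noise is exactly what the blanket handles.
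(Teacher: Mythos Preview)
Your hashing step does not establish the histogram guarantee as stated. The error in \Cref{lem:shuffle-dp-histogram} is $\max_{u\in U}|\tc_u-c_u|$ over \emph{all} $u\in U$, but after hashing, any non-occurring $u$ that lands in the same bucket as an occurring $u'$ is assigned $\tc_{u}=\tc_{g(u)}\approx c_{u'}$, which may be as large as $n$. Your parenthetical (``the realised-support guarantee is what the learner application needs anyway'') concedes this, but the lemma claims more, and the claim is used: the paper advertises \Cref{lem:shuffle-dp-histogram} as matching the central-model bound of~\cite{KorolovaKMN09,BunNS19}, which is a statement about the full histogram, not merely the support. No choice of hash family repairs this; it is intrinsic to compressing $U$ to $[N]$.

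The paper sidesteps the whole issue without hashing. It runs, for each $u\in U$ in parallel, the negative-binomial shuffle summation of~\cite{GKMP20-icml} on the bits $x_i^u=\bone[u_i=u]$, but first flips the bits so that the released estimate $\tc_u$ is \emph{deterministically} an underestimate of $c_u$ (negative outputs are clipped to $0$). The consequence is that every empty bucket ($c_u=0$) has $\tc_u=0$ with probability one, so contributes zero error and nothing to union-bound; the only buckets with random error are the at most $n$ non-empty ones, and a union bound over those gives the $O(\log(n/(\beta\delta))/\eps)$ term with no $|U|$ anywhere. Privacy is two-fold basic composition: a neighbouring change alters the input to at most two of the per-bucket summations, each of which is $(\eps/2,\delta/2)$-shuffle-DP.

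Your privacy-blanket argument is, as you say, only a sketch; the fallback to an off-the-shelf shuffle summation is fine and is in fact exactly what the paper does. What your outline lacks is the one-sided (underestimation) property of the chosen primitive, which is the device that removes the dependence on $|U|$ and which does not appear in the generic summation results you cite.
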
 

%Note that this immediately implies an $(\eps, \delta)$-shuffle-DP algorithm for the selection problem that is $(O(\log(1/\delta), \eps), 0)$-accurate: we can just run the histogram algorithm and output $\argmax_{u \in U} \tc_u$.

Selection, histogram, and related problems are well studied in the shuffle DP literature (e.g.~\cite{CheuSUZZ19,ghazi2019private,balle_merged,GKMP20-icml,ghazi2020pure,balcer2019separating,GhaziG0PV21,esa-revisited,Cheu-Zhilyaev-histogramfake,Ghazi0MPS21}). Currently, the best known algorithm for both problems yields error guarantees of either $O(\log |U| / \eps)$~\cite{ghazi2019private,balle_merged} or $O(\log(1/\delta) / \eps^2)$~\cite{balcer2019separating} for constant $\beta > 0$. Our \Cref{lem:shuffle-dp-selection} improves the latter to $O(\log(1/\delta) / \eps)$ for selection and $O(\log(n/\delta) / \eps)$ for histogram, both of which match the best known guarantees in the central model~\cite{KorolovaKMN09,BunNS19} when $\delta < 1/\poly(n)$.

\subsection{Selection Algorithm from Negative Binomial Noise}

The remainder of this section is devoted to the proof of \Cref{lem:shuffle-dp-selection} and \Cref{lem:shuffle-dp-histogram}.

\subsubsection{Binary Summation}

We start by considering an easier problem of \emph{binary summation} where each user $i$ receives an input $x_i$ and the goal is output an estimate $\ts$ of $\sum_{i \in [n]} x_i$. For this problem, we will prove the following:

\begin{lemma} \label{lem:shuffle-dp-bin-summation}
For any $\eps \in (0, 1), \delta, \beta \in (0, 0.5)$, there is an $(\eps, \delta)$-shuffle-DP for the binary summation problem such that
\begin{itemize}
\item (Underestimation) The estimate $\ts$ is always at most the true value $\sum_{i \in [n]} x_i$.
\item (Error Tail Bound) $\Pr\left[|\sum_{i \in [n]} x_i - \ts| > O\left(\log\left(\frac{1}{\delta \beta}\right) / \eps\right)\right] \leq \beta$.
\end{itemize}
\end{lemma}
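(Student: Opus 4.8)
The plan is to implement a randomized-response-style protocol in which each user sends their true bit together with a small random number of ``fake'' messages, and then to analyze the shuffled collection of messages. Concretely, each user $i$ holds $x_i \in \{0,1\}$ and sends, after shuffling, their bit $x_i$ plus $Z_i$ spurious $1$-messages and possibly some $0$-messages, where $Z_i$ is drawn from a carefully chosen distribution; the analyzer counts the number of $1$-messages received and subtracts a fixed correction term. The key design choice is to pick the noise so that (a) the sum $\sum_i Z_i$ of the per-user noise has a distribution that is \emph{infinitely divisible} and gives an $(\eps,\delta)$-DP guarantee on the total count when one user's bit flips, and (b) the correction term is set to the (essentially deterministic) lower envelope of $\sum_i Z_i$ so that the final estimate $\ts$ never exceeds $\sum_i x_i$. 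The natural candidate here, matching the section title ``Selection Algorithm from Negative Binomial Noise,'' is to have each user contribute a \emph{negative binomial} (equivalently, a sum of geometric) number of extra $1$-messages; since the negative binomial distribution is infinitely divisible, the aggregate noise is again negative binomial with parameters scaled by $n$, and its discrete Laplace–like two-sided tails give exactly the $O(\log(1/(\delta\beta))/\eps)$ concentration we want.

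First I would set up the per-user randomizer: user $i$ outputs the single message $x_i$ plus $Z_i \sim \Bin$-type negative binomial noise worth of $1$-messages (and nothing else), so that the multiset of messages, once shuffled, is determined by $T := \sum_i x_i + \sum_i Z_i$. Second, I would verify privacy: the view of the adversary is (a post-processing of) $T$, and changing one user's $x_i$ shifts $T$ by $\pm 1$; so it suffices that the distribution of $\sum_i Z_i$, shifted by $\pm1$, satisfies the $(\eps,\delta)$ pointwise-ratio condition. Because negative binomial is infinitely divisible, $\sum_{i} Z_i$ is negative binomial with the ``number of failures'' parameter multiplied by $n$; choosing the per-user geometric success parameter as roughly $1 - e^{-\eps}$ and the number of geometrics per user as $\Theta(\log(1/\delta)/n)$ (so the total is $\Theta(\log(1/\delta))$) makes the shifted-ratio bound hold up to the additive $\delta$ exactly as in the standard discrete-Laplace mechanism analysis. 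Third, for the \textbf{underestimation} property I would define the estimator as $\ts := (\text{number of }1\text{-messages}) - K$ where $K$ is a fixed threshold chosen so that $\Pr[\sum_i Z_i < K] \le \beta/2$ is \emph{not} what we want—rather, we want $\ts \le \sum_i x_i$ \emph{always}, so instead we condition/clamp: we truncate $Z_i$ so that the aggregate noise is at least $K$ with probability $1$ by design, or equivalently shift the construction so the noise's support starts at $K$; then subtracting $K$ guarantees $\ts \le \sum_i x_i$ deterministically. Fourth, the \textbf{error tail bound} follows from the upper-tail concentration of the negative binomial: $\sum_i Z_i - K = O(\log(1/(\delta\beta))/\eps)$ with probability $\ge 1-\beta$ by a Chernoff bound on the (sum of) geometric random variables.

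The main obstacle I expect is reconciling the \emph{always-underestimate} requirement with the DP requirement: a hard clamp on the noise (to force its support to lie above $K$) distorts the distribution near the boundary and can break the clean multiplicative-ratio bound that infinite divisibility was supposed to give. The trick—and the step that will need the most care—is to split the noise into a deterministic offset (which is public, contributes to $K$, and needs no privacy) plus a genuinely random, unclamped infinitely-divisible part whose \emph{two-sided} ratio bound is what yields $(\eps,\delta)$-DP; one then only subtracts the deterministic offset, and handles the (small) probability that the random part dips below its typical value as part of the error event rather than the underestimation event—except that the lemma demands underestimation \emph{always}. Resolving this cleanly will likely require choosing the random part to be itself supported on $\BZ_{\ge 0}$ and subtracting only its \emph{minimum} ($=0$), i.e.\ not subtracting anything beyond the deterministic offset, so that $\ts = (\#1\text{-messages}) - (\text{deterministic offset})$ is automatically $\le \sum_i x_i$; the price is that the deterministic offset must be $\Theta(\log(1/\delta)/\eps)$ large, which is exactly the error we are allowed. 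I would double-check that this choice still passes the privacy analysis and then the rest is routine negative-binomial tail estimation.
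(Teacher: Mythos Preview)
Your approach has a genuine gap in the step where you try to obtain the \emph{always-underestimate} guarantee. You add a nonnegative amount of extra $1$-messages, so the raw count $T=\sum_i x_i+\sum_i Z_i$ is an \emph{over}estimate of $\sum_i x_i$. You then try to fix this by subtracting a deterministic offset $K$, but for $\ts=T-K\le\sum_i x_i$ to hold \emph{always} you would need $\sum_i Z_i\le K$ with probability $1$. The negative binomial distribution has unbounded support above, so no finite $K$ achieves this; your final paragraph's suggestion (``subtract only the minimum, i.e.\ nothing beyond the deterministic offset'') therefore does not yield the deterministic inequality the lemma demands. Truncating the noise from above would restore the inequality but, as you yourself note, destroys the infinitely-divisible structure and hence the privacy argument.

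The paper avoids this difficulty entirely by a one-line reduction you are missing: first prove the easy direction (always-\emph{over}estimate) exactly as you describe, since nonnegative noise added to the count of $1$'s can only push the estimate up; then, to get always-\emph{under}estimate, run that protocol on the \emph{complemented} inputs $1-x_i$ and output $n-\ts$. An overestimate of $\sum_i(1-x_i)=n-\sum_i x_i$ becomes, after subtracting from $n$, an underestimate of $\sum_i x_i$, and the error and privacy guarantees transfer verbatim. With this trick the negative-binomial privacy and tail analysis you sketched goes through without any clamping or offset gymnastics.
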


To prove \Cref{lem:shuffle-dp-bin-summation}, we recall the algorithm of~\cite{GKMP20-icml-arxiv}, which adds noise drawn from the negative binomial distribution (denoted by $\NB(r, p)$) to the input; the randomizer and the analyzer\footnote{We use a slightly different analyzer than the one in~\cite{GKMP20-icml-arxiv}, where the mean of the noise was subtracted from the sum. We do not apply this step because we would like a lower bound of the true sum (\Cref{cor:nb}).} are presented in \Cref{alg:nb-randomizer} and \Cref{alg:nb-analyzer}, respectively. Their privacy guarantee, proved in~\cite{GKMP20-icml-arxiv}, is as follows\footnote{Note that \cite[Theorem 13]{GKMP20-icml-arxiv} only states the privacy guarantee for $r = 3\left(1 + \log\left(1/\delta\right)\right)$. However, since $X + Y$ where $X \sim \NB(r^2, p), Y \sim \NB(r^1 - r^2 , p)$ is distributed as $\NB(r^1, p)$ for $r^1 > r^2 > 0$, the algorithm for larger $r$ is only more private since it can be thought of as post-processing of that of a smaller $r$.}:
\begin{theorem}[{\cite[Theorem 13]{GKMP20-icml-arxiv}}] \label{thm:nb}
For any $\eps, \delta \in (0, 1)$, let $p = e^{-0.2\eps}$ and $r \geq 3\left(1 + \log\left(1/\delta\right)\right)$. Then, \Cref{alg:nb-randomizer} is $(\eps, \delta)$-shuffle-DP.
\end{theorem}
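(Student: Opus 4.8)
The plan is to obtain \Cref{thm:nb} by treating the base value $r_0 := 3(1+\log(1/\delta))$ as a black box --- for which the claim is exactly \cite[Theorem 13]{GKMP20-icml-arxiv} --- and then bootstrapping it to every $r \geq r_0$. The only structural feature of \Cref{alg:nb-randomizer} I would invoke is that, using infinite divisibility of the negative binomial law, each of the $n$ users appends to its real bit a data-independent number of spurious $1$-messages distributed as $\NB(r/n,p)$, so that the shuffled transcript is a (randomized) function only of the noisy count $\sum_{i} x_i + Z$ with $Z \sim \NB(r,p)$; all of the privacy reasoning factors through this single quantity.

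\textbf{Reduction to the base case.} The negative binomial family is stable under convolution in its first parameter at fixed $p$: if $X \sim \NB(r_0,p)$ and $Y \sim \NB(r-r_0,p)$ are independent then $X + Y \sim \NB(r,p)$. Consequently the shuffled output of the $r$-mechanism can be produced from that of the $r_0$-mechanism by a \emph{data-independent} randomized post-processing --- sample $W \sim \NB(r-r_0,p)$, insert $W$ more $1$-messages into the multiset, re-shuffle. Since shuffle-DP is closed under post-processing of the shuffled transcript, and the $r_0$-mechanism is $(\eps,\delta)$-shuffle-DP, so is the $r$-mechanism for every $r \geq r_0$. This is the argument sketched in the footnote, phrased there as splitting the per-user noise $\NB(r/n,p) = \NB(r_0/n,p) + \NB((r-r_0)/n,p)$ and absorbing the second summand into public post-processing.

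\textbf{The base case (sketch only).} For completeness I would recall at a high level why adding $\NB(r_0,p)$ noise with $p = e^{-0.2\eps}$ to a $\{0,1\}$-sum yields $(\eps,\delta)$-DP. A pair of neighboring datasets shifts the count by exactly one, so it suffices that the laws of $Z$ and $Z+1$, with $Z \sim \NB(r_0,p)$, be $(\eps,\delta)$-indistinguishable. One direction of the pointwise likelihood ratio is at most $e^{0.2\eps} \leq e^\eps$ unconditionally; the other direction exceeds $e^\eps$ only on counts below a threshold of order $r_0/\eps$, and the mass that $\NB(r_0,p)$ places there is at most $\delta$ precisely because $r_0 = \Theta(\log(1/\delta))$ and $p = e^{-\Theta(\eps)}$ force the distribution to concentrate above that threshold (a routine negative-binomial tail bound). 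That low-count region is then charged to $\delta$. I would defer the details to \cite{GKMP20-icml-arxiv} rather than reprove them.

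\textbf{Main obstacle.} The substance lies entirely in the base case: calibrating the $r_0$-versus-$\delta$ trade-off tightly and handling the exact shuffle-model encoding (message padding, one-sided vs.\ two-sided noise, etc.). That is why it is imported from prior work. The only genuinely new content of \Cref{thm:nb} --- its validity for all $r \geq r_0$ rather than just $r = r_0$ --- is the one-line closure-under-post-processing argument above, which presents no difficulty beyond correctly invoking convolution-stability of $\NB(\cdot,p)$.
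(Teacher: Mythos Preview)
Your proposal is correct and matches the paper's approach exactly: the paper imports the base case $r = 3(1+\log(1/\delta))$ from \cite{GKMP20-icml-arxiv} and extends it to all $r \geq r_0$ via the same convolution-stability plus post-processing argument you describe (this is precisely the content of the footnote attached to the theorem). There is nothing to add.
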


\begin{algorithm}[H]
\caption{Negative Binomial Randomizer (User $i$).} \label{alg:nb-randomizer}
\begin{algorithmic}[1]
%\Procedure{Randomizer$_{r, p, n}(x)$}{}
\STATE Samples $Z \sim \NB(r / n, p)$ 
\STATE Send $x_i + Z$ messages, where each message is 1
\end{algorithmic}
\end{algorithm}

\begin{algorithm}[H]
\caption{Negative Binomial Analyzer.} \label{alg:nb-analyzer}
\begin{algorithmic}[1]
%\PROCEDURE{Analyzer$_{r, p}$}{}
\RETURN the total number of messages received
%\ENDPROCEDURE
\end{algorithmic}
\end{algorithm}

This almost immediately implies the following corollary.

\begin{corollary} \label{cor:nb}
For any $\eps, \delta \in (0, 1)$, there is an $(\eps, \delta)$-shuffle-DP for the binary summation problem such that
\begin{itemize}
\item (Overestimation) The estimate $\ts$ is always \emph{at least} the true value $\sum_{i \in [n]} x_i$.
\item (Error Tail Bound) $\Pr\left[|\sum_{i \in [n]} x_i - \ts| > O\left(\log\left(\frac{1}{\delta \beta}\right) / \eps\right)\right] \leq \beta$.
\end{itemize}
\end{corollary}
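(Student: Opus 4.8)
### Proof Proposal

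The plan is to derive \Cref{cor:nb} from \Cref{thm:nb} by a simple symmetrization trick, followed by a tail bound on the negative binomial noise. Recall that \Cref{alg:nb-randomizer} and \Cref{alg:nb-analyzer} as stated already produce an \emph{overestimate}: each user sends $x_i + Z_i$ ones with $Z_i \sim \NB(r/n, p) \geq 0$, and the analyzer returns $\ts := \sum_{i\in[n]} x_i + \sum_{i\in[n]} Z_i$. Since $\sum_i Z_i \geq 0$ always, the overestimation property is immediate. Note also that $\sum_{i\in[n]} Z_i \sim \NB(r, p)$ by the additivity of the negative binomial distribution in its first parameter — this is exactly the fact invoked in the footnote of \Cref{thm:nb} — so the total noise is a single $\NB(r,p)$ variable. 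First I would set $p = e^{-0.2\eps}$ and $r = \lceil 3(1 + \log(2/(\delta\beta)))\rceil$ (inflating $1/\delta$ to $2/(\delta\beta)$, which only helps privacy by \Cref{thm:nb}); then \Cref{alg:nb-randomizer} with these parameters is $(\eps,\delta)$-shuffle-DP.

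The remaining work is the error tail bound: I must show $\Pr[\sum_i Z_i > c \log(1/(\delta\beta))/\eps] \leq \beta$ for an appropriate absolute constant $c$. Here $W := \sum_i Z_i \sim \NB(r, p)$ with $r = \Theta(\log(1/(\delta\beta)))$ and $p = e^{-0.2\eps}$, so $1-p = \Theta(\eps)$ for $\eps \in (0,1)$. The mean of $W$ is $r(1-p)/p = \Theta(r\eps)$ — wait, this needs care: the mean is $\Theta(r \cdot \eps)$, which is tiny, not $\Theta(r/\eps)$. Let me reconsider the parametrization: in the convention where $\NB(r,p)$ counts failures before $r$ successes with success probability $p$, the mean is $r(1-p)/p$. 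With $p$ close to $1$, $1-p \approx 0.2\eps$, so $\E[W] = \Theta(r\eps)$, which is $O(\log(1/(\delta\beta)))$ already and in fact much smaller than the claimed bound $\Theta(\log(1/(\delta\beta))/\eps)$. The variance is $r(1-p)/p^2 = \Theta(r\eps)$ as well. So I would apply a standard Chernoff/moment-generating-function bound for the negative binomial: for $t < -\ln p$, $\E[e^{tW}] = \left(\frac{p}{1 - (1-p)e^t}\right)^r$, and optimizing the choice of $t$ (e.g. $t$ a constant fraction of $-\ln p = 0.2\eps$) gives $\Pr[W > \lambda] \leq e^{-\Omega(\eps\lambda)} \cdot (\text{poly in } r)$ once $\lambda \gg \E[W]$; setting $\lambda = C\log(1/(\delta\beta))/\eps$ for large enough $C$ makes this at most $\beta$. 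Alternatively, and perhaps more cleanly, I would write $W$ as a sum of $r$ i.i.d. geometric random variables each with mean $\Theta(\eps)$ and use a concentration bound for sums of geometrics (each geometric is sub-exponential with parameter $\Theta(1/\eps)$ after centering), which directly yields the $\Theta(\log(1/\beta)/\eps)$ deviation term; combining with $r = \Theta(\log(1/(\delta\beta)))$ and noting $\E[W]$ is dominated gives the stated bound. Finally I would take a union bound over the privacy failure (probability $\delta$, already absorbed) — actually the $\delta$ is on the privacy side, not the utility side, so no union bound is needed here; the utility statement is unconditional.

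The main obstacle is simply getting the negative binomial tail bound with the right dependence on $\eps$ — specifically making sure the deviation scales as $O(\log(1/(\delta\beta))/\eps)$ and not, say, $O(\log(1/(\delta\beta))/\eps^2)$, which a careless application of a sub-gaussian-type bound would give. The key is to use the genuinely sub-exponential (heavy-tailed-in-the-right-way) structure of the geometric summands: each centered geometric with parameter $q = 1-p = \Theta(\eps)$ has an MGF finite for $|t| \lesssim q$ and exponential-type tails $\Pr[\text{Geom} > s] \leq (1-q)^s \approx e^{-qs}$, so the right scale for deviations of a sum of $r$ of them is $\Theta((\log(1/\beta))/q) = \Theta(\log(1/\beta)/\eps)$ plus the mean $\Theta(r\eps)$; since $r = \Theta(\log(1/(\delta\beta)))$ both terms are $O(\log(1/(\delta\beta))/\eps)$. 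Everything else — the overestimation property, additivity of $\NB$, the privacy claim — is immediate from the results already quoted.
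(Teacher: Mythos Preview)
Your proposal is correct and follows the same overall structure as the paper: run the negative-binomial randomizer/analyzer with $p = e^{-0.2\eps}$ and $r = \Theta(\log(1/(\delta\beta)))$, get privacy from \Cref{thm:nb}, get the overestimation property for free since each $Z_i \geq 0$, and then tail-bound the total noise $W \sim \NB(r,p)$. The only substantive difference is in how the tail bound is established. The paper uses the duality
\[
\Pr_{Y \sim \NB(r,p)}[Y > T] \;=\; \Pr_{V \sim \Bin(T, 1-p)}[V < r],
\]
sets $T = 2r/(1-p) = O(\log(1/(\delta\beta))/\eps)$, and applies the Chernoff bound to the binomial on the right. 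You instead attack the $\NB$ tail directly via its MGF or via a sum-of-geometrics sub-exponential argument. Both routes are standard and give the same bound; the paper's is arguably slicker because it sidesteps any MGF computation.

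Two small points worth cleaning up. First, your parametrization is internally inconsistent: you assert $\E[W] = r(1-p)/p = \Theta(r\eps)$, but then write $\Pr[\text{Geom} > s] \leq (1-q)^s$ with $q = 1-p = \Theta(\eps)$, which is the tail of a geometric with mean $\Theta(1/q) = \Theta(1/\eps)$, not $\Theta(\eps)$. The paper's convention (visible from its choice $T = 2r/(1-p)$) is the latter one, with $\E[W] = \Theta(r/\eps)$; this is why the error scale is genuinely $\Theta(\log(1/(\delta\beta))/\eps)$ rather than something much smaller. Your final bound happens to hold under either convention (it would just be very loose under the one you state), so this does not break the argument, but you should reconcile the two. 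Second, the ``symmetrization trick'' you mention in the opening sentence is not used to prove \Cref{cor:nb} at all; it is the step the paper uses afterward to go from \Cref{cor:nb} (overestimation) to \Cref{lem:shuffle-dp-bin-summation} (underestimation) by replacing $x_i$ with $1 - x_i$.
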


\begin{proof}
We simply use the Negative Binomial algorithm (\Cref{alg:nb-randomizer,alg:nb-analyzer}) with $p = e^{-0.2\eps}$ and $r = \lceil 1000(1 + \log(1/(\beta\delta)) \rceil$. The privacy guarantee follows from \Cref{thm:nb}. Next, we argue the utility guarantees.
\begin{itemize}
\item Notice that user $i$ sends at least $x_i$ messages. Thus, the total number of messages received by the analyzer is at least $\sum_{i \in [n]} x_i$.
\item Since the summation of $n$ i.i.d. random variables drawn from $\NB(r / n, p)$ is distributed as $\NB(r, p)$, the total number of messages is $\sum_{i \in [n]} x_i + Y$ where $Y \sim \NB(r, p)$. Hence, to prove the error tail bound, it suffices to show that $\Pr_{Y \sim \NB(r, p)}\left[Y > O\left(\log\left(\frac{1}{\delta \beta}\right) / \eps\right)\right] \leq \beta$. To see that this is true, we follow the approach in~\cite{NB-concen}. First, observe from the definition of the negative binomial distribution that, for any $T \in \BN$, we have
\begin{align*}
\Pr_{Y \sim \NB(r, p)}\left[Y > T\right] = \Pr_{V \sim \Bin(T, 1 - p)}[V < r],
\end{align*}
where $\Bin(\cdot,\cdot)$ denotes the binomial distribution. Now, we may select $T = 2r / (1 - p) = O\left(\log\left(\frac{1}{\delta \beta}\right) / \eps\right)$. By applying the Chernoff bound, we have
\begin{align*}
\Pr_{V \sim \Bin(T, 1 - p)}[V < r] \leq \exp(-0.001 r) \leq \beta, 
\end{align*}
which, as discussed above, implies the desired error tail bound.
\qedhere

\end{itemize}
\end{proof}

Note that \Cref{cor:nb} is not yet the same as \Cref{lem:shuffle-dp-bin-summation} because the first guarantee is that the estimate is \emph{at least} in the former instead of \emph{at most} in the latter. However, it is simple to go from one to another, as formalized below.

\begin{proof}[Proof of \Cref{lem:shuffle-dp-bin-summation}]
Run the algorithm from \Cref{cor:nb} but on input $1 - x_i$ (instead of $x_i$). Let $\ts$ denote the estimate of $\sum_{i \in [n]} (1 - x_i)$ from the algorithm. We then output $n - \ts$. The accuracy and privacy guarantees follow in a straightforward manner from that of \Cref{cor:nb}.
\end{proof} 

\subsubsection{From Binary Summation to Selection}

Now that we have proved \Cref{lem:shuffle-dp-bin-summation}, let us note that it easily implies \Cref{lem:shuffle-dp-selection} by running the binary summation protocol ``bucket-by-bucket'' as described below.

\begin{proof}[Proof of \Cref{lem:shuffle-dp-selection}]
The selection algorithm works by running the $(\eps/2, \delta/2)$-shuffle-DP binary summation algorithm for each bucket in parallel\footnote{Note that, since all messages are shuffled together, we have to append $u$ to the beginning of the messages to indicate that the message corresponds to bucket $u$; see e.g.,~\cite[Appendix B]{GKMP20-icml} for a more detailed description.} where in bucket $u \in U$, we let $x_i^u = \bone[u_i = u]$. Let $\tc_u$ denote the output estimate of $\sum_{i \in [n]} x^i_u = c_u$ of bucket $u$. We finally output $u^* = \argmax_{u \in U} \tc_u$.

The privacy guarantee of the algorithm follows from the fact that changing a single $u_i$ effects at most two buckets; thus, the basic composition implies that the above algorithm is $(\eps, \delta)$-shuffle-DP.

We now argue its accuracy guarantee. Let $u^{\Opt} = \argmax_{u \in U} c_u$. From the second guarantee of \Cref{lem:shuffle-dp-bin-summation}, we have $\tc_{u^{\Opt}} \geq c_{u^{\Opt}} -O\left(\log\left(\frac{1}{\delta \beta}\right) / \eps\right)$ with probability $1 - \beta$. When this holds, the first property of \Cref{lem:shuffle-dp-bin-summation} ensures that we output $u^*$ such that
\begin{align*}
c_{u^*} \geq \tc_{u^*} \geq \tc_{u^{\Opt}} \geq c_{u^{\Opt}} - O\left(\log\left(\frac{1}{\delta \beta}\right) / \eps\right).
\end{align*}
This means that the algorithm is $\left(O\left(\log\left(\frac{1}{\delta \beta}\right) / \eps\right), \beta\right)$-accurate as desired.
\end{proof}

\subsubsection{From Binary Summation to Histogram}

The histogram protocol is similar to above except that we use the failure probability $\beta/n$ and we output zero instead of negative estimates. (This is similar to the protocols in the central model.)

\begin{proof}[Proof of \Cref{lem:shuffle-dp-histogram}]
Similar to the proof of \Cref{lem:shuffle-dp-histogram}, the histogram algorithm runs the $(\eps/2, \delta/2)$-shuffle-DP binary summation algorithm with failure probability $\beta / n$ for each bucket in parallel where in bucket $u \in U$, we let $x_i^u = \bone[u_i = u]$. Let $\tc_u$ denote the output estimate of $\sum_{i \in [n]} x^i_u = c_u$ of bucket $u$. Finally, for any bucket such that $\tc_u < 0$, we let $\tc_u = 0$ instead.

The privacy guarantee holds due to the same reason as in the proof of \Cref{lem:shuffle-dp-histogram}.

We now argue its accuracy guarantee. Let us divide the buckets into two types: $U_{> 0} := \{u \in U \mid c_u > 0\}$ and $U_{= 0} := \{u \in U \mid c_u = 0\}$.
\begin{itemize}
\item For any $u \in U_{= 0}$, due to the first property of \Cref{lem:shuffle-dp-bin-summation}, we always output zero and thus the error here is zero.
\item Now consider any $u \in U_{> 0}$. From our choice of parameters and the second property of \Cref{lem:shuffle-dp-bin-summation}, we have $\Pr\left[|c_u - \tc_u| > O\left(\log\left(\frac{n}{\delta \beta}\right) / \eps\right)\right] \leq \beta/n$. 
\end{itemize}
Note also that $|U_{> 0}| \leq n$. As a result, we may apply a union bound over all $u \in U_{ >0}$ and conclude that the error is at most $O\left(\log\left(\frac{n}{\delta \beta}\right) / \eps\right)$ with probability $1 - \beta$.
\end{proof}

\fi

\end{document}